\newcommand{\bA}{\bold{A}}
\renewcommand{\P}{\mathbb{P}}
\newcommand{\Z}{\mathrm{Z}}
\newcommand{\R}{\mathbb{R}}
\newcommand{\E}{ \mathbb{E}}
\newtheorem{theorem}{Theorem}[section] 
\newtheorem{lemma}[theorem]{Lemma}     
\newtheorem{proposition}[theorem]{Proposition} 
\newtheorem{remark}[theorem]{Remark} 
\title{Mallows Model with Learned Distance Metrics: Sampling and Maximum Likelihood Estimation}
\author{%
  Yeganeh Alimohammadi\\
  University of Southern California\\
  \texttt{yalimoha@usc.edu}\\
  \And
  Kiana Asgari\\
  Stanford University\\
  \texttt{asgkiana@stanford.edu}\\
}
\begin{document}

\maketitle

\begin{abstract}

 \textit{Mallows model} is a widely-used probabilistic framework for learning from ranking data, with applications ranging from recommendation systems and voting to aligning language models with human preferences~\cite{chen2024mallows, kleinberg2021algorithmic, rafailov2024direct}. Under this model, observed rankings are noisy perturbations of a central ranking $\sigma$, with likelihood decaying exponentially in distance from $\sigma$, i.e,
$\P (\pi) \propto \exp\big(-\beta \cdot d(\pi, \sigma)\big),$
where $\beta > 0$ controls dispersion and $d$ is a distance function.

Existing methods mainly focus on fixed distances (such as Kendall's $\tau$ distance), with no principled approach to learning the distance metric directly from data. In practice, however, rankings naturally vary by context; for instance, in some sports we regularly see long-range swaps (a low-rank team beating a high-rank one), while in
others such events are rare. 
Motivated by this, we propose a generalization of Mallows model  that learns the distance metric directly from data. Specifically, we focus on $L_\alpha$ distances:
$d_\alpha(\pi,\sigma):=\sum_{i=1} |\pi(i)-\sigma(i)|^\alpha$.

For any $\alpha\geq 1$ and $\beta>0$, we develop a Fully Polynomial-Time Approximation Scheme (FPTAS) to efficiently generate samples that are $\epsilon$- close  (in total variation distance) to the true distribution. Even in the special cases of  $L_1$ and $L_2$, this generalizes prior results that required vanishing dispersion ($\beta\to0$).
Using this sampling algorithm, we propose an efficient Maximum Likelihood Estimation (MLE) algorithm that jointly estimates the central ranking, the dispersion parameter, and the optimal distance metric. We prove strong consistency results for our estimators, and we validate our approach empirically using datasets from sports rankings.

\end{abstract}

\section{Introduction}

Ranking tasks arise in many applications, from online recommendation systems, sports competitions, and voting, to more recently, the alignment of Large Language Models (LLMs) with human preferences \cite{chen2024mallows, kleinberg2021algorithmic, rafailov2024direct}. 
A fundamental challenge is common to all these scenarios: given multiple observed rankings, how do we aggregate them into a single central ranking, \emph{learn} how rankings vary around this center, and \emph{generate} new rankings reflecting observed patterns?

Probabilistic models offer natural tools to address these questions.
Among them, the \textit{Mallows model}\cite{mallows1957non} stands out due to its conceptual simplicity and interpretability. 
Under this model, observed rankings are viewed as noisy perturbations of a central permutation\footnote{Throughout this paper, we use the terms \emph{ranking} and \emph{permutation} interchangeably.
} $\sigma$, with their likelihood decaying exponentially according to a chosen distance function $d$: 
\[
\P(\pi) \;:=\; \frac{\exp\!\bigl(-\beta\, d(\pi,\sigma)\bigr)}{Z},
\]
where \(\beta>0\) controls the dispersion around $\sigma$,  and \(Z\) is the normalizing constant (often referred to as the partition function). 
Due to its interpretability and desirable structural properties, Mallows model has found widespread adoption in recommendation systems, voting predictions, retail assortment optimization, and, more recently, post-training of large language models \cite{chen2024mallows,desir2016assortment,desir2021mallows,lebanon2007non}.

Initially, Mallows~\cite{mallows1957non} proposed this model specifically with Kendall's tau distance, and it was subsequently generalized by Diaconis~\cite{diaconis1988group} to any right-invariant distance metric. 
  In this paper, we focus on the \textit{$L_\alpha$ Mallows model}, defined by the family of $L_\alpha$ distances:
$d_\alpha(\pi, \sigma) := \sum_{i=1}^n |\pi(i) - \sigma(i)|^\alpha.$
Intuitively, the parameter \( \alpha \) controls
the penalty for \emph{long-range} swaps (e.g., changes from rank 1 to rank 100) versus \emph{local} swaps, with larger values of \( \alpha \) penalizing long-range swaps more strongly.

Our contributions focus on two complementary aspects:\\
       \textbf{Parameter Estimation.}  We present an efficient Maximum Likelihood Estimation (MLE) algorithm that jointly estimates the central ranking \(\sigma\), the dispersion parameter \(\beta\), and the distance parameter \(\alpha\).  Unlike previous studies on the Mallows model --which assume a fixed distance metric-- our estimator learns the distance metric jointly with other model parameters, marking the first such result in the literature.  Further, we prove strong consistency of our estimators under general conditions  $\alpha > 0, \beta > 0$  (see Theorem~\ref{thm:main_mle}).        
       \\
       \textbf{Efficient Sampling.} We develop a Fully Polynomial-Time Approximation Scheme (FPTAS) for sampling permutations within \(\epsilon\)-total variation distance of the true \(L_\alpha\)-Mallows distribution, valid for any \(\alpha \geq 1\) and \(\beta > 0\) (Theorem~\ref{thm:main sampling}).
       Our result generalizes previous sampling methods limited to special cases ($L_1$, $L_2$) that required vanishing dispersion (\(\beta \to 0\)) \cite{zhong2021mallows,mukherjee2016estimation}.

We validate our algorithms empirically using sports ranking datasets from American college football and basketball (Section~\ref{sec:empiric}, codebase \cite{asgari2025code}). Our results reveal substantial differences between these two sports: basketball rankings yield a larger $\alpha \approx 1.09$,  reflecting a rapid decay in the probability of large inversions (e.g., low-ranked teams beating top-ranked teams). In contrast, football rankings yield a smaller $\alpha \approx 0.44$, indicating a relatively higher likelihood of large inversions. Such domain-specific variations highlight the importance of learning distance metrics from data rather than fixing them a priori.

\textbf{Related Work}
Prior works on sampling and learning Mallows model assume a fixed distance function, with Kendall's tau distance (counting inversions between permutations) \cite{kendall1938new} being a particularly common choice due to its closed-form expression \cite{doignon2004repeated,fligner1986distance, irurozki2014sampling, lu2014effective,rubinstein2017sorting}. However, it implicitly assumes equal probabilities for all inversions.
For example,  in a web search ranking, swapping the top two results has far more impact on user experience than swapping the 91st and 92nd, yet under Kendall’s tau distance the inversions appear with equal probabilities.
This uniform penalty is misaligned with real-world tasks where the location of ranking errors matters—such as in recommendation systems, sports ranking, and preference learning tasks.

Alternative metrics like Spearman’s footrule ($L_1$) and rank correlation ($L_2$) or more generally the $L_\alpha$ model used in this paper,  aim to mitigate the limitations of Kendall’s tau by assigning smaller penalties to long-range inversions.
Exact sampling under these metrics is known to be NP-hard \cite{bartholdi1989voting}, leading to the development of approximate methods such as Markov chain Monte Carlo (MCMC) algorithms \cite{zhong2021mallows,vitelli2018probabilistic}. While these methods represent significant progress, prior to this work, their computational efficiency was established only under the assumption of vanishing dispersion $(\beta\to0)$.

To address the limitations of Kendall’s tau in another direction, generalizations of the Mallows model have been proposed~\cite{braverman2009sorting,liu2018efficiently,meilua2022recursive}, introducing inversion penalties through graphical models or mixtures of multiple Mallows distributions. While these methods increase expressiveness, they often do so at the cost of substantial model complexity and reduced interpretability. In contrast, the $L_\alpha$-Mallows framework we study offers a principled and tunable way to learn the shape of the distribution through a single metric parameter. Although such extensions are beyond the scope of this work, the $L_\alpha$-Mallows model could similarly be adapted to mixture or hierarchical frameworks to further increase modeling flexibility.

From the perspective of estimation,  extensive research has examined consistency and convergence of Mallows model parameters \cite{vitelli2018probabilistic,awasthi2014learning,feng2022mallows, mao2022learning, busa2019optimal}. In cases where the partition function is intractable (e.g., for $L_1$ and $L_2$), and the maximum likelihood estimators cannot be computed in closed form, estimation methods often rely on Bayesian approaches \cite{vitelli2018probabilistic} or Monte Carlo Maximum Likelihood Estimation (MCMLE) \cite{geyer1992constrained}, which approximates the likelihood using sampling. Among more recent works,  Tang \cite{tang2019mallows} analyzed  MLE under fixed metrics like Kendall's tau and $L_2$, identifying the central ranking and potential upward biased
estimator for the dispersion parameter. Mukherjee and Tagami \cite{mukherjee2016estimation,mukherjee2023inference} provided consistency results for generalized Mallows-type exponential families, though assuming a known central ranking.  Extending this
line of research, our work jointly estimates all model parameters, and establishes strong consistency guarantees for these
estimators.

Alternative probabilistic frameworks include pairwise-comparison models such as Bradley–Terry \cite{bradley1976rank} and its generalization Plackett–Luce \cite{luce1959individual,plackett1975analysis}, which assign latent scores to items and generate rankings sequentially. Their computational simplicity makes them popular in practice \cite{ranking2010label,wainer2023bayesian}, but unlike the Mallows model, the number of parameters grows linearly with the number of items. We include the Plackett–Luce model as a high-parameter benchmark in our empirical evaluation and consistently observe improved performance from our approach (see Section \ref{sec:empiric}).

\section{Parameter Estimation  via Maximum Likelihood}\label{sec:mle}

Our first goal is to estimate the parameters of the $L_\alpha$-Mallow's model, including the central ranking $\sigma_0$, the dispersion parameter $\beta_0$, and the distance function $\alpha_0$.  Suppose we have observed  $m$ rankings   \(\{\pi^{(1)}, \dots, \pi^{(m)}\}\)  i each independently drawn from a Mallows distribution with unknown parameters $(\alpha_0,\beta_0,\sigma_0)$. To estimate these parameters, we use Maximum Likelihood Estimation (MLE),  choosing values that maximize the average log-likelihood of the observed data:
\[
\mathcal{L}_m(\alpha,\beta,\sigma) = \frac{1}{m}\sum_{l=1}^m \log \P_{\alpha,\beta, \sigma}(\pi^{(l)}) = -\frac{\beta}{m} \sum_{l=1}^m d_\alpha(\pi^{(l)}, \sigma) - \log Z_n(\beta,\alpha).
\]
where the probability of observing a ranking $\pi$ is given by
\[
\P_{\alpha,\beta,\sigma}(\pi) = \frac{e^{-\beta d_\alpha(\pi,\sigma)}}{Z_n(\beta,\alpha)},
\quad\text{with}\quad Z_n(\beta,\alpha) = \sum_{\pi \in {S_n}} e^{-\beta d_\alpha(\pi, \sigma)},
\]
and \({S_n}\) is the set of all permutations over $n$ items. \\
With these definitions in place, we now describe a two-step estimation of the MLE (Algorithm~\ref{alg:onepass_mle}).



\subsection{Step 1: Estimating the central ranking}

First, observe that the partition function \(Z_n(\beta, \alpha)\) is invariant to the choice of the central ranking \(\sigma\) (Proposition~\ref{lem:Zn-indep-sigma} in Appendix~\ref{sec: proof-matching}). Thus, minimizing the negative log-likelihood over $\sigma$ effectively reduces to solving a simpler optimization problem:
$\hat{\sigma}_m \in \arg\min_{\sigma \in {S_n}} \sum_{l=1}^m d_\alpha(\pi^{(l)}, \sigma).$

Since the value of $\alpha_0$ is unknown, we instead solve for $L_1$ distance:
\begin{equation}\label{eq:opt-sigma}
    \hat{\sigma}_m \in \arg\min_{\sigma} \sum_{l=1}^m d_1(\pi^{(l)},\sigma).
\end{equation}
This simplification is theoretically justified by the following lemma, which guarantees consistency of the estimated central ranking regardless of the choice of the distance function.

\begin{lemma}\label{lm:unqiue minimizer} Let \(\{\pi^{(1)}, \dots, \pi^{(m)}\}\) be $m$ observed
rankings, which are assumed to be sampled i.i.d from Mallow's model with parameters $(\alpha_0,\beta_0,\sigma_0)$.
Fix any \({\tilde \alpha} > 0\), and let $\hat{\sigma}_m \in \arg\min_{\sigma \in {S_n}} \sum_{l=1}^m d_{\tilde \alpha}(\pi^{(l)}, \sigma).$ Then as $m\to\infty$ the central ranking estimator converges almost surely $\hat{\sigma}_m\overset{\P}{\to}\sigma_0$. Further, \(\sigma_0\) is the unique minimizer of the expected distance, i.e.,
\[
\{\sigma_0\} = \arg\min_{\tilde \sigma \in {S_n}} \Big(\mathbb{E}[d_{\tilde{\alpha}}(\Pi, \tilde\sigma)]\Big),
\]
where $\Pi$ is a random permutation sampled from $\P_{\alpha_0,\beta_0,\sigma_0}$.
\end{lemma}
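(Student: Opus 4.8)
The plan is to split the statement into two parts: first the identifiability claim that $\sigma_0$ is the unique minimizer of $\tilde\sigma\mapsto\E[d_{\tilde\alpha}(\Pi,\tilde\sigma)]$, and then the almost-sure convergence, which I would deduce from identifiability by a standard $M$-estimation argument. The convergence part is the easy half: since ${S_n}$ is finite, the strong law of large numbers gives, for each fixed $\sigma$, that $\tfrac1m\sum_{l}d_{\tilde\alpha}(\pi^{(l)},\sigma)\to\E[d_{\tilde\alpha}(\Pi,\sigma)]$ almost surely, and a finite union of null sets upgrades this to uniform convergence over all $\sigma\in{S_n}$. Once the limiting objective has a unique minimizer $\sigma_0$ with a strictly positive gap $\delta$ to the second-best permutation, uniform convergence within $\delta/2$ forces $\hat\sigma_m=\sigma_0$ for all large $m$, giving $\hat\sigma_m\pto\sigma_0$ (indeed almost surely).

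For the identifiability claim I would first use right-invariance of $d_{\tilde\alpha}$ to assume without loss of generality that $\sigma_0=\mathrm{id}$, so that $\Pi\sim\P_{\alpha_0,\beta_0,\mathrm{id}}$ and the objective becomes the linear assignment cost $F(\tilde\sigma)=\sum_{i}g_i(\tilde\sigma(i))$ with $g_i(v):=\E\big[|\Pi(i)-v|^{\tilde\alpha}\big]$. I would then argue that $\mathrm{id}$ is the unique optimal assignment by an exchange argument: it suffices to show that whenever $\tilde\sigma$ contains an adjacent inversion (positions $i<j$ with $\tilde\sigma(i)>\tilde\sigma(j)$), swapping the two assigned values strictly decreases $F$. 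Since the identity is the only inversion-free permutation and any permutation can be sorted to it by a sequence of such un-inverting swaps, this both identifies $\mathrm{id}$ as a minimizer and forces uniqueness.

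Writing $a=\tilde\sigma(i)>b=\tilde\sigma(j)$, the swap decreases $F$ by $\E[\phi(\Pi(i))-\phi(\Pi(j))]$ with $\phi(t):=|t-a|^{\tilde\alpha}-|t-b|^{\tilde\alpha}$. I would reduce the desired strict decrease to two ingredients: monotonicity of the value function $\phi$ (it is non-increasing, which is immediate for $\tilde\alpha\ge1$, in particular for the $\tilde\alpha=1$ objective used by the algorithm), and the stochastic ordering of the Mallows marginals, $\Pi(i)\preceq_{\mathrm{st}}\Pi(j)$ for $i<j$. Given these, $\E[\phi(\Pi(i))]\ge\E[\phi(\Pi(j))]$, and strictness follows because $\beta_0>0$ makes the ordering strict on the relevant values.

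The main obstacle is establishing the marginal stochastic ordering $\Pi(i)\preceq_{\mathrm{st}}\Pi(j)$ for an arbitrary true exponent $\alpha_0>0$. The naive route, comparing the two permutations that differ by the transposition of positions $i$ and $j$ and arguing that the sorted one always has smaller distance to $\mathrm{id}$, relies on a pointwise rearrangement inequality for $t\mapsto|t|^{\alpha_0}$ that genuinely fails when $\alpha_0<1$ and the two involved values lie far from $i,j$; so a term-by-term sign argument is not available in exactly the regime ($\alpha_0\approx0.44$) that the experiments care about. I would instead prove the ordering at the level of the whole measure: write $\P(\Pi(j)\ge k)-\P(\Pi(i)\ge k)=\sum_{\pi:\,\pi(i)\ge k}\big[\mu(\pi\circ(i\,j))-\mu(\pi)\big]$ via the position-transposition involution and show the aggregated sum is nonnegative through a global monotone coupling of $\mu$ (or an induction reducing to adjacent positions), rather than controlling each summand. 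For general $\tilde\alpha<1$, where $\phi$ need not be monotone either, the same coupling viewpoint would be used to control $\E[\phi(\Pi(i))-\phi(\Pi(j))]$ directly. I expect this positive-association step for $L_{\alpha_0}$ Mallows with $\alpha_0<1$ to be the technically delicate core of the proof.
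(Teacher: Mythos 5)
Your architecture matches the paper's own proof almost exactly: for the convergence half, the paper uses the same SLLN-over-finite-$S_n$ argument, uniform convergence, and a positive gap $\delta$ to the second-best permutation; for identifiability, the paper likewise reduces to correcting one inversion at a time (their Steps 1--2) and likewise reduces the sign of the resulting change in the objective to a marginal stochastic-dominance statement, $\P(\Pi(k)<j)\ge\P(\Pi(i)<j)$ for $k<i$, which it proves by the same position-transposition involution you describe. So the skeleton is not a different route; the difference lies entirely in how the dominance step is handled.

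And that is where your proposal has a genuine gap: the step you yourself call ``the technically delicate core'' is never carried out. Establishing $\Pi(i)\preceq_{\mathrm{st}}\Pi(j)$ for $\alpha_0<1$ (and, for $\tilde\alpha<1$, controlling $\E[\phi(\Pi(i))-\phi(\Pi(j))]$ when $\phi$ is not monotone) via ``a global monotone coupling of $\mu$ (or an induction reducing to adjacent positions)'' is a plan, not an argument: no coupling is constructed, no induction is set up, and positive association of $L_{\alpha_0}$-Mallows with a concave exponent is not an off-the-shelf fact you can cite. Since the lemma is claimed for all $\tilde\alpha>0,\ \alpha_0>0$ --- and $\alpha_0\approx 0.44$ is precisely what the paper's football experiments produce --- what you have written only proves the statement for $\tilde\alpha\ge 1$ and $\alpha_0\ge 1$. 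That said, your diagnosis deserves credit, because it applies verbatim to the paper's own proof: the pointwise rearrangement inequality really does fail for exponents below one (with positions $(1,2)$ and values $(10,11)$ at $\alpha_0=\tfrac12$, the sorted assignment costs $\sqrt{9}+\sqrt{9}=6$ while the crossed one costs $\sqrt{10}+\sqrt{8}\approx 5.99$), and the paper nonetheless asserts both that $\Delta_j\ge 0$ outside $\{s+1,\dots,t\}$ and that its involution strictly increases $d_{\alpha_0}$ --- claims that are false pointwise once the exponent drops below $1$. So you have correctly located a soft spot that the paper glosses over; but your proposal, as written, does not repair it either.
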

To see why Lemma~\ref{lm:unqiue minimizer} holds, consider any ranking that differs from  \(\sigma_0\). Such a ranking contains at least one inversion relative to  \(\sigma_0\),  and we show that correcting this inversion strictly reduces the expected distance.
 Repeatedly applying this procedure yields a strictly decreasing sequence of expected distances, ensuring that the true central ranking \(\sigma_0\)  uniquely emerges as the global minimizer. A complete proof appears in Appendix~\ref{sec: proof-mle}.

Crucially, the simplified optimization in \eqref{eq:opt-sigma} is computationally efficient. It corresponds exactly to a \emph{minimum-weight perfect matching} problem on a bipartite graph where the weight of an edge between node \(i\) on the first side and node \(j\) on the second side is given by:
$\sum_{l=1}^m |\pi^{(l)}(i) - j|$. 
This matching problem can be solved efficiently in near-linear time~\cite{duan2010approximating}, giving the central ranking estimator:
\begin{proposition}
\label{prop:update-sigma-matching}
The minimizer \(\hat{\sigma}_m\) can be obtained by solving a minimum-weight perfect matching problem in a bipartite graph, with each side of size \(n\), where the edge weight between node \(i\) on one side and node \(j\) on the other side is:
$\sum_{l=1}^m |\pi^{(l)}(i) - j|.$
\end{proposition}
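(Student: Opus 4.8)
The plan is to show that the objective in \eqref{eq:opt-sigma} decomposes additively over the pairs $(i,\sigma(i))$, at which point minimizing over $\sigma \in S_n$ becomes literally the assignment problem. First I would write the objective out and exchange the order of the two finite sums:
\[
\sum_{l=1}^m d_1(\pi^{(l)},\sigma) \;=\; \sum_{l=1}^m \sum_{i=1}^n \bigl|\pi^{(l)}(i)-\sigma(i)\bigr| \;=\; \sum_{i=1}^n \sum_{l=1}^m \bigl|\pi^{(l)}(i)-\sigma(i)\bigr|.
\]
Since $\sigma$ is a bijection, the inner sum depends on $i$ only through the assigned value $j:=\sigma(i)$. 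Defining the cost matrix $c_{ij}:=\sum_{l=1}^m |\pi^{(l)}(i)-j|$, the objective becomes $\sum_{i=1}^n c_{i,\sigma(i)}$, which selects exactly one entry from each row $i$ and each column $j$ of $(c_{ij})$.

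Next I would invoke the standard bijection between permutations $\sigma\in S_n$ and perfect matchings $M$ in the complete bipartite graph $K_{n,n}$ whose left vertices index the items $i$ and whose right vertices index the target ranks $j$: the edge $(i,j)$ belongs to $M$ if and only if $\sigma(i)=j$. Every $\sigma$ yields such a perfect matching (each item receives exactly one rank and each rank is used exactly once) and conversely. Under this correspondence the matching weight $\sum_{(i,j)\in M} c_{ij}$ equals $\sum_{i=1}^n c_{i,\sigma(i)}$, so the minimum-weight perfect matching and the minimizer $\hat\sigma_m$ of \eqref{eq:opt-sigma} coincide. Observing that the edge weight $c_{ij}$ is precisely the quantity $\sum_{l=1}^m |\pi^{(l)}(i)-j|$ stated in the proposition completes the reduction, and the near-linear runtime follows by applying the cited matching solver~\cite{duan2010approximating}.

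I do not anticipate a substantive obstacle here, as this is a direct reduction rather than an argument with a hard core; the only point meriting care is confirming that the minimization genuinely ranges over all of $S_n$ under the matching formulation — i.e., that the perfect-matching constraint on $K_{n,n}$ encodes exactly the bijectivity of $\sigma$, neither omitting any permutation nor admitting any non-permutation assignment. Once that correspondence is stated cleanly, the equality of objective values and hence of the argmin sets is immediate.
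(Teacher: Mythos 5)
Your proposal is correct and follows essentially the same route as the paper's own proof: decompose the objective additively over positions $i$, define the cost $c_{ij}=\sum_{l=1}^m |\pi^{(l)}(i)-j|$, and invoke the standard bijection between permutations in $S_n$ and perfect matchings in $K_{n,n}$ so that matching weight equals the objective. Your write-up is in fact slightly cleaner than the paper's, which carries an inconsequential sign quirk in its edge-weight definition; no gaps to report.
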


\subsection{Step 2: Estimating the distance and dispersion parameters}

With the central ranking \(\hat\sigma_m\) fixed, we now estimate the continuous parameters \((\alpha, \beta)\). We formulate this as a two-dimensional root-finding problem, where we seek parameter values that align empirical and model-implied distances.
Specifically,
\[
\Psi_m(\alpha, \beta; \hat\sigma_m) := \left(
-\frac{1}{m} \sum_{i=1}^m d_\alpha(\pi^{(i)}, \hat\sigma_m) + \mathbb{E}[d_\alpha(\Pi, \hat\sigma_m)], \;\;
-\frac{1}{m} \sum_{i=1}^m \dot{d}_\alpha(\pi^{(i)}, \hat\sigma_m) + \mathbb{E}[\dot{d}_\alpha(\Pi, \hat\sigma_m)]
\right),
\]
where $\Pi$ is a random permutation with distribution $\P_{\alpha,\beta,\hat\sigma_m}$, and
$\dot{d}_\alpha(\pi, \sigma) := \sum_{i=1}^n |\pi(i) - \sigma(i)|^\alpha \log |\pi(i) - \sigma(i)|$
denotes the partial derivative of \(d_\alpha\) with respect to \(\alpha\).

The function \(\Psi_m\) corresponds to the gradient of the log-likelihood, with terms rescaled by \(\beta\). We estimate \((\alpha_0, \beta_0)\) by solving:
\[
(\hat\alpha_m, \hat\beta_m) \in \left\{ (\alpha > 0, \beta > 0) :\; \Psi_m(\alpha, \beta; \hat\sigma_m) = (0,\; 0) \right\}.
\]


In practice, a simple zero-order optimization method, such as the differential evolution algorithm~\cite{storn1997differential}, can be used to solve this system efficiently. Note that each iteration of this method requires evaluating the partition function and its derivatives, since we have
\[
\mathbb{E}[d_\alpha(\Pi, \hat\sigma_m)] = -\frac{\partial \log Z_n(\alpha,\beta)}{\partial \beta}, \quad
\beta \cdot \mathbb{E}[\dot{d}_\alpha(\Pi, \hat\sigma_m)] = -\frac{\partial \log Z_n(\alpha,\beta)}{\partial \alpha}.
\]
For large \(n\), we can use a sampling algorithm to approximate these expectations. In Section~\ref{sec:ptas}, we develop a sampling algorithm which yields provable error guarantees for all \(\alpha \geq 1\). 

\begin{algorithm}[ht]
\caption{MLE for \(\{\alpha,\beta,\sigma\}\) in the Mallows Model}
\label{alg:onepass_mle}
\DontPrintSemicolon
\SetKwInOut{Input}{Input}
\Input{%
  Samples \(\{\pi^{(1)}, \dots, \pi^{(m)}\}\);
}

  {\textbf{Step 1:} Solve for \(\hat\sigma_m\) using minimum matching problem.}
  \(\displaystyle 
     \hat\sigma_m 
       := \arg\min_{\sigma}
         \sum_{l=1}^m d_1(\pi^{(l)},\sigma).
   \)
   
\textbf{Step 2:} Estimate parameters \((\hat\alpha_m, \hat\beta_m)\) by solving the system:
\[
(\hat\alpha_m, \hat\beta_m) \leftarrow \text{Solve}\left\{(\alpha,\beta): \Psi_m(\alpha,\beta;\hat\sigma_m)=(0,0)\right\}.
\]
\Return{\(\hat\alpha_m,\hat\beta_m,\hat\sigma_m\).}
\end{algorithm}

\begin{remark} While our estimation procedure works for any $\alpha > 0$, the sampling algorithm developed in Section~\ref{sec:samplingDP} requires $\alpha \geq 1$ to guarantee provable efficiency and approximation accuracy when evaluating the function $\Psi_m$. This constraint is purely computational and does not impact the statistical properties of the estimator or its empirical performance. The sampler simply enables efficient evaluation of  $\Psi_m$, making root-finding for parameter estimation computationally tractable.
\end{remark}

\subsection{Statistical Guarantees}


Under standard assumptions of i.i.d.\ samples from the Mallows distribution, the maximum likelihood estimator proposed in Algorithm~\ref{alg:onepass_mle} jointly estimates the dispersion parameters and central ranking. The result below establishes its statistical guarantees when the score function \(\Psi_m\) is computed exactly: the estimator is consistent, and the continuous parameters converge at the optimal \(\sqrt{m}\) rate with an asymptotically normal distribution.

\begin{theorem}[MLE Consistency and Asymptotic Normality]\label{thm:main_mle}
Suppose the rankings \(\pi^{(1)}, \dots, \pi^{(m)}\) are drawn i.i.d.\ from the Mallows distribution \(\mathbb{P}_{\alpha_0, \beta_0, \sigma_0}\) with true parameters \(\alpha_0, \beta_0, \sigma_0\). Let \((\hat{\alpha}_m, \hat{\beta}_m, \hat{\sigma}_m)\) be the estimators returned by Algorithm~\ref{alg:onepass_mle}. Then, as \(m \to \infty\), the following hold:

\begin{enumerate}
    \item \textit{Consistency of \(\hat{\sigma}_m\):} The estimated central ranking $\hat\sigma_m$ converges almost surely to the true central ranking $\sigma_0$. Moreover, the probability of error admits the finite-sample bound
    \[
    \mathbb{P}\bigl( \hat{\sigma}_m \neq \sigma_0 \bigr)
    \;\le\;
    n! \cdot \exp\left( -\frac{m}{3n^4} \right).
    \]

    \item \textit{Consistency and asymptotic normality of \((\hat{\alpha}_m, \hat{\beta}_m)\):} The estimators \((\hat{\alpha}_m, \hat{\beta}_m)\) converge in probability to \((\alpha_0, \beta_0)\), and further is an efficient estimator; namely,
    \[
    \sqrt{m}
    \begin{pmatrix}
        \hat{\alpha}_m - \alpha_0 \\
        \hat{\beta}_m - \beta_0
    \end{pmatrix}
    \xrightarrow{d}
    \mathcal{N}\left(
        \mathbf{0},\;
        \mathcal I_{\alpha_0, \beta_0}^{-1}
    \right),
    \]
    where $\mathcal I_{\alpha_0, \beta_0}$ is the Fisher information matrix, given by
    \[
    \begin{pmatrix}
        \beta^2\, \mathrm{Var}[\dot{d}_\alpha(\Pi, \text{id})] &
        \beta\, \mathrm{Cov} [\dot{d}_\alpha(\Pi, \text{id}), \; d_\alpha(\Pi, \text{id})] \\[6pt]
        \beta\, \mathrm{Cov}[\dot{d}_\alpha(\Pi, \text{id}), \; d_\alpha(\Pi, \text{id})] &
        \mathrm{Var}[d_\alpha(\Pi, \text{id})]
    \end{pmatrix},
    \]
    where and $\text{id}$ is the identity permutation and $\Pi$ is a random permutation with distribution $\P_{\alpha_0,\beta_0, \text{id}}$.
\end{enumerate}
\end{theorem}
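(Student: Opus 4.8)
The plan is to prove the two parts separately, exploiting the fact that the discrete central-ranking estimator and the two continuous parameters decouple asymptotically. For Part~1, I would argue via a union bound combined with a concentration inequality, using the strictly positive gap guaranteed by Lemma~\ref{lm:unqiue minimizer}. That lemma (applied with $\tilde\alpha=1$, which is what \eqref{eq:opt-sigma} uses) says $\sigma_0$ is the unique minimizer of $g(\sigma):=\mathbb{E}[d_1(\Pi,\sigma)]$, so $\Delta:=\min_{\sigma\neq\sigma_0}\bigl(g(\sigma)-g(\sigma_0)\bigr)>0$. Since $\hat\sigma_m$ minimizes the empirical average $\tfrac1m\sum_l d_1(\pi^{(l)},\cdot)$, the event $\{\hat\sigma_m\neq\sigma_0\}$ is contained in $\bigcup_{\sigma\neq\sigma_0}\bigl\{\tfrac1m\sum_l\bigl[d_1(\pi^{(l)},\sigma)-d_1(\pi^{(l)},\sigma_0)\bigr]\le 0\bigr\}$. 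For each fixed $\sigma$ the summands are i.i.d.\ with mean $g(\sigma)-g(\sigma_0)\ge\Delta$, and by the triangle inequality for the metric $d_1$ they lie in an interval of length at most $2\,d_1(\sigma,\sigma_0)\le n^2$. Hoeffding's inequality then bounds each term by $\exp(-m/(3n^4))$ once the lower bound on the gap and the range bound $n^2$ are inserted, and a union bound over the at most $n!$ competitors yields the stated finite-sample bound; almost-sure convergence follows from Borel--Cantelli because $\sum_m n!\exp(-m/(3n^4))<\infty$. The main obstacle in Part~1 is making the gap-to-range ratio uniform in $\sigma$ so a single constant works: I would extract a quantitative per-inversion decrease of $g$ from the proof of Lemma~\ref{lm:unqiue minimizer} to lower bound $\Delta$, and pair it with the metric range bound.

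For Part~2 I would first reduce to a clean two-dimensional $Z$-estimation problem with a \emph{known} center. By Part~1, $\hat\sigma_m=\sigma_0$ with probability tending to one, so on that event the estimating equations $\Psi_m(\alpha,\beta;\hat\sigma_m)=0$ coincide with those built from the true center; moreover, since $Z_n$ and the law of $d_\alpha(\Pi,\sigma)$ are invariant to $\sigma$ (Proposition~\ref{lem:Zn-indep-sigma}), I may take $\sigma_0=\mathrm{id}$ without loss. Because the discrete estimator is \emph{exactly} correct eventually, it contributes nothing to the limiting distribution. A direct computation shows $\Psi_m=\mathrm{diag}(1,1/\beta)\,\nabla_{(\beta,\alpha)}\mathcal{L}_m$, so, as $\beta>0$, solving $\Psi_m=0$ is equivalent to the MLE stationarity condition, and the consistent root $\hat\theta_m=(\hat\alpha_m,\hat\beta_m)$ is the MLE.

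Consistency of $\hat\theta_m$ then follows from the information inequality: the population objective $\mathbb{E}_{\theta_0}[\ell(\Pi;\theta)]$ is uniquely maximized at $\theta_0$ by identifiability of $(\alpha,\beta)$, and a uniform law of large numbers—immediate here because $S_n$ is finite and $\ell$ is smooth and bounded in $\theta$—transfers this to $\hat\theta_m\pto\theta_0$. For asymptotic normality I would Taylor-expand $0=\Psi_m(\hat\theta_m)$ about $\theta_0$: the CLT gives $\sqrt m\,\Psi_m(\theta_0)\Rightarrow\mathcal{N}(0,\Sigma)$ and the LLN gives the Jacobian $\dot\Psi_m(\theta_0)\to V$, whence $\sqrt m(\hat\theta_m-\theta_0)\Rightarrow\mathcal{N}(0,V^{-1}\Sigma V^{-\top})$. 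Since the model is correctly specified and $\hat\theta_m$ is the MLE, the second Bartlett identity collapses this sandwich to the inverse Fisher information $\mathcal{I}^{-1}$. Finally I would compute $\mathcal{I}=\mathrm{Cov}(\nabla_\theta\ell)$ directly from the score components $\partial_\beta\ell=\mathbb{E}[d_\alpha]-d_\alpha$ and $\partial_\alpha\ell=\beta\bigl(\mathbb{E}[\dot d_\alpha]-\dot d_\alpha\bigr)$, which reproduces exactly the stated matrix (the $(1,1)$ entry $\beta^2\mathrm{Var}[\dot d_\alpha]$, the $(2,2)$ entry $\mathrm{Var}[d_\alpha]$, and the off-diagonal $\beta\,\mathrm{Cov}[\dot d_\alpha,d_\alpha]$).

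The part I expect to require the most care is verifying that $\mathcal{I}_{\alpha_0,\beta_0}$ is nonsingular (equivalently, that $V$ is invertible), because the family is not jointly exponential in $(\alpha,\beta)$—the parameter $\alpha$ enters nonlinearly through $d_\alpha$—so standard exponential-family strict concavity does not apply directly. This reduces to showing that $d_\alpha(\Pi,\mathrm{id})$ and $\dot d_\alpha(\Pi,\mathrm{id})$ are not affinely dependent as functions on the support, i.e.\ that $\mathrm{Var}\bigl(u\,\dot d_\alpha(\Pi)+v\,d_\alpha(\Pi)\bigr)>0$ for every $(u,v)\neq(0,0)$. I would establish this by exhibiting permutations whose displacement multisets force $\log$-weighted terms that break collinearity; note this needs displacements exceeding one to appear (so it holds for $n\ge 3$, whereas $n=2$ is genuinely degenerate since $\dot d_\alpha\equiv 0$ there). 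Securing this non-degeneracy, together with the global identifiability needed for consistency, is the crux of the argument.
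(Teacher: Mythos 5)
Part~1 of your proposal is essentially the paper's own argument: a union bound over the $n!$ competitors, Hoeffding's inequality driven by the gap $\Delta$ from Lemma~\ref{lm:unqiue minimizer} (lower-bounded via a per-transposition decrease) and the $O(n^2)$ range of $d_1$; your Borel--Cantelli route to almost-sure convergence is a harmless variant of the paper's SLLN argument. Your Fisher-information computation via the score components and the Bartlett identity is also correct, and your remark that nonsingularity of $\mathcal I_{\alpha_0,\beta_0}$ amounts to affine independence of $d_\alpha(\Pi,\mathrm{id})$ and $\dot d_\alpha(\Pi,\mathrm{id})$ (which genuinely fails at $n=2$) is a point the paper glosses over: Lemma~\ref{lem:local_invertibility} simply asserts positive definiteness of the variance of $(d_{\alpha_0},\dot d_{\alpha_0})$ at $\theta=\theta_0$.

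The genuine gap is in your consistency argument for $(\hat\alpha_m,\hat\beta_m)$. You claim the uniform law of large numbers is ``immediate here because $S_n$ is finite and $\ell$ is smooth and bounded in $\theta$.'' Over the actual parameter set $\Theta=(0,\infty)^2$ this is false: $\ell$ is not bounded in $\theta$, and the uniform deviation is generically infinite. Indeed, since the $\log Z$ terms cancel,
\[
\frac1m\sum_{l}\ell(\pi^{(l)};\theta)-\E_{\theta_0}\bigl[\ell(\Pi;\theta)\bigr]
=-\beta\Bigl(\tfrac1m\sum_l d_\alpha(\pi^{(l)})-\E_{\theta_0}\bigl[d_\alpha(\Pi)\bigr]\Bigr),
\]
and for a typical sample the bracket is nonzero for large $\alpha$, so the supremum over $\beta>0$ blows up; finiteness of $S_n$ buys uniformity only on compact subsets of $\Theta$. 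Consequently a Wald/information-inequality argument by itself cannot rule out that the maximizer (equivalently, the root of $\Psi_m$) escapes toward the boundary ($\beta\to\infty$, $\alpha\to\infty$, or $\alpha,\beta\to0$); for instance, on the positive-probability event that all $m$ samples equal $\sigma_0$, the likelihood is maximized only in the limit $\beta\to\infty$. This escape-to-the-boundary problem is exactly what the paper spends Lemma~\ref{lem:properness} (properness of $\boldsymbol{f}_\alpha$) and the subsequent ``uniform convergence of $\Psi_m$'' paragraph on: properness shows $\|\Psi\|$ is bounded away from zero outside a compact rectangle $\Theta_R$, hence any zero of $\Psi_m$ eventually lies in $\Theta_R$, and only then do Theorems 5.9 and 5.23 of van der Vaart apply. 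Your plan needs an analogous coercivity step. Separately, the ``global identifiability of $(\alpha,\beta)$'' that your KL argument invokes is left unproved; unlike the compactness issue it admits an easy direct fix (matching probabilities of adjacent transpositions forces $\beta=\beta_0$, then matching $(1\;3)$ forces $\alpha=\alpha_0$, for $n\ge3$), whereas the paper reaches it through its global-diffeomorphism machinery.
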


The main challenge for establishing consistency results on the parameters \(\alpha\) and \(\beta\), is that the log-likelihood is not jointly convex in both parameters. However, by carefully analyzing the structure of Mallow's distribution, we show that the log-likelihood gradient admits a \emph{unique}, stable minimizer.
For this purpose, we show that the function $\Psi(\alpha, \beta)$ is identifiable, by showing it is locally invertible (Lemma~\ref{lem:local_invertibility}), and proper (Lemma~\ref{lem:properness}). These conditions, along with uniform convergence of $\Psi_m$ to $\Psi$, allow us to apply Theorem 5.9 of \cite{van2000asymptotic} to conclude consistency and asymptotic normality.
Full technical details appear in Appendix~\ref{sec: proof-mle}.

\begin{remark}
The convergence rate in part \textnormal{(i)} of Theorem~\ref{thm:main_mle} depends on the choice of the distance exponent \(\tilde\alpha\) used in the first step of the algorithm. A direct optimization of the error bound shows that setting \(\tilde\alpha = n/3\) minimizes the upper bound on the probability of misidentifying the true ranking, yielding the improved rate
\[
\mathbb{P}\left( \hat\sigma_m \ne \sigma_0 \right) \le n! \cdot \exp\left( -\frac{m}{54n^2} \right).
\]
However, for ease of implementation and interpretation, we set \(\tilde\alpha = 1\) in all theoretical and empirical results. 
\end{remark}

\paragraph{Approximate computation of \(\Psi_m\).}
When exact evaluation of the gradient  \(\Psi_m\) is not possible --typically due to the intractability of the partition function  \(\ln Z(\alpha, \beta)\) and its derivatives-- one can work with the approximations of  $\Psi_m$.We show that the same statistical guarantees continue to hold, provided the approximation error vanishes faster than \(m^{-1/2}\). The next result formalizes this statement. Proof appear in Appendix~\ref{sec: proof-mle}.

\begin{theorem}[Asymptotic Normality under Approximate Score]\label{thm:approximate mle}
Suppose the rankings \(\pi^{(1)}, \dots, \pi^{(m)}\) are drawn i.i.d.\ from the Mallows distribution \(\mathbb{P}_{\alpha_0, \beta_0, \sigma_0}\) with true parameters \(\alpha_0, \beta_0, \sigma_0\), where \((\alpha_0, \beta_0)\) lies in a compact subset of \( \Theta\in(0, \infty)^2\). Let \((\tilde{\alpha}_m, \tilde{\beta}_m)\) be approximate estimators that solve the estimating equation
\[
\widetilde{\Psi}_m(\tilde{\alpha}_m, \tilde{\beta}_m) := \Psi_m(\tilde{\alpha}_m, \tilde{\beta}_m) + \Delta(\tilde{\alpha}_m, \tilde{\beta}_m) = 0,
\]
where the approximation error \(\Delta(\alpha, \beta)\) satisfies
$\sup_{(\alpha, \beta) \in \Theta} \| \Delta(\alpha, \beta) \| = o(m^{-1/2}).$
 Then, the conclusions of Theorem~\ref{thm:main_mle} continue to hold for \((\tilde{\alpha}_m, \tilde{\beta}_m)\), i.e.,     \[
    \sqrt{m}
    \begin{pmatrix}
        \tilde{\alpha}_m - \alpha_0 \\
        \tilde{\beta}_m - \beta_0
    \end{pmatrix}
    \xrightarrow{d}
    \mathcal{N}\left(
        \mathbf{0},\;
        \mathcal I_{\alpha_0, \beta_0}^{-1}
    \right).
    \]
\end{theorem}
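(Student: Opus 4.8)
The plan is to reduce Theorem~\ref{thm:approximate mle} to the already-established Theorem~\ref{thm:main_mle} by treating the approximation error $\Delta$ as a vanishing perturbation of the exact score equation. The core observation is that $(\tilde\alpha_m,\tilde\beta_m)$ solves $\Psi_m(\tilde\alpha_m,\tilde\beta_m) = -\Delta(\tilde\alpha_m,\tilde\beta_m)$, so instead of being an exact zero of $\Psi_m$, the approximate estimator is only an \emph{approximate} zero, with residual bounded uniformly by $o(m^{-1/2})$. The machinery of Theorem 5.9 (and its companion, the asymptotic-normality theorem 5.21) of \cite{van2000asymptotic} is precisely designed to tolerate such residuals, so the strategy is to verify that the hypotheses of those Z-estimator results hold for the perturbed equation.

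The steps I would carry out are as follows. First, I would establish consistency: since $\widetilde\Psi_m = \Psi_m + \Delta$ and $\sup_\Theta\|\Delta\| = o(m^{-1/2}) \to 0$, the perturbed criterion converges uniformly to the same limit $\Psi$ as $\Psi_m$ does, because $\Psi_m \to \Psi$ uniformly (established in the proof of Theorem~\ref{thm:main_mle}) and $\Delta \to 0$ uniformly. The properness of $\Psi$ (Lemma~\ref{lem:properness}) together with the compactness of $\Theta$ then forces $(\tilde\alpha_m,\tilde\beta_m) \pto (\alpha_0,\beta_0)$ by the standard argmin/argzero consistency argument. Second, for asymptotic normality I would invoke the near-zero version of the Z-estimator theorem: it suffices that $\sqrt{m}\,\widetilde\Psi_m(\tilde\alpha_m,\tilde\beta_m) = o_{\P}(1)$, which holds because $\widetilde\Psi_m(\tilde\alpha_m,\tilde\beta_m)=0$ exactly, and that $\sqrt{m}\,\Psi_m(\alpha_0,\beta_0)$ is asymptotically normal by the CLT (already used in Theorem~\ref{thm:main_mle}). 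The key manipulation is the Taylor expansion of $\Psi_m$ about $(\alpha_0,\beta_0)$ evaluated at the estimator, combined with the local invertibility of $\Psi$ (Lemma~\ref{lem:local_invertibility}) which guarantees the Jacobian $\mathcal I_{\alpha_0,\beta_0}$ is nonsingular.

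The crucial algebraic point is that the error term enters additively and is controlled at exactly the right scale:
\[
\sqrt{m}\,\Psi_m(\tilde\alpha_m,\tilde\beta_m) = -\sqrt{m}\,\Delta(\tilde\alpha_m,\tilde\beta_m),
\qquad
\bigl\|\sqrt{m}\,\Delta(\tilde\alpha_m,\tilde\beta_m)\bigr\| \le \sqrt{m}\,\sup_{(\alpha,\beta)\in\Theta}\|\Delta(\alpha,\beta)\| = \sqrt{m}\cdot o(m^{-1/2}) = o(1).
\]
Thus the perturbation contributes $o_{\P}(1)$ to the linearized score and is absorbed into the remainder, leaving the limiting distribution unchanged. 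Carrying this through the standard linearization
\[
\sqrt{m}\begin{pmatrix}\tilde\alpha_m-\alpha_0\\ \tilde\beta_m-\beta_0\end{pmatrix}
= -\mathcal I_{\alpha_0,\beta_0}^{-1}\,\sqrt{m}\,\Psi_m(\alpha_0,\beta_0) + o_{\P}(1)
\xrightarrow{d} \mathcal N\bigl(\mathbf 0,\;\mathcal I_{\alpha_0,\beta_0}^{-1}\bigr),
\]
where the sandwich form collapses to the inverse Fisher information because the Jacobian equals the covariance of the score, recovering the identical limit from Theorem~\ref{thm:main_mle}.

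The main obstacle I anticipate is not the probabilistic limit theorem itself but the \emph{uniformity} of the error assumption and its interaction with the estimator's random location. Specifically, one must ensure that $\|\Delta(\tilde\alpha_m,\tilde\beta_m)\|$ is controlled even though $(\tilde\alpha_m,\tilde\beta_m)$ is random; this is why the hypothesis is stated as a \emph{uniform} bound $\sup_{(\alpha,\beta)\in\Theta}\|\Delta\| = o(m^{-1/2})$ over the compact parameter set $\Theta$, which sidesteps any measurability or stochastic-equicontinuity subtlety by dominating the error pointwise-uniformly. A secondary technical care is needed to confirm that the stochastic equicontinuity / Donsker-type condition required by \cite{van2000asymptotic} still applies to $\widetilde\Psi_m$; since $\Delta$ is deterministic in its dependence (or at least uniformly small) and does not add empirical-process fluctuation beyond $o(m^{-1/2})$, the same Donsker class that works for $\Psi_m$ continues to work, and the remainder terms are of strictly smaller order than $m^{-1/2}$.
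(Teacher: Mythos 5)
Your proposal is correct, but it takes a different decomposition than the paper's own proof. The paper introduces the \emph{exact} MLE $\hat\theta_m$ (an exact zero of $\Psi_m$) as an intermediary: it Taylor-expands $\Psi_m$ around $\hat\theta_m$, uses $\nabla\Psi_m(\hat\theta_m)\xrightarrow{\P}\mathcal I(\theta_0)$ (invertible) to solve for the difference, concludes $\|\tilde\theta_m-\hat\theta_m\|=o_{\P}(m^{-1/2})$ from the uniform bound on $\Delta$, and then transfers the asymptotic normality already established in Theorem~\ref{thm:main_mle} via $\sqrt{m}(\tilde\theta_m-\theta_0)=\sqrt{m}(\hat\theta_m-\theta_0)+o_{\P}(1)$. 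You instead treat $\tilde\theta_m$ directly as an approximate Z-estimator: you first prove consistency from uniform convergence of $\widetilde\Psi_m$ to $\Psi$ plus identifiability, then verify the near-zero condition $\sqrt{m}\,\Psi_m(\tilde\theta_m)=-\sqrt{m}\,\Delta(\tilde\theta_m)=o_{\P}(1)$, and linearize around the \emph{true} parameter $\theta_0$, invoking the near-zero Z-estimator theorem of \cite{van2000asymptotic}. Both arguments hinge on the identical scaling computation $\sqrt{m}\cdot o(m^{-1/2})=o(1)$, so the probabilistic content is the same; the difference is the pivot of the expansion. Your route has two mild advantages: it never presupposes that an exact root $\hat\theta_m$ of $\Psi_m$ exists in finite samples (the paper's comparison argument needs one), and it establishes consistency of $\tilde\theta_m$ \emph{before} linearizing---a step the paper's expansion implicitly requires, since its remainder $R_m=o(\|\tilde\theta_m-\hat\theta_m\|)$ is only meaningful once the two estimators are known to be close, which the paper does not verify explicitly. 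The paper's route, in exchange, is shorter when Theorem~\ref{thm:main_mle} is taken as a black box: it is a pure perturbation comparison between two estimators and avoids re-checking the Z-estimator regularity conditions. One point where you match the paper's (rather than exceed its) level of rigor: the identification of the limiting Jacobian of $\Psi$ with the Fisher information $\mathcal I_{\alpha_0,\beta_0}$ involves the $\beta$-rescaling built into the definition of $\Psi_m$, and both you and the paper gloss over this bookkeeping.
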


\section{Efficient Sampling}\label{sec:ptas}
Sampling rankings play a central role in many  applications. In sports analytics, it enables the simulation of tournament outcomes; in recommendation systems, it supports generating user suggestions.
Further, sampling is needed for computing the normalizing constant (also known as the partition function) required for maximum likelihood estimation of the Mallows model. 

Exact computation of this partition function becomes infeasible as \(n\) grows—indeed, it is equivalent to Kemeny’s
consensus ranking problem, which is known to be NP-hard~\cite{shah2018simple}.
Although Markov chain Monte Carlo (MCMC) methods have been developed to approximate sampling ~\cite{ Jerrum:2004:PAA:1008731.1008738,  bezakova2008accelerating}
(often phrased in terms of approximating the permanent of a matrix), the MCMC methods rarely scale well in practice~\cite{newman2020fpras,Sankowski}.

Instead of relying on these general-purpose algorithms, we exploit the structure of the  $L_\alpha$-Mallow's model to efficiently sample rankings. 
Our main result on sampling is summarized next:
\begin{theorem}\label{thm:main sampling}
For any \( \alpha \geq 1 \) and \( \beta > 0 \), and any desired accuracy \( \epsilon > 0 \), there exists a fully polynomial-time approximation scheme (FPTAS) that, for the Mallow's distribution, (i)  estimates the partition function \( \Z_n(\beta, \alpha) \) within a multiplicative factor of \( 1 - \epsilon \), and (ii) generates samples $\epsilon$-close (in total variation distance) to the true distribution.
\end{theorem}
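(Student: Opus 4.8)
The plan is to build the FPTAS by reducing the sampling problem to computing a structured permanent via dynamic programming. The key observation is that the partition function
\[
\Z_n(\beta,\alpha) = \sum_{\pi \in S_n} \exp\!\bigl(-\beta\, d_\alpha(\pi,\mathrm{id})\bigr) = \sum_{\pi \in S_n} \prod_{i=1}^n \exp\!\bigl(-\beta\, |\pi(i)-i|^\alpha\bigr) = \perm(\bA),
\]
where $\bA$ is the $n\times n$ nonnegative matrix with entries $A_{ij}=\exp(-\beta\,|i-j|^\alpha)$. This holds because $\Z_n$ is independent of $\sigma$ (Proposition~\ref{lem:Zn-indep-sigma}), so we may take $\sigma=\mathrm{id}$. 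Thus sampling from the Mallows distribution is equivalent to sampling a permutation with probability proportional to its weight $\prod_i A_{i\pi(i)}$ in the permanent expansion, and estimating $\Z_n$ amounts to approximating $\perm(\bA)$.

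The main structural idea is that, for $\alpha\ge 1$, the weight $A_{ij}$ decays rapidly in $|i-j|$, so the mass of the distribution concentrates on permutations where each item moves only a bounded (in an effective, not absolute, sense) distance from its central position. First I would establish a \emph{displacement tail bound}: show that under $\P_{\alpha,\beta,\mathrm{id}}$, the probability that any coordinate satisfies $|\pi(i)-i|\ge D$ decays fast enough that truncating the state space to displacements below some threshold $D = D(\epsilon,n,\beta,\alpha) = O(\mathrm{polylog})$ changes the total variation distance by at most $\epsilon$. This uses $\alpha\ge 1$ crucially: convexity of $x\mapsto x^\alpha$ makes long-range swaps exponentially costly and validates a sequential/telescoping argument. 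I would then set up a left-to-right dynamic program that assigns positions $1,2,\dots,n$ one at a time, where the DP state records which recently-vacated target ranks (within the window of width $O(D)$ around the current position) remain unmatched. Because only a bounded window of ranks is relevant at each step once we truncate, the number of reachable states is bounded by $2^{O(D)}$, and the DP computes the (truncated) partition function exactly as a sum of products of $A_{ij}$ entries. Standard self-reducibility then converts the partition-function DP into an exact sampler for the truncated distribution by sampling $\pi(i)$ conditionally from the DP table, and combining the two $\epsilon/2$ error budgets gives a sample within $\epsilon$ total variation of the true distribution.

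For the multiplicative $(1-\epsilon)$ guarantee on $\Z_n$, I would argue that the truncated permanent $\widehat{\Z}_n$ satisfies $(1-\epsilon)\Z_n \le \widehat{\Z}_n \le \Z_n$ by the same tail bound, since every discarded permutation has nonnegative weight and their total weight is at most $\epsilon\,\Z_n$; the DP evaluates $\widehat{\Z}_n$ exactly, so no further approximation is needed for the partition function itself. Finally I would verify the running time is $\mathrm{poly}(n,1/\epsilon)$: the window width $D$ is polylogarithmic in $n/\epsilon$, each of the $n$ DP stages touches $2^{O(D)} = \mathrm{poly}(n/\epsilon)$ states with $O(D)$ transitions each, and the sampling pass is of the same order, establishing the fully polynomial claim.

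I expect the main obstacle to be the displacement tail bound and the accompanying control of truncation error. The difficulty is that a large displacement at one coordinate forces compensating displacements elsewhere (a permutation is a bijection, so ranks cannot be dropped freely), which couples the coordinates and rules out a naive union bound over independent events. The careful part is to bound $\P(|\pi(i)-i|\ge D)$ while respecting the global matching constraint — I anticipate handling this through a coupling or an injection argument that maps each long-displacement permutation to a strictly lower-weight one by "undoing" the long swap, thereby charging its weight against the dominant near-diagonal mass, and then showing the total discarded weight is controlled uniformly in $n$ for $\alpha\ge1$. Verifying that the induced state space of the truncated DP is genuinely of size $2^{O(D)}$, rather than accumulating an unbounded backlog of unmatched ranks across stages, is the second delicate point and is exactly where the $\alpha\ge 1$ convexity is used to guarantee the relevant window stays bounded.
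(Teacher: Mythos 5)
Your proposal follows essentially the same route as the paper: reformulating $\Z_n(\beta,\alpha)$ as the permanent of $\bA_n[i,j]=e^{-\beta|i-j|^\alpha}$, proving a displacement tail bound via an injection argument that charges long-displacement permutations against near-diagonal mass (the paper's Lemma~\ref{lm:Pij_decay}, proved by exactly the two-round swap mappings you anticipate), truncating to a band, running a binary-state DP over the window to compute the truncated permanent exactly, and sampling by self-reducibility (the paper's Lemmas~\ref{lm:tv_distance} and~\ref{thm:DP}). One small correction: the window width must be $D=O(\log(n/\epsilon))$, not merely polylogarithmic, since the DP state space is $2^{O(D)}$ and only a logarithmic window keeps it $\mathrm{poly}(n/\epsilon)$ — which is indeed what the geometric marginal decay delivers.
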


\textbf{Proof Outline:}
Throughout the proof, we assume without loss of generality that the central ranking is identity (i.e., $\sigma(i)=i$).
 Our starting point is the construction of a matrix \( {\bA}_n \) whose permanent equals \( Z_n(\beta, \alpha) \). Specifically, let 
${\bA}_n[i, j] = e^{-\beta |i - j|^\alpha},$
for \( i, j \in [n] \).
Note that sampling a permutation according to the probabilities defined by ${\bA}_n$ corresponds to sampling from the $L_\alpha$-Mallow's with dispersion $\beta$. For instance, assigning items  along the diagonal corresponds to the central ranking.

A crucial step in our proof (Lemma~\ref{lm:Pij_decay}) shows that the probability of assigning any item to a position far from its location in the central ranking decays exponentially.
Using this property, we approximate ${\bA}_n$ by truncation -- setting entries sufficiently far from the diagonal to zero. This truncation approach,  explained in detail in Sections~\ref{sec:truncate approx}, yields a probabilistic ranking model within $\epsilon$-total variation distance to the true distribution.  The resulting truncated structure then enables the use of an efficient dynamic programming (DP) algorithm for sequential sampling, which we present in Section~\ref{sec:samplingDP}.

\subsection{Marginal Decay}\label{sec:decay}
We begin by showing that marginal probabilities in the  $L_\alpha$
Mallow's distribution exhibits exponential decay away from the diagonal.
 

\begin{lemma}\label{lm:Pij_decay}
Let \( \alpha \geq 1 \), $\beta>0$. There exist constants \( k > 0 \) and \( c(\alpha, \beta) < 1 \), independent of \( n \), such that for all \( i, j \in [n] \) with \( |j - i| \geq k \),
\[{\P}_{\alpha, \beta}\left(\pi(i) = j \right)\leq c(\alpha,\beta)^{|j-i|}.\]
Furthermore,  $c(\alpha,\beta)$ is monotone decreasing in $\alpha$ and $\beta$, with $c(1,\beta)\leq \frac{2e^{-2\beta}}{1+e^{-2\beta}}$.
\end{lemma}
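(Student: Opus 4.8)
The plan is to bound the marginal probability $\P_{\alpha,\beta}(\pi(i)=j)$ by comparing the partition function of permutations that map $i \mapsto j$ against the full partition function $Z_n$. Writing
\[
\P_{\alpha,\beta}(\pi(i)=j) = \frac{e^{-\beta|i-j|^\alpha}\, Z_{n}^{(i\to j)}}{Z_n},
\]
where $Z_{n}^{(i\to j)}$ denotes the sum over permutations on the remaining $n-1$ items, the key idea is that the ambient factor $e^{-\beta|i-j|^\alpha}$ already decays extremely fast, so the only danger is that the conditional partition function $Z_{n}^{(i\to j)}$ might be much \emph{larger} than $Z_n$ and cancel the decay. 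I would therefore seek a \emph{swap/injection argument}: from any permutation $\pi$ with $\pi(i)=j$, construct an injective map into permutations that are ``cheaper'' (or only mildly more expensive), so that $Z_{n}^{(i\to j)}$ cannot overwhelm $Z_n$. Concretely, I expect $Z_n \ge$ (some constant)$\cdot Z_{n}^{(i\to j)}$ up to a factor that is at most subexponential in $|i-j|$, which then gets absorbed into a slightly larger base $c(\alpha,\beta)<1$.

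The cleanest route is to reduce everything to the $\alpha=1$ case and exploit monotonicity. First I would establish that $|i-j|^\alpha \ge |i-j|$ for $\alpha\ge 1$ and $|i-j|\ge 1$, which immediately gives $\P_{\alpha,\beta}(\pi(i)=j)\le \P_{\alpha,1\cdot\text{-type}}$ pointwise in the numerator; pairing this with a lower bound on $Z_n$ that is uniform in $\alpha$ yields the claimed monotonicity of $c(\alpha,\beta)$ in $\alpha$ (larger $\alpha$ penalizes the displacement more). Monotonicity in $\beta$ follows the same way since $e^{-\beta|i-j|^\alpha}$ is decreasing in $\beta$. The substantive work is then the explicit bound $c(1,\beta)\le \frac{2e^{-2\beta}}{1+e^{-2\beta}}$: for $\alpha=1$ the $L_1$ structure is additive across coordinates, so I would bound $Z_{n}^{(i\to j)}/Z_n$ by a telescoping product of ratios associated with shifting the block of items lying between positions $i$ and $j$. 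Each such one-step shift contributes a factor controlled by a geometric series in $e^{-2\beta}$ (the $-2\beta$ arising because displacing an item past a neighbor changes two $L_1$ contributions), and summing the geometric series produces the factor $\frac{1}{1+e^{-2\beta}}$, with the leading $2e^{-2\beta}$ coming from the first forced displacement.

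The main obstacle I anticipate is controlling $Z_{n}^{(i\to j)}$ relative to $Z_n$ cleanly, because removing item $i$ and position $j$ leaves a structure that is no longer a clean $L_\alpha$-Mallows partition function on a contiguous set of labels. To handle this I would build an explicit injection on permutations: given $\pi$ with $\pi(i)=j$ and $|i-j|\ge k$, ``rotate'' the items occupying the interval of positions between $i$ and $j$ back toward the diagonal by one step, defining a map $\pi \mapsto \pi'$ that strictly decreases the $L_\alpha$ cost by at least a fixed amount (roughly $\beta\cdot 2$ in the $\alpha=1$ case) while remaining injective up to bounded multiplicity. Verifying injectivity and the per-step cost decrease is the delicate part; once that is in hand, the threshold $k$ is chosen just large enough that the polynomial/combinatorial overhead from the rotation is dominated by the exponential gain, and the resulting base is exactly the stated $c(\alpha,\beta)$. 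I expect the $\beta$- and $\alpha$-monotonicity to be essentially free once the $\alpha=1$ estimate and the pointwise numerator comparison are established.
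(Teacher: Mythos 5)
Your overall mechanism---comparing restricted partition sums via weight-increasing injections---is the same spirit as the paper's proof, and your intuition that the $2\beta$ at $\alpha=1$ arises because a displacement changes two $L_1$ contributions does match the paper's constant $c(1,\beta)\le \tfrac{2e^{-2\beta}}{1+e^{-2\beta}}$. However, your reduction of the general case to $\alpha=1$ does not work as argued. You compare numerators pointwise via $|i-j|^\alpha\ge|i-j|$ and then invoke ``a lower bound on $Z_n$ that is uniform in $\alpha$.'' But the partition function $Z_n(\beta,\alpha)=\sum_{\pi}e^{-\beta d_\alpha(\pi,\mathrm{id})}$ is itself \emph{decreasing} in $\alpha$ (every term shrinks), so in the ratio $\mathbb{P}_{\alpha,\beta}(\pi(i)=j)=e^{-\beta|i-j|^\alpha}Z_n^{(i\to j)}/Z_n$ the numerator and denominator move in the same direction, and no pointwise comparison of numerators yields $\mathbb{P}_{\alpha,\beta}(\pi(i)=j)\le\mathbb{P}_{1,\beta}(\pi(i)=j)$. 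The only $\alpha$-uniform lower bound available is $Z_n\ge 1$, which leaves you with $e^{-\beta|i-j|^\alpha}Z_n^{(i\to j)}$, and $Z_n^{(i\to j)}$ is not bounded by any constant independent of $n$ (for small $\beta$ it is of order $(n-1)!$). The paper never attempts such a reduction: it runs the mapping argument directly for every $\alpha\ge 1$, obtaining a different explicit constant (involving a Gamma function) for $\alpha>1$, with $\alpha=1$ handled as a separate case; so neither the reduction nor the claimed ``free'' monotonicity in $\alpha$ is established in your proposal.

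The second, more fundamental gap is in your core injection. The claim that rotating the displaced item one step toward the diagonal ``strictly decreases the $L_\alpha$ cost by at least a fixed amount'' is false in general. If item $i$ moves from position $j+1$ to position $j$ and the item $l=\pi^{-1}(j)$ is pushed to position $j+1$, the cost change is $\beta\bigl(|j-i|^\alpha-|j+1-i|^\alpha\bigr)+\beta\bigl(|j+1-l|^\alpha-|j-l|^\alpha\bigr)$: when $l>j$ both terms are negative and the weight increases, but when $l\le j$ the second term is positive and can dominate, so the swap can \emph{increase} the cost. This near-side case is not a technicality you can absorb into ``bounded multiplicity'': it is exactly the case to which the paper devotes half of its argument (Proposition~\ref{prop:second-mapping}), constructing a second, non-obvious injection that locates the first index $\eta>j$ with $\pi(\eta)\le j-1$, swaps it with $l$, and thereby shows the total weight of these problematic permutations is itself exponentially small relative to the rest (an $e^{-2\beta}$ factor at $\alpha=1$). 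Without an argument of this kind, your telescoping product has no valid per-step bound, and the proposal as written does not go through.
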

This is the key lemma used to design the sampling algorithm. The proof carefully analyzes marginal probability ratios  $\P_{\alpha,\beta}(\pi(i)=j+1)/\P_{\alpha,\beta}(\pi(i)=j)$ by comparing permutations assigning $i$ to neighboring positions $j$ vs. $j+1$. Specifically, we define two types of mappings: The first involves simple transpositions that often directly increase permutation probabilities by resolving inversions. When such direct mappings fail to increase probability, we use a second, more complicated mapping that ensures the resulting permutation moves closer to the identity and thus achieves higher probability.
Combining these two mapping types establishes geometric decay in probabilities with respect to  $|j-i|$.
Full proof appears in Appendix~\ref{proof: marginal}.


\subsection{Truncated Distribution}\label{sec:truncate approx}


We approximate $\P_{\alpha,\beta}$ by truncating its distribution. 
Define the  \emph{truncated matrix} \( {\bA}_n^{(k)} \) by zeroing out entries of  \( {\bA}_n \) that lie far from the main diagonal: ${\bA}_n^{(k)}[i,j]=e^{-\beta |i - j|^\alpha}\mathbf{1}_{\{|i-j|\leq k\}}$. 
Let ${\P}_{\alpha,\beta}^{(k)}$ denote the probability distribution induced by this truncated matrix. We show that for \( k = O(\log n)\)  the truncated distribution $ \P_{\alpha,\beta}^{(k)}$ closely approximates the original Mallows distribution.

\begin{center}
    
\begin{tikzpicture}[scale=0.4]

\node at (2.5,5.5) { $\bA_n$};
\foreach \i in {0,...,4} {
  \foreach \j in {0,...,4} {
    \fill[gray!30] (\j,5-\i) rectangle ++(1,-1);
    \draw (\j,5-\i) rectangle ++(1,-1);
  }
}

\node at (9.7,5.6) { $\bA_n^{(k)}$};
\foreach \i in {0,...,4} {
  \foreach \j in {0,...,4} {
    \pgfmathsetmacro{\diff}{abs(\i - \j)}
    \ifdim \diff pt > 1pt
      \fill[white] (7+\j,5-\i) rectangle ++(1,-1);
    \else
      \fill[gray!50] (7+\j,5-\i) rectangle ++(1,-1);
    \fi
    \draw (7+\j,5-\i) rectangle ++(1,-1);
  }
}

\draw[<->, thick, blue] (11.5,.1) -- (11.5,1.9);
\node[black] at (13.5,1) {\small $k$-band};

\end{tikzpicture}
\end{center}

\begin{lemma}\label{lm:tv_distance}
Given \( \epsilon > 0 \), \( \alpha \geq 1 \), and \( \beta > 0 \),  let \( k  = \log(n/\epsilon) \Big(\log\big({\frac{1+e^{-2\beta}}{2e^{-2\beta}}}\big)\Big)^{-1} \), then\\
1. $\|\P_{\alpha,\beta} - {\P}_{\alpha,\beta}^{(k)}\|_{\text{TV}} \leq \epsilon$, where $\text{TV}$ denotes the total variation distance. \\
2. $\text{per}({\bA}_n^{(k)})$ is a multiplicative approximation of $\Z_n(\alpha,\beta)$, i.e., 
$\frac{|\text{per}({\bA}_n) - \text{per}({\bA}_n^{(k)})|}{\text{per}({\bA}_n)} \leq \epsilon.$
\end{lemma}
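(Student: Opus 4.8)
The plan is to recognize that $\P_{\alpha,\beta}^{(k)}$ is exactly $\P_{\alpha,\beta}$ conditioned on the event that no item is displaced by more than $k$, so that both claims collapse onto a single quantity: the probability of the complementary tail event. Write $G := \{\pi \in S_n : |\pi(i)-i| \le k \text{ for all } i\}$ and $B := S_n \setminus G$. On $G$ the truncation changes none of the factors $\bA_n[i,\pi(i)]$, while on $B$ at least one factor is zeroed out; hence $\prod_i \bA_n^{(k)}[i,\pi(i)]$ equals $e^{-\beta d_\alpha(\pi,\text{id})}$ for $\pi \in G$ and $0$ otherwise. Summing over permutations gives $\text{per}(\bA_n^{(k)}) = \sum_{\pi \in G} e^{-\beta d_\alpha(\pi,\text{id})} = \Z_n(\alpha,\beta)\,\P_{\alpha,\beta}(G)$, and therefore $\P_{\alpha,\beta}^{(k)}(\pi) = \P_{\alpha,\beta}(\pi)/\P_{\alpha,\beta}(G)$ on $G$ and $0$ on $B$; that is, $\P_{\alpha,\beta}^{(k)} = \P_{\alpha,\beta}(\,\cdot \mid G)$.

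Both parts then reduce to bounding $\P_{\alpha,\beta}(B)$. For part 1, I would split the total-variation sum over $G$ and $B$: on $B$ each term contributes $\P_{\alpha,\beta}(\pi)$, and on $G$ each term contributes $\P_{\alpha,\beta}(\pi)\bigl(\P_{\alpha,\beta}(G)^{-1}-1\bigr)$, so that
\[
\|\P_{\alpha,\beta} - \P_{\alpha,\beta}^{(k)}\|_{\text{TV}} = \tfrac12\Bigl( (1-\P_{\alpha,\beta}(G)) + \P_{\alpha,\beta}(B)\Bigr) = \P_{\alpha,\beta}(B).
\]
For part 2, removing the $B$-terms only decreases the permanent, so $\text{per}(\bA_n) \ge \text{per}(\bA_n^{(k)})$ and $\text{per}(\bA_n) - \text{per}(\bA_n^{(k)}) = \sum_{\pi \in B} e^{-\beta d_\alpha(\pi,\text{id})}$; dividing by $\text{per}(\bA_n) = \Z_n(\alpha,\beta)$ gives
\[
\frac{|\text{per}(\bA_n) - \text{per}(\bA_n^{(k)})|}{\text{per}(\bA_n)} = \P_{\alpha,\beta}(B).
\]
Thus it suffices to prove $\P_{\alpha,\beta}(B) \le \epsilon$.

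Finally, I would bound the tail event by a union bound over positions together with the geometric decay of Lemma~\ref{lm:Pij_decay}. Since there are at most two positions $j$ at each distance $d = |j-i|$,
\[
\P_{\alpha,\beta}(B) \le \sum_{i=1}^n \P_{\alpha,\beta}\bigl(|\pi(i)-i| > k\bigr) \le \sum_{i=1}^n \sum_{d > k} 2\,c(\alpha,\beta)^d = \frac{2n\,c(\alpha,\beta)^{k+1}}{1-c(\alpha,\beta)}.
\]
Writing $\bar c := \frac{2e^{-2\beta}}{1+e^{-2\beta}}$, the monotonicity bound $c(\alpha,\beta) \le c(1,\beta) \le \bar c$ from Lemma~\ref{lm:Pij_decay} together with the choice $k = \log(n/\epsilon)/\log(1/\bar c)$ gives $c(\alpha,\beta)^k \le \bar c^{\,k} = \epsilon/n$, which drives the right-hand side down to order $\epsilon$.

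The main obstacle is bookkeeping rather than conceptual: making the union-bound and geometric-series constants land on exactly $\epsilon$ instead of $\frac{2c(\alpha,\beta)}{1-c(\alpha,\beta)}\,\epsilon$. This is handled either by absorbing the bounded prefactor, or by adding an $n$-independent constant to $k$; the essential point—that the truncation error for both the distribution and the partition function equals the tail mass of the one-position marginals, which decays geometrically by Lemma~\ref{lm:Pij_decay}—is already in hand.
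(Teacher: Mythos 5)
Your proposal is correct and follows essentially the same route as the paper: both reduce the total-variation bound and the permanent error to the tail mass of the set of permutations with some displacement exceeding $k$ (your $\P_{\alpha,\beta}(B)$, the paper's $\P_{\alpha,\beta}(S_n^{(k)})$), and both control that mass by a union bound over positions combined with the geometric decay of Lemma~\ref{lm:Pij_decay}; your explicit identification of $\P_{\alpha,\beta}^{(k)}$ as $\P_{\alpha,\beta}(\cdot\mid G)$ is the same renormalization fact the paper uses implicitly when it observes $\P^{(k)}_{\alpha,\beta}(\pi)\geq \P_{\alpha,\beta}(\pi)$ off the bad set. The constant-factor slack you flag (landing on $\tfrac{2c}{1-c}\,\epsilon$ rather than $\epsilon$) is present in the paper's own proof as well, where the prefactor $\gamma$ from its claim is silently dropped, so your explicit acknowledgment and proposed fix make your write-up, if anything, more careful on this point.
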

To prove this result, we use Lemma~\ref{lm:Pij_decay} which shows that the probability of a permutation containing at least one element displaced by more than distance $k$ decays exponentially with $k$ (Lemma~\ref{lm:Pij_decay}). Therefore, if we choose $k=O(\log n)$, a union bound over all indices yields that the total variation distance between ${\P}_{\alpha,\beta}^{(k)}$ and $\P_{\alpha,\beta}$ is sufficiently small. 
Full proof appears in Appendix~\ref{sec: truncation proof}.

\subsection{Sampling from Truncated Distribution}\label{sec:samplingDP}
The final step is to efficiently sample a permutation \(\pi\) from the (truncated) distribution \({\P}_{\alpha,\beta}^{(k)}(\pi)\). 
We achieve this via dynamic programming (DP), structured in layers  that assign elements to positions.
Specifically, a DP state \(DP[i][s]\) tracks the cumulative weight of all partial permutations that have assigned the first \(i-1\) elements, where the state \(s\) indicates available columns. Recall that each element $i$ sampled from  ${\P}_{\alpha,\beta}^{(k)}(\pi)$ can be only matched to those \(\pm k\) around \(i\). Thus, the state \(s\) is a binary vector of length \(2k\), marking columns as available (0) or assigned (1). Transitions between layers assign the current element \(i\) to an available column, updating the state to mark this column as assigned, and weighting the transition by the corresponding entry from \({\bA}_n^{(k)}\) (see Figure~\ref{fig:dp}). 
The final DP state at layer  captures the total weight of all perfect matchings, equal to the permanent of  \({\bA}_n^{(k)}\).

Once the forward DP pass is complete, 
sampling can be done by reversing through DP states, choosing transitions with probability proportional to their DP-computed weights (marginal probabilities).
Detailed algorithms and rigorous analysis for DP updates, graph construction, and sampling appear in Appendices \ref{sec:dp construct} and \ref{sec:dp proof} (Algorithms \ref{alg:build_tag} and \ref{alg:sample_dp}).



\begin{lemma}\label{thm:DP}
For any given $n$ and $k$: Algorithm~\ref{alg:build_tag} constructs the DP table and $\text{per}({\bA}_n^{(k)})$ in time \( O\left( n\binom{2k}{k} \right).\) Then given the precomputed DP table, Algorithm~\ref{alg:sample_dp} samples each permutation from $\hat{\mu}_n^{(k)}$ in time \( O\left( nk \right) \).  
\end{lemma}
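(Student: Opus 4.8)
The plan is to bound separately the cost of the forward table construction (Algorithm~\ref{alg:build_tag}) and of one backward sampling pass (Algorithm~\ref{alg:sample_dp}), with the combinatorial heart being a tight count of the number of reachable DP states in each layer. I first fix the meaning of a layer and a state: process the elements $1,2,\dots,n$ in order, and let a layer-$i$ state record, for the sliding window of $2k$ columns $W_i=\{i-k,\dots,i+k-1\}$, which columns are already occupied by the placements of elements $1,\dots,i-1$. The key structural claim is that every reachable state has exactly $k$ of these $2k$ columns occupied, so each layer has at most $\binom{2k}{k}$ states.

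To prove that count I would establish two invariants that any partial assignment extendable to a full band-matching must satisfy: (a) every column $c\le i-k-1$ is occupied, since within the band $|r-c|\le k$ it can only be matched to elements $r\le c+k\le i-1$, all already processed; and (b) every column $c\ge i+k$ is empty, since it can only be matched to elements $r\ge c-k\ge i$, none yet processed. As elements $1,\dots,i-1$ occupy $i-1$ distinct columns and exactly $i-k-1$ of them lie strictly left of $W_i$, precisely $(i-1)-(i-k-1)=k$ occupied columns remain inside the window (boundary layers $i\le k$ or $i\ge n-k$ only shrink the count), giving the $\binom{2k}{k}$ bound.

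Next, for the construction time, I note there are $n$ layers, each with at most $\binom{2k}{k}$ states, and argue that the transition out of each state costs $O(1)$ amortized. The forcing structure is what makes this work: passing from layer $i$ to $i+1$, the leftmost column $i-k$ leaves the window and, by invariant (a) at layer $i+1$, must be occupied; hence if it is still empty then element $i$ is forced into it, while the entering column $i+k$ starts empty. Weighting each admissible transition by $\bA_n^{(k)}[i,c]$ and charging it to the target cell yields total work $O\!\left(n\binom{2k}{k}\right)$, and a straightforward induction shows the unique terminal state accumulates $\sum_{\pi}\prod_{i=1}^n \bA_n^{(k)}[i,\pi(i)] = \text{per}(\bA_n^{(k)})$, as asserted.

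Finally, for sampling I run the stored table backward: at layer $i$ the assignment of element $i$ is drawn among the available window columns with probability proportional to the transition weight times the DP weight of the resulting state, which is exactly the conditional completion probability, so the law of the sampled permutation is $\hat\mu_n^{(k)}$; each of the $n$ steps scans the $O(k)$ window columns once, giving $O(nk)$ per sample. I expect the main obstacle to be the transition bookkeeping in the construction bound — specifically, verifying that the admissible moves out of each state can be enumerated in $O(1)$ amortized time, so that no extra factor of $k$ sneaks into the $O\!\left(n\binom{2k}{k}\right)$ estimate; this is precisely what the sliding-window forcing argument is designed to secure, while the boundary layers and the exact encoding of the window as a $2k$-bit state are routine but must be tracked to keep the per-layer count at $\binom{2k}{k}$.
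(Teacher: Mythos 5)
Your state-space analysis (the two invariants forcing every column left of the window to be occupied and every column right of it to be empty, hence exactly $k$ occupied columns per reachable state), your induction giving $\mathrm{per}(A_n^{(k)})$ at the terminal state, and your $O(nk)$ backward-sampling analysis are all correct, and on those points you essentially reproduce the paper's argument (your invariants are in fact a cleaner justification of the paper's ``exactly $k$ ones per state'' encoding). The genuine gap is the claim that transitions can be enumerated in $O(1)$ amortized time per state. The forcing argument pins down element $i$'s placement only for those states in which the departing column $i-k$ is still empty. For every state whose leftmost bit is already $1$ --- and there are $\binom{2k-1}{k-1}\approx\tfrac12\binom{2k}{k}$ of them --- element $i$ may be assigned to any of the roughly $k+1$ available columns, so the bipartite transition graph between layers $i$ and $i+1$ has $\Theta\bigl(k\binom{2k}{k}\bigr)$ edges. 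Algorithm~\ref{alg:build_tag} must touch every one of these edges to accumulate the sums defining $DP[i+1][s']$, and no charging scheme can run faster than the number of edges it processes; charging edges to their target cells does not help, because in-degrees are also $\Theta(k)$ (a target state can be reached by un-assigning any of up to $k+1$ of its occupied columns), not $O(1)$. So your construction bound $O\bigl(n\binom{2k}{k}\bigr)$ is not established by this argument.

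What your route actually yields is $O\bigl(nk\binom{2k}{k}\bigr)$ --- and, notably, that is exactly what the paper's own appendix proof derives (``each state has at most $k$ predecessors \dots\ the total time to build the DP table is $O\bigl(n\binom{2k}{k}k\bigr)$''); the lemma statement drops this factor of $k$ relative to the paper's own proof, and since $k\binom{2k}{k}$ is not $O\bigl(\binom{2k}{k}\bigr)$, the literal statement is not what either proof establishes. The honest fix is to keep the extra factor of $k$, or equivalently to state the bound as $O\bigl(n\,4^k\sqrt{k}\bigr)$ via $\binom{2k}{k}\le 4^k/\sqrt{\pi k}$, as the paper does; trying to remove the $k$ by amortization cannot succeed for this edge-enumerating dynamic program.
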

Combining Lemma~\ref{thm:DP} with the truncation guarantee in Section~\ref{sec:truncate approx}, we obtain the PTAS described in Theorem~\ref{thm:main sampling}. Specifically, for $k=O(\log(n/\epsilon))$   the total variation distance between the true Mallow's distribution and the truncated model is at most $\epsilon$.
Under this setting, the preprocessing step takes time $O(\epsilon^{-C}n^{1+C})$, where $C = 2 \log((1+e^{-2\beta})/2e^{-2\beta})$ (constant derived from Lemma~\ref{lm:tv_distance}). After this setup, each sample can be drawn in $O(n\log(n/\epsilon))$ time. Together, these results yield a PTAS for both partition function estimation and sampling.
\begin{figure}
    \centering
    \includegraphics[width=0.7\linewidth]{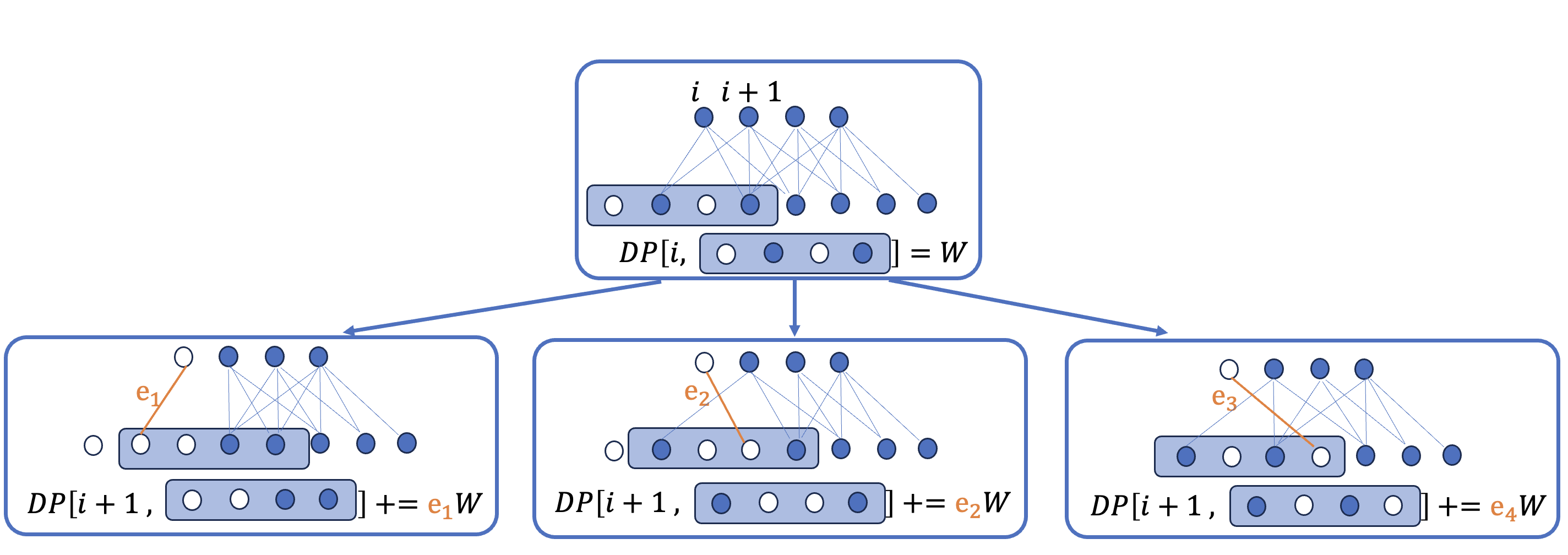}
    \caption{DP state transitions showing available (white) and assigned (blue) columns within bandwidth ($k=2$). Matched edges for $i$ are shown in orange; DP values represent weights of partial matchings.}
    \label{fig:dp}
\end{figure}

\section{Experiments}\label{sec:empiric}

We empirically evaluate our methods with two primary goals:  (1) assessing the predictive performance of our model compared to classical ranking models on real-world and synthetic datasets; and (2) validating our approximation scheme for the partition function \(Z_n(\beta,\alpha)\) and the sampling algorithm.

\textbf{Reproducibility and Computational Resources.}
Our implementation, which also includes tools for sampling and learning from $L_\alpha$ Mallows models, is available at \cite{asgari2025code}. The MLE evaluation experiments were conducted on a server with dual AMD EPYC 7763 64-Core processors and 1.5 TB of RAM. Sampling experiments were run on an Intel Core i7 CPU with 16 GB RAM.


\subsection{Predictive Performance of the {$\boldsymbol{L}_{\boldsymbol{\alpha}}$-}Mallow's Model}\label{sec:experimetins_MLE}


\paragraph{Datasets.} We evaluate our methods on the following datasets:
\\ \textbf{American college football rankings:}  
    We use weekly power rankings of college football teams from the 2019 to 2021 seasons, sourced from \cite{masseyratings_rankings}, providing over 4,100 distinct observed rankings across various expert sources.  These rankings—known as \emph{power rankings}—are ordinal lists produced by independent media outlets and analysts. Each ranking reflects an expert's assessment of team strength during a particular week, typically incorporating factors such as win–loss records, strength of schedule, point differentials, and recent performance. Unlike standings, which are derived purely from match outcomes, power rankings aim to capture an underlying strength signal.
    We focus on learning the top 10 teams identified in \cite{bleacherreport2022apcfp}: Georgia, Alabama, Michigan, Cincinnati, Baylor, Ohio State, Oklahoma State, Notre Dame, Michigan State, and Oklahoma. To evaluate scalability, we also conduct experiments on a larger set of the top 100 most frequently ranked teams.
    \\ \textbf{College basketball rankings:}  
  We similarly use weekly power rankings from the 2020 and 2021 college basketball seasons, also sourced from \cite{masseyratings_rankings}, resulting in over 1,700 observed rankings. 
    We further, select the top 10 teams as reported in \cite{collegepollarchive2021ap}: Gonzaga, Illinois, Baylor, Michigan, Alabama, Houston, Ohio State, Iowa, Texas, and Arkansas. For large-scale evaluation, we additionally sample a set of 100 teams at random.
    \\ \textbf{\textsc{Sushi} preference rankings:}  
    We also include the \textsc{Sushi} preference dataset from \cite{sushi2005dataset}, which contains 5,000 human-annotated rankings over 10 different types of \textsc{Sushi}.\\\textit{Synthetic data:} We generate rankings using our sampler (Algorithm~\ref{alg:sample_dp}). The parameters are set to $\alpha = 1.5$, $\beta = 0.5$, with $n = 15$ items, and the truncation of  $k = 9$ is used for sample generation.

\paragraph{Data Splitting Strategy.}
To avoid temporal leakage in the sports datasets, we train on earlier rankings and hold out the final sixth of the data for evaluation. We repeat this sampling procedure independently over 50 trials to derive confidence intervals. In each of 50 trials, we randomly sample 700 training and 150 testing examples for basketball, and 800 training and 250 testing examples for football.
For the \textsc{Sushi} dataset, we conduct 25 independent trials, each with a random  $80\%$- $20\%$  training-testing split. We report mean and standard deviation across trials.

\paragraph{Baselines.}
To highlight the performance of our model, we compare against two classical models:
    \textit{Plackett--Luce (PL) model~\cite{plackett1975analysis}:} 
    The PL model is a classical ranking model that assumes rankings are generated sequentially. At each stage, the probability of choosing item~$i$ for the next available position is proportional to a parameter~$\theta_i$. Once selected, the item is removed from the pool, and the process continues with the remaining items. Formally,
    \[
    \mathbb{P}_{\theta}^{\mathrm{PL}}(\pi) = \frac{\prod_{i=1}^{n} \theta_{\pi(i)}}{\prod_{i=1}^{n} \left( \sum_{j=i}^n \theta_{\pi(j)} \right)}.
    \]

    \textit{Mallow's $\tau$ (Kendall's tau distance):} 
    Mallow's $\tau$ assumes the underlying distance metric is
    the Kendall's tau distance~$d_{\tau}(\pi, \sigma)$ \cite{kendall1938new}, which counts the number of pairwise disagreements (inversions) between $\pi$ and $\sigma$:
    \[
    d_{\tau}(\pi, \sigma) = 
    \left| \left\{ (i, j) : i < j,\; (\pi(i) < \pi(j)) \ne (\sigma(i) < \sigma(j)) \right\} \right|.
    \]
We defer the details of implementation to the Appendix~\ref{exp-app}.

\paragraph{Metrics.}
We measure predictive accuracy via the following metrics: (1) \textit{Top-1/top-5 hit rates}: Probability the test ranking's top item appears in the top-1/top-5 predicted ranks  (2) \textit{Spearman's $\rho$}: rank correlation based on squared rank differences; (3) \textit{Kendall's $\tau$}: correlation based on pairwise agreements; (4) \textit{Hamming distance}: fraction of differing positions; (5) \textit{Pairwise accuracy}: agreement ratio of item pairs' relative ordering. 
Arrows next to each metric indicate whether higher ($\uparrow$) or lower ($\downarrow$) values correspond to better performance.
Full description of these metrics appears in Appendix~\ref{exp-app}.


\paragraph{MLE Evaluation on Real-World Datasets.}
Tables~\ref{tab:mle_comparison_football},~\ref{tab:mle_comparison_basketball}, and~\ref{tab:mle_comparison_sushi} present results for football, basketball, and \textsc{Sushi} datasets, respectively. The best results are bold-faced, showing that the \(L_\alpha\)-Mallows model consistently outperforms the Plackett-Luce model and Mallow's $\tau$ model over several metrics on all datasets.
Notably, the basketball dataset results (Table~\ref{tab:mle_comparison_basketball}) highlight particularly poor performance of the Plackett–Luce model, indicated by negative correlation values, possibly due to overfitting. The better generalization of the $L_\alpha$ Mallows model is reflected in significantly higher predictive accuracy. 


The parameters estimated from the \(L_\alpha\)-Mallows model offer meaningful insights into ranking behavior. The dispersion parameter (\(\beta\)) captures stability—lower values imply frequent ranking shifts, while higher values indicate stability. The learned distance parameter quantifies ranking dynamics, highlighting deviations of the optimal model from traditional models like Kendall's $\tau$ and Spearman's $\rho$ (e.g. $\alpha=2$).
The higher estimated $\alpha \approx 1.1$ indicates a lower likelihood of long-range swaps (e.g., a team ranked 10th defeating a top-ranked team) compared to football, which has a smaller $\alpha \approx 0.4$, allowing more frequent long-range swaps.

\begin{table}[h!]
\centering
\setlength{\tabcolsep}{8pt}
\begin{tabular}{|l|c|c|c|}
\hline
 & \textbf{$\boldsymbol{L}_{\boldsymbol{\alpha}}$-Mallow} & {Mallow's $\tau$} & {Plackett--Luce} \\
\hline
Estimated $\alpha$ & 0.442 \;($\pm$ 0.061) & -- & -- \\
Estimated $\beta$  & 0.455 \;($\pm$ 0.051) & -- & -- \\
\hline
$\uparrow$Spearman's $\rho$ correlation      & \textbf{0.094} \;($\pm$ 0.010) & 0.070 \;($\pm$ 0.016) & 0.093 \;($\pm$ 0.009) \\
$\uparrow$Kendall's $\tau$ correlation       & \textbf{0.070} \;($\pm$ 0.007) & 0.052 \;($\pm$ 0.011) & 0.065 \;($\pm$ 0.007) \\
$\downarrow$Hamming distance                 & \textbf{0.888} \;($\pm$ 0.005) & 0.892 \;($\pm$ 0.002) & 0.920 \;($\pm$ 0.001) \\
$\uparrow$Pairwise accuracy (\%)             & \textbf{53.5} \;($\pm$ 0.4)    & 52.6 \;($\pm$ 0.6)    & 53.3 \;($\pm$ 0.3)    \\
$\uparrow$Top-1 hit rate (\%)                & \textbf{8.0}  \;($\pm$ 0.1)    & 6.9  \;($\pm$ 0.7)    & 3.5  \;($\pm$ 0.1)    \\
$\uparrow$Top-5 hit rate (\%)                & \textbf{41.9} \;($\pm$ 0.5)    & 41.1 \;($\pm$ 2.9)    & 30.5 \;($\pm$ 0.7)    \\
\hline
\end{tabular}
\caption{College football dataset, model out-of-sample 
 performance averaged over 50 independent trials (mean $\pm$ standard deviation). }
\label{tab:mle_comparison_football}
\end{table}

\begin{table}[h!]
\centering
\setlength{\tabcolsep}{8pt}
\begin{tabular}{|l|c|c|c|}
\hline
 & \textbf{$\boldsymbol{L}_{\boldsymbol{\alpha}}$-Mallow} &  {Mallow's $\tau$} & {Plackett--Luce} \\
\hline
Estimated $\alpha$ & 1.096 \;($\pm$ 0.056) & -- & -- \\
Estimated $\beta$  & 0.178 \;($\pm$ 0.019) & -- & -- \\
\hline
$\uparrow$Spearman's $\rho$ correlation   & \textbf{0.269} \;($\pm$ 0.005) & 0.174 \;($\pm$ 0.011) & $-0.020$ \;($\pm$ 0.009) \\
$\uparrow$Kendall's $\tau$ correlation    & \textbf{0.199} \;($\pm$ 0.004) & 0.128 \;($\pm$ 0.009) & $-0.010$ \;($\pm$ 0.007) \\
$\downarrow$Hamming distance              & \textbf{0.872} \;($\pm$ 0.001) & 0.880 \;($\pm$ 0.001) & 0.919 \;($\pm$ 0.001) \\
$\uparrow$Pairwise accuracy (\%)          & \textbf{59.9} \;($\pm$ 0.2)    & 56.4 \;($\pm$ 0.4)    & 49.5 \;($\pm$ 0.3)    \\
$\uparrow$Top-1 hit rate (\%)             & \textbf{22.7} \;($\pm$ 0.6)    & 17.5 \;($\pm$ 0.3)    & 4.9  \;($\pm$ 0.3)    \\
$\uparrow$Top-5 hit rate (\%)             & \textbf{77.7} \;($\pm$ 1.0)    & 68.4 \;($\pm$ 0.5)    & 57.7 \;($\pm$ 1.6)    \\
\hline
\end{tabular}
\caption{College basketball dataset, model out-of-sample performance averaged over 50 independent trials (mean $\pm$ standard deviation).}
\label{tab:mle_comparison_basketball}
\end{table}

\begin{table}[h!]
\centering
\setlength{\tabcolsep}{8pt}
\begin{tabular}{|l|c|c|c|}
\hline
 & \textbf{$\bold{L}_{\boldsymbol{\alpha}}$-Mallow} &  {Mallow's $\tau$} & {Plackett-Luce} \\
\hline
Estimated $\alpha$ & 0.764 \;($\pm$ 0.378) & -- & -- \\
Estimated $\beta$  & 0.159 \;($\pm$ 0.172) & -- & -- \\
\hline
$\uparrow$ Spearman's $\rho$ correlation   & \textbf{0.043} \;($\pm$ 0.006) & 0.031 \;($\pm$ 0.012) & -0.050 \;($\pm$ 0.003) \\
$\uparrow $Kendall's $\tau$ correlation    & \textbf{0.031} \;($\pm$ 0.005) & 0.023 \;($\pm$ 0.009) & -0.037 \;($\pm$ 0.002) \\
$\downarrow$Hamming distance  & \textbf{0.893} \;($\pm$ 0.001) & {0.897} \;($\pm$ 0.001) & 0.916 \;($\pm$ 0.000) \\
$\uparrow $Pairwise accuracy (\%)          & \textbf{51.6} \;($\pm$ 0.2)    & 51.1 \;($\pm$ 0.4)    & {48.2} \;($\pm$ 0.1)    \\
$\uparrow$Top-1 hit rate (\%)            & 10.2 \;($\pm$ 0.6)    & \textbf{10.6} \;($\pm$ 0.6)    & {9.3} \;($\pm$ 0.1)     \\
$ \uparrow$Top-5 hit rate (\%)            & \textbf{52.0} \;($\pm$ 1.0)    & 
51.7 \;($\pm$ 1.6)    & {49.5} \;($\pm$ 0.3)    \\
\hline
\end{tabular}
\caption{\textsc{Sushi} dataset, model out-of-sample performance averaged over 25 independent trials (mean $\pm$ standard deviation).}
\label{tab:mle_comparison_sushi}
\end{table}

\textbf{Large $\boldsymbol{n}$ Regime: MLE Evaluation on 100-Team College Sports Rankings.}
Our learning algorithm  can efficiently scale to rankings over a large number of items (see the theoretical guarantees in \ref{thm:main sampling}). To evaluate the robustness and scalability of our model, we compared the performance of the $L_\alpha$ Mallow model against Kendall’s $\tau$-based model and Placket–Luce model on college sports datasets, this time considering $100$ teams.

For the football dataset, teams were selected with the highest participation rates to ensure a sufficient number of full rankings remained. For the basketball dataset, $100$ teams were selected at random. The results for both datasets are reported in Table~\ref{tab:mle_comparison_basketball_100_teams} and Table~\ref{tab:mle_comparison_football_100_teams}, respectively. All experiments were conducted with truncation size $k=7$.

Tables~\ref{tab:mle_comparison_basketball_100_teams} and \ref{tab:mle_comparison_football_100_teams} report out-of-sample performance averaged over 50 independent trials. Across both datasets, the $L_\alpha$-Mallows model significantly outperforms classical baselines (Mallows-$\tau$, Plackett--Luce) across all metrics. The flexibility of choosing the distance metric is particularly evident as the number of teams increases in both datasets, the $L_\alpha$ Mallow model achieves more than fivefold improvement in top-1 hit rate compared to the baselines, significantly outperforming both Kendall’s $\tau$ and Placket–Luce models.

\begin{table}[h!]
\centering
\setlength{\tabcolsep}{8pt}
\begin{tabular}{|l|c|c|c|}
\hline
 & \textbf{$\boldsymbol{L}_{\boldsymbol{\alpha}}$-Mallow} &  {Mallow's $\tau$} & {Plackett--Luce} \\
\hline
 Estimated $\alpha$      & 0.002 ($\pm$ 0.002)  & --               & --               \\
 Estimated $\beta$       & 0.504 ($\pm$ 0.023)  & --               & --  
 \\
 \hline
$\uparrow$Spearman's $\rho$ correlation    & \textbf{0.759} ($\pm$ 0.006)  & 0.478 ($\pm$ 0.006)  & 0.373 ($\pm$ 0.006)  \\
$\uparrow$Kendall's $\tau$ correlation    & \textbf{0.564} ($\pm$ 0.005)  & 0.328 ($\pm$ 0.005)  & 0.253 ($\pm$ 0.004)  \\
$\downarrow$Hamming distance     & \textbf{0.973} ($\pm$ 0.000)  & 0.984 ($\pm$ 0.000)  & 0.986 ($\pm$ 0.000)  \\
$\uparrow$Pairwise accuracy (\%)   & \textbf{96.378 }($\pm$ 0.100) & 91.754 ($\pm$ 0.152) & 90.298 ($\pm$ 0.138) \\
$\uparrow$Top-1 hit rate (\%)     & \textbf{12.873} ($\pm$ 0.941) & 2.212 ($\pm$ 0.807)  & 1.749 ($\pm$ 0.250)  \\
$\uparrow$Top-5 hit rate (\%)      & \textbf{56.465 }($\pm$ 2.315) & 10.857 ($\pm$ 1.401) & 7.205 ($\pm$ 0.605)  \\
\hline
\end{tabular}
\caption{College basketball dataset for 100 teams, model out-of-sample performance averaged over 50 independent trials (mean $\pm$ standard deviation).}
\label{tab:mle_comparison_basketball_100_teams}
\end{table}

\begin{table}[h!]
\centering
\setlength{\tabcolsep}{8pt}
\begin{tabular}{|l|c|c|c|}
\hline
 & \textbf{$\boldsymbol{L}_{\boldsymbol{\alpha}}$-Mallow} &  {Mallow's $\tau$} & {Plackett--Luce} \\
\hline
 Estimated $\alpha$      & 0.003 ($\pm$ 0.002)  & --               & --               \\
 Estimated $\beta$       & 0.516 ($\pm$ 0.030)  & --               & --               \\
 \hline
$\uparrow$Spearman's $\rho$ correlation      & \textbf{0.454 }($\pm$ 0.006)  & 0.387 ($\pm$ 0.007)  & 0.138 ($\pm$ 0.005)  \\
$\uparrow$Kendall's $\tau$ correlation  & \textbf{0.318} ($\pm$ 0.004)  & 0.264 ($\pm$ 0.005)  & 0.093 ($\pm$ 0.004)  \\
$\downarrow$Hamming distance    & \textbf{0.981} ($\pm$ 0.000)  & 0.986 ($\pm$ 0.000)  & 0.989 ($\pm$ 0.000)  \\
$\uparrow$Pairwise accuracy (\%)  &\textbf{ 91.163} ($\pm$ 0.165) & 88.134 ($\pm$ 0.178) & 86.944 ($\pm$ 0.133) \\
$\uparrow$Top-1 hit rate (\%) & \textbf{2.057} ($\pm$ 0.535)  & 0.294 ($\pm$ 0.399)  & 1.524 ($\pm$ 0.899)  \\
$\uparrow$Top-5 hit rate (\%) & \textbf{23.590} ($\pm$ 3.979) & 2.386 ($\pm$ 1.240)  & 7.097 ($\pm$ 1.154)  \\
\hline
\end{tabular}
\caption{College football dataset for 100 teams, model out-of-sample performance averaged over 50 independent trials (mean $\pm$ standard deviation).}
\label{tab:mle_comparison_football_100_teams}
\end{table}


Notably, the estimated distance parameter ($\alpha$) remains close to zero in both datasets, suggesting that the model adaptively flattens the penalty for long-range swaps when necessary, capturing the heterogeneity in upset dynamics across sports. This reinforces the utility of learning $\alpha$ from data rather than fixing it a priori.


\textbf{Effect of the Truncation size.}\\
\noindent
We demonstrate the exponential decay of the $\ell_2$ estimation error on synthetic data. Specifically, we generate $50$ training samples over $15$ items using our sampling method with truncation order $k = 9$ and parameters $\alpha_0 = 1$ and $\beta_0 = 1$. We then train our model using varying truncation levels and report the mean and standard deviation of the estimation error in Figure~\ref{fig:estimation_error_vs_k}. Notably, even though the training data was generated with a truncation of order $9$, using a truncation as small as $k = 5$ yields stable and accurate estimates, highlighting the robustness of the model.

\begin{figure}[h]
    \centering
    \includegraphics[width=0.45\linewidth]{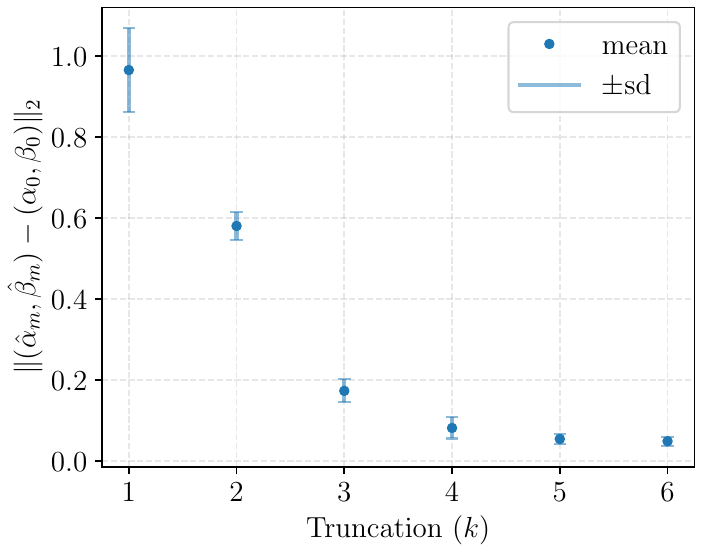}
    \caption{
        Estimation error versus the truncation size used for training $L_\alpha$ Mallow's model.
        Each point shows the $\ell_2$-test error, averaged across 25 independent trials.
        As $k$ increases, the model receives exponentially more information, leading to an exponential decay rate of the estimation error.
    }
    \label{fig:estimation_error_vs_k}
\end{figure}

\paragraph{Synthetic Validation.} 
We further establish the accuracy and robustness of MLE on the synthetic dataset.  Specifically, we train our model using samples generated by the truncated sampling algorithm (Algorithm~\ref{alg:sample_dp}) with a smaller truncation parameter ($k=6$) compared to the truncation parameter used to generate the original synthetic data ($k=9$). Despite this deliberate mismatch, Figure~\ref{fig:estimation_error} illustrates that our MLE procedure accurately recovers the true underlying parameters $\alpha$ and $\beta$, highlighting the robustness of our estimation method to the truncation choice.


\begin{figure}[h!] 
\centering
\includegraphics[width=0.4\textwidth]{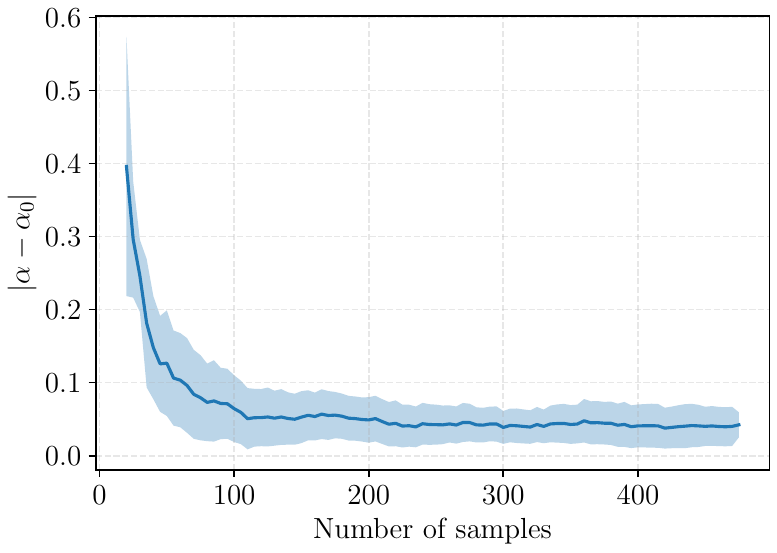}%
\hspace{0.01\textwidth}%
\includegraphics[width=0.4\textwidth]{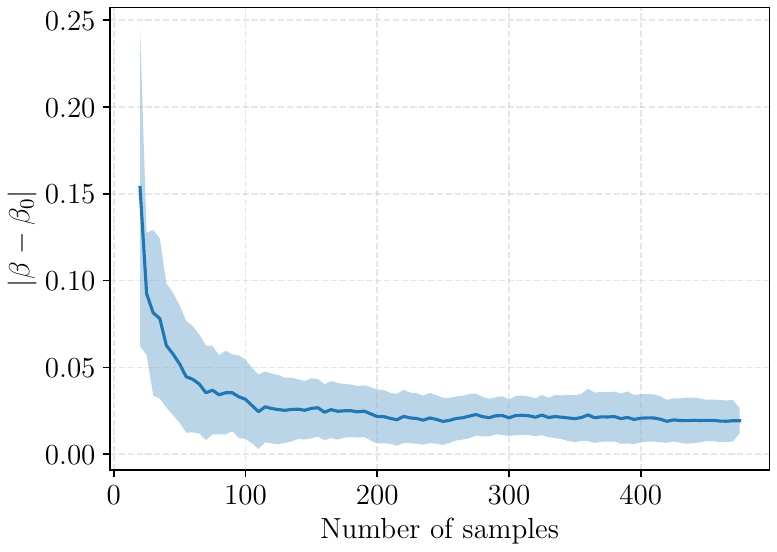}
\caption{The estimation error of $\alpha$ (left) and $\beta$ (right), for different training sizes over the synthetic data with $\alpha_0=1.5$ and $\beta_0=0.5$. The shaded area represents the standard deviation, computed over $25$ independent trials. Our results show negligible estimation bias, despite using  the truncation algorithm.}
\label{fig:estimation_error}
\end{figure}
The synthetic experiments, where the ground truth parameters are known, confirm the accuracy and robustness of our parameter recovery.

Overall, our experiments validate that the  \(L_\alpha\)-Mallows model provides superior predictive accuracy, meaningful parameter interpretability, and computational efficiency, positioning it as an attractive method for ranking analysis across diverse datasets.

\subsection{Validation of the Sampling Algorithm}
\label{sec:experiment_sampling}
\textbf{Partition function.} We first validate the accuracy of our approximation for the partition function \(Z_n(\alpha, \beta)\). We quantify approximation quality by measuring the relative error:
   $ \frac{|Z_n(\alpha, \beta) - \hat{Z}_n(\alpha, \beta)|}{Z_n(\alpha, \beta)},$
across different values of  \(n\), with a fixed truncation parameter \(k=3\). 
Additionally, we report the efficiency of our estimation by comparing our approximation's runtime against the exact computation, which computes the partition function constant exactly. We use Ryser's algorithm~\cite{ryser1963combinatorial}
, which is regarded as one of the most efficient permanent computation algorithms. Despite using Ryser's algorithm, we still observe the overwhelming difference in the efficiency of our approximation, while only enduring negligible error. 
Table~\ref{tab:permanent_error} summarizes our findings. We consistently observe small relative errors (on the order of \(10^{-4}\)), validating the high accuracy of our approximation. More importantly, our approximation significantly reduces computation time, especially as \(n\) increases.
Figure~\ref{fig:runtime_comparison} provides a visual comparison of runtime ratios and relative errors across different \(\alpha\) and \(\beta\). 
\begin{table}[h!]
    \centering
    \setlength{\tabcolsep}{7pt}  
    \begin{tabular}{{|c|c|c|c|c|}}
        \hline
        \(n\) &  Relative error & Algorithm~\ref{alg:build_tag} & Exact \texttt{computation}  & Time ratio \\
        & & time (s) & time (s) & (Exact/Alg.~\ref{alg:build_tag} ) \\
        \hline
        6  & $(1.25 \pm 0.03)\times 10^{-4}$ & $0.011 \; (\pm 0.002)$ & $0.355 \; (\pm 0.012)$ & $\textbf{32.0} \;
        (\pm \textbf{5.9}$) \\ 
        8  & $(2.10 \pm 0.05)\times 10^{-4}$ & $0.027 \; (\pm 0.003)$ & $32.25 \;(\pm 0.98)$ & $\textbf{1194.5} \;
        (\pm \textbf{135.8})$ \\
        10 & $(2.94 \pm 0.06)\times 10^{-4}$ & $0.057 \; (\pm 0.004)$ & $6471.3 \;(\pm 220.25)$ & $\textbf{113531.6} \;(\pm \textbf{8166.4})$ \\
        \hline
    \end{tabular}
    \caption{Relative error and runtime comparison between our approximation method (truncation order \(k = 3\)) and exact partition function computation via Ryser's algorithm, with parameters \(\alpha = 1\) and \(\beta = 2\). Results are averaged over 1000 trials, and 95\% confidence intervals are reported.}
    \label{tab:permanent_error}
\end{table}

\begin{figure}[h!]
    \centering
    \includegraphics[width=0.9\textwidth]{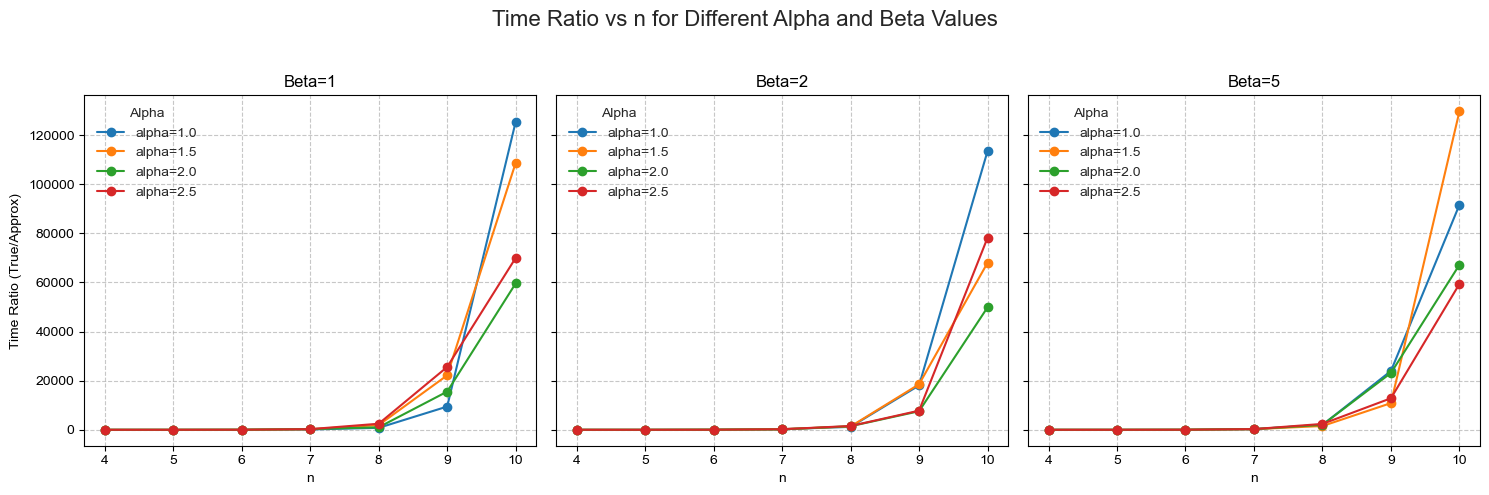}
    \caption{Runtime comparison between the permanent approximation (\(k=3\)) and the full computation for varying values of \(n\).}
    \label{fig:runtime_comparison}
\end{figure}

\paragraph{Limitations and Future Work.}
Our results highlight several promising directions, yet they also point clearly to opportunities for future work. A fundamental theoretical limitation lies in our efficient sampling algorithm, which currently requires a  (\(\alpha \geq 1\)). Developing computationally efficient sampling for the broader class of distance functions with \(\alpha < 1\) remains open. On the practical side, extending the empirical scope beyond sports rankings—potentially examining applications in recommendation systems, voting scenarios, or other domains—represents an exciting and natural next step, allowing further validation of the generalized \(L_\alpha\)-Mallows framework.

\section*{Acknowledgment}
We thank Persi Diaconis for introducing us to the Mallows model and the related literature on ranking. We are also grateful to Amin Saberi for helpful discussions and guidance throughout the development of this work.
\newpage
\bibliographystyle{plain} 
\bibliography{ref} 

\begin{thebibliography}{10}

\bibitem{collegepollarchive2021ap}
College~Poll Archive.
\newblock 2021 final ap men's basketball poll, 2025.
\newblock Accessed: 2025-05-15.

\bibitem{asgari2025code}
Kiana Asgari and Yeganeh Alimohammadi.
\newblock Generalized mallow learning.
\newblock \url{https://github.com/Kiana-Asgari/Generalized_Mallow_Learning},
  2025.
\newblock Accessed: 2025-05-22.

\bibitem{awasthi2014learning}
Pranjal Awasthi, Avrim Blum, Or~Sheffet, and Aravindan Vijayaraghavan.
\newblock Learning mixtures of ranking models.
\newblock {\em Advances in Neural Information Processing Systems}, 27, 2014.

\bibitem{bartholdi1989voting}
John Bartholdi, Craig~A Tovey, and Michael~A Trick.
\newblock Voting schemes for which it can be difficult to tell who won the
  election.
\newblock {\em Social Choice and welfare}, 6:157--165, 1989.

\bibitem{bezakova2008accelerating}
Ivona Bez{\'a}kov{\'a}, Daniel {\v{S}}tefankovi{\v{c}}, Vijay~V Vazirani, and
  Eric Vigoda.
\newblock Accelerating simulated annealing for the permanent and combinatorial
  counting problems.
\newblock {\em SIAM Journal on Computing}, 37(5):1429--1454, 2008.

\bibitem{bradley1976rank}
Ralph~Allan Bradley and Milton~E. Terry.
\newblock Rank analysis of incomplete block designs: The method of paired
  comparisons.
\newblock {\em Biometrika}, 39(3/4):324--345, 1976.

\bibitem{braverman2009sorting}
Mark Braverman and Elchanan Mossel.
\newblock Sorting from noisy information.
\newblock {\em arXiv preprint arXiv:0910.1191}, 2009.

\bibitem{busa2019optimal}
R{\'o}bert Busa-Fekete, Dimitris Fotakis, Bal{\'a}zs Sz{\"o}r{\'e}nyi, and
  Manolis Zampetakis.
\newblock Optimal learning of mallows block model.
\newblock In {\em Conference on learning theory}, pages 529--532. PMLR, 2019.

\bibitem{chen2024mallows}
Haoxian Chen, Hanyang Zhao, Henry Lam, David Yao, and Wenpin Tang.
\newblock Mallows-dpo: Fine-tune your llm with preference dispersions.
\newblock {\em International Conference on Learning Representations, (ICLR)},
  2025.

\bibitem{desir2016assortment}
Antoine D{\'e}sir, Vineet Goyal, Srikanth Jagabathula, and Danny Segev.
\newblock Assortment optimization under the mallows model.
\newblock {\em Advances in Neural Information Processing Systems}, 29, 2016.

\bibitem{desir2021mallows}
Antoine D{\'e}sir, Vineet Goyal, Srikanth Jagabathula, and Danny Segev.
\newblock Mallows-smoothed distribution over rankings approach for modeling
  choice.
\newblock {\em Operations Research}, 69(4):1206--1227, 2021.

\bibitem{diaconis1988group}
Persi Diaconis.
\newblock Group representations in probability and statistics.
\newblock {\em Lecture notes-monograph series}, 11:i--192, 1988.

\bibitem{doignon2004repeated}
Jean-Paul Doignon, Aleksandar Peke{\v{c}}, and Michel Regenwetter.
\newblock The repeated insertion model for rankings: Missing link between two
  subset choice models.
\newblock {\em Psychometrika}, 69(1):33--54, 2004.

\bibitem{duan2010approximating}
Ran Duan and Seth Pettie.
\newblock Approximating maximum weight matching in near-linear time.
\newblock In {\em 2010 IEEE 51st Annual Symposium on Foundations of Computer
  Science}, pages 673--682. IEEE, 2010.

\bibitem{feng2022mallows}
Yifan Feng and Yuxuan Tang.
\newblock On a mallows-type model for (ranked) choices.
\newblock {\em Advances in Neural Information Processing Systems},
  35:3052--3065, 2022.

\bibitem{fligner1986distance}
Michael~A. Fligner and James~S. Verducci.
\newblock Distance based ranking models.
\newblock {\em Journal of the Royal Statistical Society: Series B
  (Methodological)}, 48(3):359--369, 1986.

\bibitem{geyer1992constrained}
Charles~J Geyer and Elizabeth~A Thompson.
\newblock Constrained monte carlo maximum likelihood for dependent data.
\newblock {\em Journal of the Royal Statistical Society: Series B
  (Methodological)}, 54(3):657--683, 1992.

\bibitem{irurozki2014sampling}
E~Irurozki, B~Calvo, and A~Lozano.
\newblock Sampling and learning the mallows and generalized mallows models
  under the hamming distance.
\newblock {\em Bernoulli (submitted)}, 18:19, 2014.

\bibitem{Jerrum:2004:PAA:1008731.1008738}
Mark Jerrum, Alistair Sinclair, and Eric Vigoda.
\newblock A polynomial-time approximation algorithm for the permanent of a
  matrix with nonnegative entries.
\newblock {\em J. ACM}, 51(4):671–697, 2004.

\bibitem{sushi2005dataset}
Toshihiro Kamishima.
\newblock Sushi preference dataset.
\newblock \url{http://www.kamishima.net/sushi/}, 2005.

\bibitem{kendall1938new}
Maurice~G Kendall.
\newblock A new measure of rank correlation.
\newblock {\em Biometrika}, 30(1-2):81--93, 1938.

\bibitem{kleinberg2021algorithmic}
Jon Kleinberg and Manish Raghavan.
\newblock Algorithmic monoculture and social welfare.
\newblock {\em Proceedings of the National Academy of Sciences},
  118(22):e2018340118, 2021.

\bibitem{lebanon2007non}
Guy Lebanon and Yi~Mao.
\newblock Non-parametric modeling of partially ranked data.
\newblock {\em Advances in neural information processing systems}, 20, 2007.

\bibitem{liu2018efficiently}
Allen Liu and Ankur Moitra.
\newblock Efficiently learning mixtures of mallows models.
\newblock In {\em 2018 IEEE 59th Annual Symposium on Foundations of Computer
  Science (FOCS)}, pages 627--638. IEEE, 2018.

\bibitem{liu1989limited}
Dong~C. Liu and Jorge Nocedal.
\newblock On the limited memory bfgs method for large scale optimization.
\newblock {\em Mathematical Programming}, 45(1-3):503--528, 1989.

\bibitem{lu2014effective}
Tyler Lu and Craig Boutilier.
\newblock Effective sampling and learning for mallows models with
  pairwise-preference data.
\newblock {\em J. Mach. Learn. Res.}, 15(1):3783--3829, 2014.

\bibitem{luce1959individual}
R~Duncan Luce.
\newblock {\em Individual choice behavior}, volume~4.
\newblock Wiley New York, 1959.

\bibitem{gumbelmax}
Chris Maddison, Daniel Tarlow, and Tom Minka.
\newblock A* sampling, 2014.

\bibitem{mallows1957non}
Colin~L. Mallows.
\newblock Non-null ranking models. i.
\newblock {\em Biometrika}, 44(1--2):114--130, 1957.

\bibitem{mao2022learning}
Cheng Mao and Yihong Wu.
\newblock Learning mixtures of permutations: Groups of pairwise comparisons and
  combinatorial method of moments.
\newblock {\em The Annals of Statistics}, 50(4):2231--2255, 2022.

\bibitem{meilua2022recursive}
Marina Meil{\u{a}}, Annelise Wagner, and Christopher Meek.
\newblock Recursive inversion models for permutations.
\newblock {\em Statistics and Computing}, 32(4):54, 2022.

\bibitem{mukherjee2016estimation}
Sumit Mukherjee.
\newblock Estimation in exponential families on permutations.
\newblock {\em The Annals of Statistics}, pages 853--875, 2016.

\bibitem{mukherjee2023inference}
Sumit Mukherjee and Daiki Tagami.
\newblock Inference on a class of exponential families on permutations.
\newblock {\em arXiv preprint arXiv:2304.04074}, 2023.

\bibitem{newman2020fpras}
James~E Newman and Moshe~Y Vardi.
\newblock Fpras approximation of the matrix permanent in practice.
\newblock {\em arXiv e-prints}, page : 2012.03367, 2020.

\bibitem{plackett1975analysis}
Robin~L Plackett.
\newblock The analysis of permutations.
\newblock {\em Journal of the Royal Statistical Society Series C: Applied
  Statistics}, 24(2):193--202, 1975.

\bibitem{rafailov2024direct}
Rafael Rafailov, Archit Sharma, Eric Mitchell, Christopher~D Manning, Stefano
  Ermon, and Chelsea Finn.
\newblock Direct preference optimization: Your language model is secretly a
  reward model.
\newblock {\em Advances in Neural Information Processing Systems}, 36, 2024.

\bibitem{ranking2010label}
Label Ranking.
\newblock Label ranking methods based on the plackett-luce model.
\newblock {\em Poster presented at ICML}, page~27, 2010.

\bibitem{masseyratings_rankings}
Massey Ratings.
\newblock College football/basketball/baseball rankings, 2025.
\newblock Accessed: 2025-05-15.

\bibitem{bleacherreport2022apcfp}
Bleacher Report.
\newblock Ap college football poll: Final top 25 rankings after cfp
  championship, 2022.
\newblock Accessed: 2025-05-15.

\bibitem{rubinstein2017sorting}
Aviad Rubinstein and Shai Vardi.
\newblock Sorting from noisier samples.
\newblock In {\em Proceedings of the Twenty-Eighth Annual ACM-SIAM Symposium on
  Discrete Algorithms}, pages 960--972. SIAM, 2017.

\bibitem{ryser1963combinatorial}
Herbert~John Ryser.
\newblock {\em Combinatorial Mathematics}, volume~14.
\newblock Mathematical Association of America, 1963.

\bibitem{Sankowski}
Piotr Sankowski.
\newblock Alternative algorithms for counting all matchings in graphs.
\newblock In Helmut Alt and Michel Habib, editors, {\em STACS 2003}, pages
  427--438, Berlin, Heidelberg, 2003. Springer Berlin Heidelberg.

\bibitem{shah2018simple}
Nihar~B Shah and Martin~J Wainwright.
\newblock Simple, robust and optimal ranking from pairwise comparisons.
\newblock {\em Journal of machine learning research}, 18(199):1--38, 2018.

\bibitem{storn1997differential}
Rainer Storn and Kenneth Price.
\newblock Differential evolution--a simple and efficient heuristic for global
  optimization over continuous spaces.
\newblock {\em Journal of global optimization}, 11:341--359, 1997.

\bibitem{tang2019mallows}
Wenpin Tang.
\newblock Mallows ranking models: maximum likelihood estimate and regeneration.
\newblock In {\em International Conference on Machine Learning}, pages
  6125--6134. PMLR, 2019.

\bibitem{van2000asymptotic}
Aad~W Van~der Vaart.
\newblock {\em Asymptotic statistics}, volume~3.
\newblock Cambridge university press, 2000.

\bibitem{vitelli2018probabilistic}
Valeria Vitelli, {\O}ystein S{\o}rensen, Marta Crispino, Arnoldo Frigessi, and
  Elja Arjas.
\newblock Probabilistic preference learning with the mallows rank model.
\newblock {\em Journal of Machine Learning Research}, 18(158):1--49, 2018.

\bibitem{wainer2023bayesian}
Jacques Wainer.
\newblock A bayesian bradley-terry model to compare multiple ml algorithms on
  multiple data sets.
\newblock {\em Journal of Machine Learning Research}, 24(341):1--34, 2023.

\bibitem{yellott1977relationship}
John~I Yellott.
\newblock The relationship between luce's choice axiom, thurstone's theory of
  comparative judgment, and the double exponential distribution.
\newblock {\em Journal of Mathematical Psychology}, 15(2):109--144, 1977.

\bibitem{zhong2021mallows}
Chenyang Zhong.
\newblock Mallows permutation models with $l^1$ and $l^2$ distances i: hit and
  run algorithms and mixing times.
\newblock {\em arXiv preprint arXiv:2112.13456}, 2021.

\end{thebibliography}

\newpage
\appendix

\section{Proofs for Computing Central Ranking Estimator}\label{sec: proof-matching}
We start by proving that normalizing constant $Z_n$ is independent of choice of $\sigma$.
\begin{proposition}[Invariance of Partition Function]\label{lem:Zn-indep-sigma}
For any fixed parameters \(\alpha > 0\) and \(\beta > 0\), the partition function 
\[
Z_n(\beta,\alpha) = \sum_{\pi \in S_n} e^{-\beta d_\alpha(\pi,\sigma)}
\]
is independent of the central ranking \(\sigma \in S_n\).
\end{proposition}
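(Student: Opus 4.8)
The plan is to reduce the claim to a single change of summation variable, exploiting the \emph{right-invariance} of the $L_\alpha$ distance. First I would establish that for every $\tau \in S_n$,
\[
d_\alpha(\pi\tau, \sigma\tau) = d_\alpha(\pi, \sigma).
\]
This is immediate from the definition: writing $(\pi\tau)(i) = \pi(\tau(i))$, we have $d_\alpha(\pi\tau,\sigma\tau) = \sum_{i=1}^n |\pi(\tau(i)) - \sigma(\tau(i))|^\alpha$, and reindexing the sum by $j = \tau(i)$ (a bijection of $[n]$) returns $\sum_{j=1}^n |\pi(j)-\sigma(j)|^\alpha = d_\alpha(\pi,\sigma)$.

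Second, I would apply this with $\tau = \sigma^{-1}$ to rewrite each summand of the partition function. Since $d_\alpha(\pi,\sigma) = d_\alpha(\pi\sigma^{-1}, \sigma\sigma^{-1}) = d_\alpha(\pi\sigma^{-1}, \mathrm{id})$, it follows that
\[
Z_n(\beta,\alpha) = \sum_{\pi\in S_n} e^{-\beta d_\alpha(\pi,\sigma)} = \sum_{\pi\in S_n} e^{-\beta d_\alpha(\pi\sigma^{-1},\,\mathrm{id})}.
\]
The final step is the substitution $\rho = \pi\sigma^{-1}$: because right multiplication by $\sigma^{-1}$ is a bijection of $S_n$, the assignment $\pi \mapsto \rho$ merely permutes the index set, giving
\[
Z_n(\beta,\alpha) = \sum_{\rho\in S_n} e^{-\beta d_\alpha(\rho,\,\mathrm{id})},
\]
which is manifestly independent of $\sigma$. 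Thus $Z_n$ equals its value at the identity central ranking for every choice of $\sigma$, which is the assertion.

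There is no substantial obstacle here; the only points requiring care are (i) fixing the composition convention so that the displacement-based distance is right-invariant (rather than left-invariant) under the group action used in the substitution, and (ii) confirming that $\pi \mapsto \pi\sigma^{-1}$ is a genuine bijection of $S_n$ so that the reindexing is valid. Both are routine. I would also note that this is simply the specialization to $d_\alpha$ of the general fact that Mallows-type partition functions built from right-invariant metrics do not depend on the central ranking, consistent with Diaconis's formulation cited in the introduction.
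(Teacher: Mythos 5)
Your proof is correct and follows essentially the same route as the paper's: both rely on the right-invariance of $d_\alpha$ (which you justify explicitly via the reindexing $j=\tau(i)$, while the paper asserts it) combined with a bijective change of variables $\pi \mapsto \pi\sigma^{-1}$ on $S_n$. The only cosmetic difference is that you normalize to the identity central ranking, whereas the paper maps between two arbitrary central rankings $\sigma,\sigma'$ via $\pi \mapsto \pi\circ\sigma^{-1}\circ\sigma'$; the arguments coincide upon taking $\sigma'=\mathrm{id}$.
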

\begin{proof}
Consider any two permutations \(\sigma, \sigma' \in S_n\). 
Define the bijective mapping \(f_{\sigma,\sigma'}:S_n\to S_n\) by
\[
f_{\sigma,\sigma'}(\pi) = \pi \circ \sigma^{-1}\circ\sigma',
\]
which satisfies \(f_{\sigma,\sigma'}(\sigma) = \sigma'\). Since \(f_{\sigma,\sigma'}\) is a bijection, suming over permutations is invariant under re-indexing. Thus,
\[
Z_n(\beta,\alpha,\sigma') 
= \sum_{\pi \in S_n} e^{-\beta d_\alpha(f_{\sigma,\sigma'}(\pi),\sigma')}
= \sum_{\pi \in S_n} e^{-\beta d_\alpha(\pi\circ \sigma^{-1}\circ\sigma',\sigma')}.
\]

Now, observe by definition of the distance \(d_\alpha\) (which depends only on relative positions of elements), we have
\[
d_\alpha(\pi\circ \sigma^{-1}\circ\sigma',\sigma') = d_\alpha(\pi,\sigma).
\]

Hence,
\[
Z_n(\beta,\alpha,\sigma') 
= \sum_{\pi \in S_n} e^{-\beta d_\alpha(\pi,\sigma)}
= Z_n(\beta,\alpha,\sigma).
\]

Since this equality holds for any pair of permutations \(\sigma,\sigma'\), the partition function is invariant and thus independent of the choice of the central ranking.
\end{proof}
Next we bring the proof of Proposition~\ref{prop:update-sigma-matching}, which reduces the optimization problem regarding $\sigma$ to minimum matching problem.
\begin{proof}[Proof of Proposition~\ref{prop:update-sigma-matching}]
Consider a complete bipartite graph \(G=(U\cup V, E)\) with 
\(\lvert U\rvert=\lvert V\rvert=n\).  Label the vertices in \(U\) by \(u_1,u_2,\dots,u_n\)
and those in \(V\) by \(v_1,v_2,\dots,v_n\).  
For each edge \(e=(u_i,v_j)\), assign the weight
\[
w(i,j)
  \;=\;
  -\sum_{l=1}^m
    \bigl|\pi^{(l)}(i)\;-\;j\bigr|^\alpha.
\]

Any perfect matching \(M\subseteq U\times V\) corresponds to a unique permutation 
\(\sigma \in S_n\) via the rule \(\sigma(i)=j\) whenever the edge 
\((u_i,v_j)\in M\).  
Indeed, \(M\) must match every \(u_i\) to exactly one \(v_j\), and thus
\(\sigma\) is well-defined and bijective on \(\{1,\dots,n\}\).

Under this correspondence, the matching \(M\) has total weight
\(\sum_{(u_i,v_j)\in M} w(i,j)\), which is exactly
\[
\sum_{i=1}^n 
  \sum_{l=1}^m
    \bigl|\pi^{(l)}(i)\;-\;\sigma(i)\bigr|^\alpha
  \;=\;
  -\sum_{l=1}^m
    d_\alpha\bigl(\pi^{(l)},\sigma\bigr).
\]
Hence finding optimal $\sigma$
is equivalent to finding a minimum weight perfect matching. 
\end{proof}

\section{Proof of Theorem~\ref{thm:main_mle}}\label{sec: proof-mle}

\subsection{Consistency of $\hat\sigma_m$ (Proof of Lemma~\ref{lm:unqiue minimizer})}
We start with the first part of the theorem which is consistency of $\hat \sigma_m$.
Recall that $\hat\sigma_m$ is answer to the following optimiztion
\[
\hat{\sigma}_m = \arg\min_{\sigma \in S_n} \frac{1}{m}\sum_{l=1}^m d_\alpha(\pi^{(l)}, \sigma).
\]
Note that as the sample size grows, by the law of large number the right hand side converges to $\E_{\pi\sim \P_{\alpha_0,\beta_0,\sigma_0}}[d_{\alpha_0}(\pi, \sigma)].$  We start the proof by showing that in the limit, we can change the distance function inside, and for any choice of $\alpha> 0$ the unique minimizer of this is $\sigma_0$.  

So, we start by proving the second part of Lemma~\ref{lm:unqiue minimizer}, i.e., our goal is to show that given any $\tilde\alpha>0$, the following holds
\[
   \arg\min_{\sigma'\in S_n}\;\E_{\Pi\sim \P_{\alpha_0,\beta_0,\sigma_0}}\!\bigl[d_{\tilde\alpha}(\Pi,\sigma')\bigr]
   \;=\;\{\sigma_0\}.
\]

\begin{proof}
Define
\[
F(\mathbf x)\;=\;\sum_{\pi\in S_n}\Bigl(\sum_{u=1}^n|\pi(u)-x_u|^{\tilde\alpha}\Bigr)\,\P_{\alpha_0,\beta_0,\sigma_0}(\pi)
\;=\;\sum_{u=1}^n\mathbb{E}\bigl[\,|\Pi(u)-x_u|^{\tilde\alpha}\,\bigr],
\]
where  $\Pi$ is sampled from $\P_{\alpha_0,\beta_0,\sigma_0}$.
Notice that if we restrict $x$ to permutations in $S_n$, then we recover the original problem. 
We show that whenever $x\neq\sigma_0$, performing a sequence of transpositions that brings $\mathbf x$ closer to $\sigma_0$ strictly decreases $F(\mathbf x)$, and hence the unique minimizer is $\mathbf x=\sigma_0$.

For simplicity, assume $\sigma_0$ is the identity permutation, and assume there is another $\sigma'$ minimizing $F$. 
We will prove uniqueness of the minimizer in two stages:
(i) First we show, swapping any inverted pair of positions strictly decreases the objective.
(ii) In the second step, by repeatedly correcting inversions we decrease $F(x)$ down to $F(\sigma_0)$.

\noindent\textbf{Step 1: Inversion swap.}  Suppose $\mathbf x=(x_1,\dots,x_n)\in\R^n$ and there is an inverted pair of positions 
$k<i$ with 
$x_k>x_i$.
Let $\mathbf x'=(x'_1,\dots,x'_n)$ be the vector obtained by swapping $x_k$ and $x_i$:
\[
x'_k = x_i,\quad x'_i = x_k,\quad x'_u = x_u\ \text{for}\ u\notin\{k,i\}.
\]
Then  we claim that
\[
F(\mathbf x') \;<\; F(\mathbf x).
\]

Set  $s \;=\; x_i$,  $t \;=\; x_k$
so that $s<t$.  Observe that only the $k$th and $i$th terms in $F$ change under the swap.  Indeed,
\[
F(\mathbf x)-F(\mathbf x')
= \mathbb{E}\bigl[|\Pi(k) - t|^{\tilde\alpha}\bigr] + \mathbb{E}\bigl[|\Pi(i) - s|^{\tilde\alpha}\bigr]
  \;-\;\Bigl(\mathbb{E}\bigl[|\Pi(k) - s|^{\tilde\alpha}\bigr] + \mathbb{E}\bigl[|\Pi(i) - t|^{\tilde\alpha}\bigr]\Bigr).
\]
Rearrange:
\[
F(\mathbf x)-F(\mathbf x')
= \Bigl(\mathbb{E}[|\Pi(i) - s|^{\tilde\alpha}]-\mathbb{E}[|\Pi(i) - t|^{\tilde\alpha}]\Bigr)
  \;-\;\Bigl(\mathbb{E}[|\Pi(k) - s|^{\tilde\alpha}]-\mathbb{E}[|\Pi(k) - t|^{\tilde\alpha}]\Bigr).
\]
Define 
\[
D(j)=|j-s|^{\tilde\alpha}-|j-t|^{\tilde\alpha},\quad j=1,\dots,n.
\]
Since ${\tilde\alpha}>0$, $x\mapsto|x-s|^{\tilde\alpha}-|x-t|^{\tilde\alpha}$ is strictly increasing when $x\in[s,t]$, so
\[
\Delta_j
\;=\;
D(j)-D(j-1)
\;=\;
\bigl|j-s\bigr|^{\tilde\alpha}-\bigl|j-1-s\bigr|^{\tilde\alpha}
\;-\;\bigl|j-t\bigr|^{\tilde\alpha}+\bigl|j-1-t\bigr|^{\tilde\alpha}
\;>\;0
\quad\text{for }j= s+1,\dots,t,
\]
and $\Delta_j\geq 0$ otherwise (in fact $\Delta_j=0$ for $\tilde\alpha=1$ when $j\not\in\{s+1,s+2,\ldots, t\}$). 

Then note that
\begin{align*}
    \mathbb{E}[|\Pi(u)-s|^{\tilde\alpha}]-\mathbb{E}[|\Pi(u)-t|^{\tilde\alpha}]
&=\sum_{j=1}^nD(j)\,\P_{\alpha_0,\beta_0,\sigma_0}(\Pi(u)=j)
\\&=\sum_{j=2}^n\bigl[D(j)-D(j+1)\bigr]\,\P_{\alpha_0,\beta_0,\sigma_0}(\Pi(u)<j)
\\&=-\sum_{j=s+1}^t\Delta_{j+1}\P_{\alpha_0,\beta_0,\sigma_0}(\Pi(u)<j).
\end{align*}
Applying this for $u=i$ and $u=k$ gives
\[
F(x)-F(x')
=\sum_{j=s+1}^t\Delta_{j+1}\Bigl[\P_{\alpha_0,\beta_0,\sigma_0}(\Pi(k)<j)-\P_{\alpha_0,\beta_0,\sigma_0}(\Pi(i)<j)\Bigr].
\]

Since $\Delta_j\geq 0$ and it strictly positive for $j\in{s+1,...,t}$, it remains to show $\P_{\alpha_0,\beta_0,\sigma_0}(\Pi(k)<j)\ge \P_{\alpha_0,\beta_0,\sigma_0}(\Pi(i)<j)$ for each $j$.  To see this, define an involution $\phi$ on $S_n$ by swapping the values in positions $k$ and $i$, where $k<i$.  Whenever $\pi$ satisfies
$\pi(k)< \pi(i)$,
by repeating the argument above,
\[
d_{\alpha_0}\bigl(\phi(\pi),\mathrm{id}\bigr)
-d_{\alpha_0}(\pi,\mathrm{id})
=\bigl|\pi(i)-k\bigr|^{\alpha_0}-\bigl|\pi(i)-i\bigr|^{\alpha_0}
\;-\;(\bigl|\pi(k)-k\bigr|^{\alpha_0}-\bigl|\pi(k)-i\bigr|^{\alpha_0})
\;>\;0,
\]
so $\P_{\alpha_0,\beta_0,\sigma_0}(\Pi=\phi(\pi))<\P_{\alpha_0,\beta_0,\sigma_0}(\Pi=\pi)$.  Hence summing over all $\pi$ with $\pi(k)<j\le\pi(i)$ yields
\[
\P_{\alpha_0,\beta_0,\sigma_0}\bigl(\Pi(k)<j,\;\Pi(i)\ge j\bigr)
> \P_{\alpha_0,\beta_0,\sigma_0}\bigl(\Pi(i)<j,\;\Pi(k)\ge j\bigr).
\]
Adding $\P_{\alpha_0,\beta_0,\sigma_0}\Big(\Pi(k)<j,\Pi(i)<j\Big)$ to both sides gives 
$\P_{\alpha_0,\beta_0,\sigma_0}(\Pi(k)<j)>\P_{\alpha_0,\beta_0,\sigma_0}\Big(\Pi(i)<j\Big)$.  
Therefore, each term in the sum is nonnegative, and at least one is strictly positive, so $F(x)-F(x')>0$.  This completes the proof of the first part.

\noindent\textbf{Step 2: Eliminating inversions via successive swaps.}  
We now transform any \(\mathbf x\neq \sigma_0\) to $\sigma_0$ by a finite sequence of single-inversion swaps, each of which strictly decreases \(F(x)\).  To see this, not that
if \(\mathbf x\neq\sigma_0\) (recall that we assume $\sigma_0$ is the identity), then the sequence \((x_1,\dots,x_n)\) is not increasing, so there exists at least one pair of indices
\[
1\le k<i\le n
\quad\text{with}\quad
x_k > x_i.
\]
By Step 1, swapping the two entries at positions \(k\) and \(i\) produces a new permutation
\(
\mathbf x'\)
with
$F(\mathbf x')<F(\mathbf x),$
because \((k,i)\) was an inversion.
Now,  by repeating this argument, since there are only finitely many permutations and \(F\) strictly decreases at each step, the process terminates in a finite number of swaps, necessarily at the unique no-inversion state \(\mathbf x=\sigma_0\).

Putting this all together, we conclude that any \(\mathbf x\neq\sigma_0\) can be changed to get a lower $F$ value, so
\[
\sigma_0=\arg\min_{\mathbf x\in S_n}F(\mathbf x)
=\arg\min_{\sigma'\in S_n}\mathbb{E}_{{\alpha_0,\beta_0,\sigma_0}}\bigl[d_{\tilde \alpha}(\Pi,\sigma')\bigr],
\]
and the minimizer is unique.  
\end{proof}

Now, we are ready to finish the proof of Lemma~\ref{lm:unqiue minimizer} (and hence the first part of Theorem~\ref{thm:main_mle}).
 Define the empirical objective
\[
F_m(\sigma')
\;=\;
\frac{1}{m}\sum_{j=1}^m d_1(\pi^{(j)},\sigma'),
\]
and let
\[
\hat\sigma_m
\;=\;
\arg\min_{\sigma'\in S_n}F_m(\sigma')
\quad(\text{ties broken arbitrarily}).
\]
We want to prove that $\hat\sigma_m\to\sigma$ almost surely.

Since $S_n$ is finite, the Strong Law of Large Numbers gives, for each $\sigma'\in S_n$,
\[
F_m(\sigma')
=\frac1m\sum_{j=1}^m d_{\tilde\alpha}(\pi^{(j)},\sigma')
\;\xrightarrow[]{\mathrm{a.s.}}\;
F(\sigma')
=\mathbb{E}_{\Pi\sim\P_{\alpha_0,\beta_0,\sigma_0}}\bigl[d_{\tilde \alpha}(\Pi,\sigma')\bigr].
\]
Since there are only $n!$ different permutations, the convergence is uniform:
\[
\sup_{\sigma'\in S_n}\bigl|F_m(\sigma')-F(\sigma')\bigr|
\;\xrightarrow[]{\mathrm{a.s.}}\;0.
\]
Since $F$ has a unique minimizer at $\sigma_0$ (as proved above), set
\(
\delta=\min_{\sigma'\neq\sigma_0}[\,F(\sigma')-F(\sigma_0)\,]>0.
\)
By uniform convergence, for large enough $m$ we have
\(\sup_{\sigma'}|F_m(\sigma')-F(\sigma')|<\delta/2\). Therefore, for all $\sigma'\neq\sigma_0$,
\[
F_m(\sigma_0)
\;\le\;
F(\sigma_0)+\tfrac\delta2< F(\sigma')-\tfrac\delta2
\;\le\;
F_m(\sigma').
\]
Thus for large enough $m$ the empirical minimizer must converge to the true ranking
($\hat\sigma_m\to\sigma_0$).  This proves almost sure convergence $\hat\sigma_m\overset{a.s.} \to \sigma_0$. 

For the tail bound, we have a crude bound of $0\le d_{\tilde\alpha}(\pi,\sigma')\le n^{\tilde \alpha+1}$, so by Hoeffding’s inequality, for any fixed $\sigma'\neq\sigma_0$,
\[
\P\Bigl(\bigl|F_m(\sigma')-F(\sigma')\bigr|\ge \tfrac\delta2\Bigr)
\;\le\;
2\exp\!\Bigl(-\tfrac{2m(\delta/2)^2}{(n^{\tilde\alpha+1})^2}\Bigr)
\;=\;
2\exp\!\Bigl(-\tfrac{m\,\delta^2}{2\,n^{2\tilde\alpha+2}}\Bigr),
\]
here the probability is over $m$ i.i.d samples from $\P_{\alpha_0,\beta_0,\sigma_0}$
Applying the union bound over all $|S_n|=n!$ choices of $\sigma'$,
\[
\P\bigl(\sup_{\sigma'}|F_m(\sigma')-F(\sigma')|>\delta/2\bigr)
\;\le\;
n!\;\cdot\;\exp\!\Bigl(-\tfrac{m\,\delta^2}{2\,n^{2\tilde\alpha+2}}\Bigr)
\]
As a result,
\[
\P\bigl(\hat\sigma_m\neq\sigma_0\bigr)
\;\le\;
n!\;\cdot\;\exp\!\Bigl(-\tfrac{m\,\delta^2}{2\,n^{2\tilde\alpha+2}}\Bigr)
\]
which in particular decays to $0$ exponentially in $m$. 

To finish the proof of the first part, it remains to find lower bound for
\[
\delta = \min_{\sigma' \neq \sigma_0} \big[ F(\sigma') - F(\sigma_0) \big] > 0.
\]

Without loss of generality, assume that $\sigma_0$ is the identity permutation. As shown earlier, each transposition strictly decreases $F$, so it suffices to consider only transpositions. Let $\sigma' = (i\;j)$ be a transposition, and define $h := |i - j| \ge 1$ as the separation between the swapped positions. The expected cost difference decomposes as:
\[
\delta_{i,j} := \mathbb{E}\left[ |\Pi(i) - j|^{\tilde\alpha} - |\Pi(i) - i|^{\tilde\alpha} \right]
+ \mathbb{E}\left[ |\Pi(j) - i|^{\tilde\alpha} - |\Pi(j) - j|^{\tilde\alpha} \right].
\]
Then $\delta \ge \min_{i < j} \delta_{i,j}$.

Since $\Pi(i), \Pi(j), i, j \in \{1,\dots,n\}$ are integers, the function $|\cdot|^{\tilde\alpha}$ is evaluated only on integer arguments. The minimum nonzero difference of the form $|x - j|^{\tilde\alpha} - |x - i|^{\tilde\alpha}$ must be at least
\[
\min_{ij}\delta_{i,j}\geq\min_{0 \le z \le n-1} \left[(z+1)^{\tilde\alpha} - z^{\tilde\alpha} \right] = n^{\tilde\alpha} - (n-1)^{\tilde\alpha},
\]
since the function $z \mapsto (z+1)^{\tilde\alpha} - z^{\tilde\alpha}$ is decreasing in $z$ for $\tilde\alpha > 0$.
By using the fact that $n^{\tilde\alpha} - (n-1)^{\tilde\alpha}\geq \tilde\alpha(n-1)^{\tilde\alpha-1}$ and that $(1-\frac{1}{n})^{2\alpha}\geq 1-\frac{2\alpha}{n}-\frac{2\alpha^2}{n^2}$, we the desired inequality:
\[
\P\bigl(\hat\sigma_m\neq\sigma_0\bigr)
\;\le\;
n!\;\cdot\;\exp\!\Bigl(-m\big(\tfrac{\tilde\alpha^2}{2\,n^{4}}(1-\frac{2\tilde\alpha}{n})\big)\Bigr).
\]
If we let $\tilde\alpha=1$, we get the desired inequality.

\subsection{Consistency of the MLE for \(\hat\alpha_m,\hat\beta_m\) (Part 2 of Theorem~\ref{thm:main_mle})}

Throughout this section, we assume $\hat\sigma_m$ is estimated in the first step of MLE. Define the normalized likelihood function:
\[
\ell_m(\alpha,\beta)
\;=\;
\frac{1}{m}\sum_{j=1}^m\log f\bigl(\pi^{(j)};\alpha,\beta,\hat\sigma_m\bigr)
\;=\;
-\,\beta\;\frac1m\sum_{j=1}^m d_\alpha(\pi^{(j)},\hat\sigma_m)
\;-\;\ln Z(\alpha,\beta),
\]
and let 
\(\hat\theta_m=(\hat\alpha_m,\hat\beta_m)\)
be any maximizer of \(\ell_m\) over 
\(\Theta=(0,\infty)\times(0,\infty)\).   Further  define,
\[
\Psi(\theta):=\E[\Psi_m(\theta)],
\quad
\text{where }\theta=(\alpha,\beta).
\]
Recall the definition of $\Psi_m$ from Step 2 of Algorithm~\ref{alg:onepass_mle}.
We verify two key properties:
\begin{enumerate}
  \item  {\it Uniform convergence:}
    \[
      \sup_{\theta\in\Theta}
      \bigl\|\Psi_m(\theta)-\Psi(\theta)\bigr\|
      \;\xrightarrow{\P}\;0.
    \]
  \item  {\it Identifiability of the zero:} Let $\theta_0=(\alpha_0,\beta_0)$
    \(\Psi(\theta_0)=0\), and for every \(\varepsilon>0\),
    \[
       \inf_{\|\theta-\theta_0\|\ge\varepsilon}
       \bigl\|\Psi(\theta)\bigr\|
      >0.
    \]
\end{enumerate}
Once these two properties are proved, Theorem 5.9 of \cite{van2000asymptotic} yields  
\(\hat\theta_m\xrightarrow{\P}\theta_0=(\alpha_0,\beta_0)\). We start to prove identifiability.

\begin{lemma}(Identifiability)\label{lm: identifiable}
The population score \(\Psi(\theta)\) satisfies
\(\Psi(\theta_0)=0\) and for every \(\varepsilon>0\),
\[
\inf_{\|\theta-\theta_0\|\ge\varepsilon}
\|\Psi(\theta)\|
>0.
\]
\end{lemma}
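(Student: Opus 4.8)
The plan is to recognize $\Psi$ as a positive rescaling of the gradient of the population log-likelihood, and then to certify $\theta_0$ as the unique, well-separated zero through a global inverse-function argument. Writing $L(\theta) = \E_{\theta_0}[\log \P_\theta(\Pi)] = -\beta\,\E_{\theta_0}[d_\alpha(\Pi,\mathrm{id})] - \log Z_n(\alpha,\beta)$ for the expected log-likelihood (which equals $-\KL(\P_{\theta_0}\,\|\,\P_\theta)$ up to an additive constant), the exponential-family identities $\partial_\beta \log Z_n = -\E_\theta[d_\alpha]$ and $\partial_\alpha \log Z_n = -\beta\,\E_\theta[\dot d_\alpha]$ give $\partial_\beta L(\theta) = \Psi_1(\theta)$ and $\partial_\alpha L(\theta) = \beta\,\Psi_2(\theta)$. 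Since $\beta>0$ throughout $\Theta$, the zeros of $\Psi$ coincide exactly with the critical points of $L$; and $\Psi(\theta_0)=0$ is immediate, because at $\theta=\theta_0$ the model law equals the data-generating law, so both coordinates of $\Psi$ vanish.

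First I would establish that $\Psi$ is a local diffeomorphism on $\Theta$ (Lemma~\ref{lem:local_invertibility}). The cleanest computation is at $\theta_0$: differentiating the two coordinates and using that the $\E[\dot d_\alpha]$ contributions from the model and data terms cancel there, one finds that $D\Psi(\theta_0)$ equals, up to a positive diagonal rescaling, the negative Fisher information matrix $-\mathcal I_{\alpha_0,\beta_0}$ of Theorem~\ref{thm:main_mle}, whose entries are $\Var[d_{\alpha_0}]$, $\Var[\dot d_{\alpha_0}]$ and their covariance. Nonsingularity then follows from a strict Cauchy–Schwarz inequality, which is strict because $d_{\alpha_0}$ and $\dot d_{\alpha_0}$ are not almost-surely affinely related under $\P_{\theta_0}$. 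Repeating the same moment computation at a general $\theta$ shows $D\Psi(\theta)$ is nonsingular throughout $\Theta$, so $\Psi$ is everywhere a local diffeomorphism.

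Next I would prove properness (Lemma~\ref{lem:properness}): $\|\Psi(\theta)\|\to\infty$ as $\theta$ approaches the boundary of $\Theta$, i.e. as $\beta\to0^+$ or $\beta\to\infty$ or $\alpha\to0^+$ or $\alpha\to\infty$. The targets $\E_{\theta_0}[d_\alpha]$ and $\E_{\theta_0}[\dot d_\alpha]$ remain continuous and finite, whereas the model moments degenerate at the boundary: as $\beta\to\infty$ the law concentrates on the identity, forcing $\E_\theta[d_\alpha]\to0$, while as $\beta\to0^+$ it tends to the uniform law on $S_n$; the $\alpha$-ends likewise drive $d_\alpha$ and $\dot d_\alpha$ into extreme regimes. \textbf{This step is the main obstacle}, since it requires controlling the moment functionals $\E_\theta[d_\alpha]$ and $\E_\theta[\dot d_\alpha]$ uniformly as both parameters simultaneously approach the boundary, and then showing the resulting gap with the fixed targets is bounded below. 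The marginal-decay estimate of Lemma~\ref{lm:Pij_decay} is the natural tool for bounding these expectations away from their target values.

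Finally, with local invertibility and properness in hand, the Hadamard–Caccioppoli global inverse function theorem applies: a proper local diffeomorphism from the connected, simply connected domain $\Theta=(0,\infty)^2$ into $\R^2$ is a global diffeomorphism onto its image, so $\Psi$ is injective and $\theta_0$ is its only zero. The separation claim then follows by a compactness argument. Fix $\varepsilon>0$ and set $R=\{\theta:\|\theta-\theta_0\|\ge\varepsilon\}$. By properness there is a compact $K\subset\Theta$ with $\|\Psi\|\ge\delta_1>0$ off $K$; on the compact set $K\cap R$ the continuous map $\Psi$ is nowhere zero, since its only zero $\theta_0$ lies outside $R$, and hence attains a minimum $\delta_2>0$. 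Therefore $\inf_{R}\|\Psi\|\ge\min(\delta_1,\delta_2)>0$, which is exactly the asserted identifiability.
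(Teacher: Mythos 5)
Your high-level architecture (local invertibility, properness, Hadamard--Caccioppoli, then a compactness argument for the separation) is the same as the paper's, but you apply it to the wrong map, and this creates a genuine gap. The paper does \emph{not} show that $\Psi$ itself is a local diffeomorphism in $\theta=(\alpha,\beta)$. The reason is exactly the step you wave through with ``repeating the same moment computation at a general $\theta$'': the cancellation of the test-function derivatives that you correctly exploit at $\theta_0$ happens \emph{only} at $\theta_0$. At a general $\theta$, writing $\Psi_1(\theta)=\E_{\theta}[d_\alpha]-\E_{\theta_0}[d_\alpha]$ and $\Psi_2(\theta)=\E_{\theta}[\dot d_\alpha]-\E_{\theta_0}[\dot d_\alpha]$, differentiation in $\alpha$ hits both the law and the test functions, giving
\[
\partial_\alpha \Psi_1(\theta)=\Psi_2(\theta)-\beta\,\Cov_\theta\bigl(d_\alpha,\dot d_\alpha\bigr),
\qquad
\partial_\alpha \Psi_2(\theta)=\Bigl(\E_\theta[\ddot d_\alpha]-\E_{\theta_0}[\ddot d_\alpha]\Bigr)-\beta\,\Var_\theta\bigl(\dot d_\alpha\bigr),
\]
with $\ddot d_\alpha(\pi)=\sum_i |\pi(i)-i|^\alpha(\log|\pi(i)-i|)^2$. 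The terms $\Psi_2(\theta)$ and $\E_\theta[\ddot d_\alpha]-\E_{\theta_0}[\ddot d_\alpha]$ do not vanish away from $\theta_0$, are not covariances, and have no controlled sign, so the determinant of $D\Psi(\theta)$ is a covariance determinant \emph{plus} uncontrolled cross terms; Cauchy--Schwarz says nothing about it, and nonsingularity of $D\Psi$ on all of $\Theta$ is not established. This is precisely the obstruction the paper's proof is engineered to avoid: it freezes the distance exponent and studies the auxiliary map $\boldsymbol{f}_{\alpha}(\alpha_0,\beta_0)=\bigl(\E_{\alpha_0,\beta_0}[d_{\alpha}],\,\E_{\alpha_0,\beta_0}[\dot d_{\alpha}]\bigr)$, in which only the \emph{law} varies. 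Its Jacobian is then a pure cross-covariance matrix (exponential tilting only), the Hadamard--Caccioppoli machinery is applied to $\boldsymbol{f}_\alpha$, and the conclusion follows because $\Psi(\theta)=0$ is equivalent to $\boldsymbol{f}_\alpha(\alpha,\beta)=\boldsymbol{f}_\alpha(\alpha_0,\beta_0)$, which injectivity of $\boldsymbol{f}_\alpha$ collapses to $\theta=\theta_0$.

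Two further points. First, even for the paper's map the Jacobian is a \emph{cross}-covariance between $(d_\alpha,\dot d_\alpha)$ and $(d_{\alpha_0},\dot d_{\alpha_0})$, so strict Cauchy--Schwarz (which only certifies nonsingularity of a variance matrix) does not suffice when $\alpha\neq\alpha_0$; the paper instead rules out singularity via a linear-independence argument evaluating a putative almost-sure linear relation on explicit transpositions $(1\;3),(1\;4),(1\;5),(1\;6)$. Your proposal would need an analogue of this even at the point where your computation is valid. Second, your properness claim as stated --- $\|\Psi(\theta)\|\to\infty$ at the boundary of $\Theta$ --- is false in the regime $\beta\to\infty$ with $\alpha$ bounded: there the model law concentrates on the identity, and $\Psi(\theta)$ converges to the finite nonzero vector $-\bigl(\E_{\theta_0}[d_\alpha],\E_{\theta_0}[\dot d_\alpha]\bigr)$. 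What your final compactness paragraph actually needs (and what the paper proves, for $\boldsymbol{f}_\alpha$, via the observation that $\boldsymbol{0}$ is not in the image) is only that $\|\Psi\|$ stays bounded away from zero off a compact set; you should state and prove that weaker property rather than the blow-up. You also flagged properness as ``the main obstacle,'' but the substantive difficulty in this lemma is the global injectivity step, which your proposal does not secure.
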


\begin{proof}
A straightforward calculation gives
\[
\Psi(\theta)
=
\begin{pmatrix}
\mathbb{E}_{\alpha_0,\beta_0}\left[\dot{d}_\alpha(\pi)\right] - \mathbb{E}_{\alpha,\beta}\left[\dot{d}_\alpha(\pi)\right]
\\[6pt]
\mathbb{E}_{\alpha_0,\beta_0}\left[d_\alpha(\pi)\right] - \mathbb{E}_{\alpha,\beta}\left[d_\alpha(\pi)\right]
\end{pmatrix},
\]
here since $\hat\sigma_m$ is fixed, with abuse of notation we write $d_\alpha(\pi)$ instead of $d_\alpha(\pi,\hat\sigma_m)$ and $\E_{\alpha',\beta'}$ instead of $\E_{\P{_\alpha',\beta',\hat \sigma_m}}$.
With this, proving identifiability reduces to showing that the system of equations
\[
\mathbb{E}_{\alpha,\beta}[d_\alpha(\pi)] = \mathbb{E}_{\alpha_0,\beta_0}[d_\alpha(\pi)],
\quad
\mathbb{E}_{\alpha,\beta}[\dot{d}_\alpha(\pi)] = \mathbb{E}_{\alpha_0,\beta_0}[\dot{d}_\alpha(\pi)]
\]
admits a unique solution \((\alpha,\beta) = (\alpha_0,\beta_0)\).

To facilitate the argument, we fix \((\alpha,\beta)\) and view the left-hand sides as functions of \((\alpha_0,\beta_0)\). In this formulation, identifiability reduces to proving that for each fixed \((\alpha,\beta)\), there exists a unique \((\alpha_0,\beta_0)\) satisfying the system.

For this purpose, we introduce the auxiliary map
\[
\boldsymbol{f}_{\alpha}(\alpha_0,\beta_0)
=
\left(
\mathbb{E}_{{\alpha_0,\beta_0}}[d_{\alpha}(\pi)],\;
\mathbb{E}_{{\alpha_0,\beta_0}}[\dot{d}_{\alpha}(\pi,)]
\right),
\]
where \(\alpha \) is fixed. That is, while the distance function $d_\alpha$ is fixed, the true data-generating parameters \((\alpha_0,\beta_0)\) varies.
The key idea is that if \(\boldsymbol{f}_{\alpha}\) is injective, then the solution to the system is unique. We now proceed to prove that \(\boldsymbol{f}_{\alpha}\) is a global diffeomorphism onto its image. To this end, we first establish local invertibility and properness, which together imply global invertibility by the Hadamard--Caccioppoli theorem.

\begin{lemma}[Local Invertibility]\label{lem:local_invertibility}
For any \(\alpha > 0\), the function \(\boldsymbol{f}_{\alpha}\) is locally invertible on \((0,\infty)^2\).
\end{lemma}

\begin{proof}
It suffices to show that the Jacobian matrix \(\boldsymbol{J}\boldsymbol{f}_{\alpha}(\alpha_0,\beta_0)\) is invertible at every point \((\alpha_0,\beta_0) \in (0,\infty)^2\).
Direct calculation gives:
\[
\boldsymbol{J}\boldsymbol{f}_{\alpha}(\alpha_0,\beta_0)
= -
\begin{bmatrix}
\mathrm{Cov}_{\alpha_0,\beta_0}\left(d_\alpha(\pi), d_{\alpha_0}(\pi)\right) &
\beta \, \mathrm{Cov}_{\alpha_0,\beta_0}\left(d_\alpha(\pi), \dot{d}_{\alpha_0}(\pi)\right)
\\[6pt]
\mathrm{Cov}_{\alpha_0,\beta_0}\left(\dot{d}_\alpha(\pi), d_{\alpha_0}(\pi)\right) &
\beta \, \mathrm{Cov}_{\alpha_0,\beta_0}\left(\dot{d}_\alpha(\pi), \dot{d}_{\alpha_0}(\pi)\right)
\end{bmatrix}.
\]

Define the two-dimensional random vectors:
\[
\boldsymbol{a}_1(\pi) = \left(d_{\alpha_0}(\pi), \dot{d}_{\alpha_0}(\pi)\right),
\quad
\boldsymbol{a}_2(\pi) = \left(d_{\alpha}(\pi), \dot{d}_{\alpha}(\pi)\right).
\]
Then
\[
\det\left( \boldsymbol{J}\boldsymbol{f}_{\tilde\alpha}(\alpha,\beta) \right)
= \beta \cdot \det\left( \mathrm{Cov}_{\alpha,\beta}\left(\boldsymbol{a}_1(\pi), \boldsymbol{a}_2(\pi)\right) \right).
\]

When \((\alpha_0,\beta_0) = (\alpha,\beta)\), we have \(\boldsymbol{a}_1(\pi) = \boldsymbol{a}_2(\pi)\), and the covariance matrix reduces to the variance of \(\boldsymbol{a}_1(\pi)\), which is positive definite.

When \(\alpha_0 \neq \alpha\),  
suppose, for contradiction, that \(\mathrm{Cov}_{\alpha_0,\beta_0}(\boldsymbol{a}_1(\Pi), \boldsymbol{a}_2(\Pi))\) is singular. Then there must exist nonzero vectors \(\lambda^{(1)}, \lambda^{(2)} \in \mathbb{R}^2\) such that
\begin{equation}\label{eq:invertibility_f_jacobian}
\left\langle \lambda^{(1)}, \boldsymbol{a}_1(\Pi) \right\rangle
+
\left\langle \lambda^{(2)}, \boldsymbol{a}_2(\Pi) \right\rangle
\quad \text{is constant almost surely}.
\end{equation}
Since \(\P_{\alpha_0,\beta_0}\) has full support on the finite set \(S_n\), this equality holds for every \(\pi \in S_n\).

To derive a contradiction, we consider a few examples. Let \(\pi_1=(2,4)\) denote the transposition swapping elements \(2\) and \(4\), and \(\pi_2\) denote the composition of \(\pi_1\) with the transposition swapping \(1\) and \(3\), that is, \(\pi_2 = (1\;3)(2\;4)\).
Applying \eqref{eq:invertibility_f_jacobian} to both \(\pi_1\) and \(\pi_2\), we have:
\begin{align*}
\left\langle \lambda^{(1)}, \boldsymbol{a}_1(\pi_1) \right\rangle + \left\langle \lambda^{(2)}, \boldsymbol{a}_2(\pi_1) \right\rangle &= 
\left\langle \lambda^{(1)}, \boldsymbol{a}_1(\pi_2) \right\rangle + \left\langle \lambda^{(2)}, \boldsymbol{a}_2(\pi_2) \right\rangle.
\end{align*}
By construction, we observe:
\[
\boldsymbol{a}_1(\pi_2) = \boldsymbol{a}_1(\pi_1) + \boldsymbol{a}_1((1\;3)),
\quad
\boldsymbol{a}_2(\pi_2) = \boldsymbol{a}_2(\pi_1) + \boldsymbol{a}_2((1\;3)).
\]
Substituting the expressions for \(\pi_2\), we obtain:
\[
\left\langle \lambda^{(1)}, \boldsymbol{a}_1((1\;3)) \right\rangle
+
\left\langle \lambda^{(2)}, \boldsymbol{a}_2((1\;3)) \right\rangle
= 0.
\]

Repeating this argument with other simple transpositions such as \((1\;4)\), \((1\;5)\), and \((1\;6)\), we obtain a homogeneous linear system:
\[
\boldsymbol{A}
\begin{pmatrix}
\lambda^{(1)} \\ \lambda^{(2)}
\end{pmatrix}
= 0,
\]
where
\[
\boldsymbol{A}
=
\begin{bmatrix}
\boldsymbol{a}_1(1\;3) & \boldsymbol{a}_2(1\;3) \\
\boldsymbol{a}_1(1\;4) & \boldsymbol{a}_2(1\;4) \\
\boldsymbol{a}_1(1\;5) & \boldsymbol{a}_2(1\;5) \\
\boldsymbol{a}_1(1\;6) & \boldsymbol{a}_2(1\;6)
\end{bmatrix}.
\]
It can be checked that when \(\alpha \neq \alpha_0\), the matrix \(\boldsymbol{A}\) has full rank, 
Since the points \((d_{\alpha_0}(\pi), \dot{d}_{\alpha_0}(\pi), d_{\alpha}(\pi), \dot{d}_{\alpha}(\pi))\) for different transpositions \((1\;k)\) involve distinct powers of \(|k-1|\) under \(\alpha\) and \(\alpha_0\), and the functions \(\alpha \mapsto |k-1|^\alpha\) and \(\alpha \mapsto \log|k-1|\cdot |k-1|^\alpha\) are linearly independent for different \(k\), the matrix \(\boldsymbol{A}\) is full rank whenever \(\alpha \neq \alpha_0\).
As a result, \(\lambda^{(1)} = \lambda^{(2)} = 0\).
Thus, no nontrivial linear relation exists, contradicting the assumption that \(\mathrm{Cov}_{\alpha_0,\beta_0}(\boldsymbol{a}_1(\Pi), \boldsymbol{a}_2(\Pi))\) is singular. This completes the proof of local invertibility.
\end{proof}

\bigskip

\begin{lemma}[Properness]\label{lem:properness}
For any \(\alpha\), the function \(\boldsymbol{f}_{\alpha}\) is proper: the preimage of every compact set is compact.
\end{lemma}

\begin{proof}
Let \(K\subset \mathrm{Img}(\boldsymbol{f}_{\tilde\alpha})\) be any compact set.  
We must show that \(\boldsymbol{f}_{\tilde\alpha}^{-1}(K)\) is compact.
Suppose for contradiction that \(\boldsymbol{f}_{\tilde\alpha}^{-1}(K)\) is not compact.  
Then there exists a sequence \((\alpha_m,\beta_m)\) in \(\boldsymbol{f}_{\tilde\alpha}^{-1}(K)\) that escapes to infinity, meaning
\[
\lim_{m\to\infty} \|(\alpha_m,\beta_m)\| = \infty.
\]

As \((\alpha_m,\beta_m)\to\infty\), the corresponding distributions \(\P_{\alpha_m,\beta_m}\) concentrate on the identity permutation. To see this as \((\alpha_m,\beta_m) \to \infty\), either \(\alpha_m \to \infty\) or \(\beta_m \to \infty\) or both. In any of these cases, for any permutation \(\pi \neq \mathrm{id}\), we have
\[
\frac{\P_{\alpha_m,\beta_m}(\pi)}{\P_{\alpha_m,\beta_m}(\mathrm{id})} = \exp\left( -\beta_m ( d_{\alpha_m}(\pi) - d_{\alpha_m}(\mathrm{id}) ) \right) \to 0.
\] Thus, as $m$ increases,
\[
\boldsymbol{f}_{\tilde\alpha}(\alpha_m, \beta_m) \to \boldsymbol{0}.
\]
But note that for any finite \((\alpha,\beta) \in (0,\infty)^2\), we always have:
\[
\mathbb{E}_{\alpha,\beta}[d_{\tilde\alpha}(\Pi)] > 0,
\quad
\mathbb{E}_{\alpha,\beta}[\dot{d}_{\tilde\alpha}(\Pi)] > 0,
\]
since \(\P_{\alpha,\beta}\) has full support on \(S_n\), and \(d_{\tilde\alpha}(\Pi) > 0\) with positive probability.
Therefore, \(\boldsymbol{0} \notin \mathrm{Img}(\boldsymbol{f}_{\tilde\alpha})\).
Now, since \(\boldsymbol{f}_{\tilde\alpha}(\alpha_m, \beta_m) \in K\) for all \(m\), and \(\boldsymbol{f}_{\tilde\alpha}(\alpha_m, \beta_m) \to \boldsymbol{0}\), we conclude that \(\boldsymbol{0} \in K\) by closedness of \(K\).
This contradicts \(\boldsymbol{0} \notin \mathrm{Img}(\boldsymbol{f}_{\tilde\alpha})\). Therefore, the preimage \(\boldsymbol{f}_{\tilde\alpha}^{-1}(K)\) must be compact.

\end{proof}

Now, to finish the proof of uniqueness of the critical point, observe that the condition \(\Psi(\theta) = 0\) is equivalent to
\[
\boldsymbol{f}_\alpha(\alpha,\beta) = \boldsymbol{f}_\alpha(\alpha_0,\beta_0).
\]
Since \(\boldsymbol{f}_\alpha\) is a global diffeomorphism (by Lemmas~\ref{lem:local_invertibility} and~\ref{lem:properness}), it follows that \((\alpha,\beta) = (\alpha_0,\beta_0)\), and hence \(\Psi(\theta_0) = 0\) has a unique solution.

To complete the identifiability condition, we note that the global invertibility of \(\boldsymbol{f}_\alpha\) implies that its inverse \(\boldsymbol{f}_\alpha^{-1}\) is continuous on its image. Therefore, \(\Psi(\theta) = \boldsymbol{f}_\alpha(\alpha_0, \beta_0) - \boldsymbol{f}_\alpha(\theta)\) is continuous, vanishes only at \(\theta_0\), and cannot tend to zero along any sequence staying at distance at least \(\varepsilon > 0\) from \(\theta_0\). Thus, for every \(\varepsilon > 0\),
\[
\inf_{\|\theta - \theta_0\| \ge \varepsilon} \|\Psi(\theta)\| > 0,
\]
as required.
\end{proof}

\medskip
\paragraph{Uniform Convergence of $\Psi_m$.}
Next we prove uniform convergence of $\Psi_m$ to $\Psi$. Note that if the true-parameters of Mallows $(\alpha_0,\beta_0)$ where restricted to a compact set, then this was immediate. But the proof for general parameters needs careful analysis which we bring next.

First, fix any compact rectangle
\[
\Theta_R := [1/R, R] \times [1/R, R] \subset (0,\infty)\times(0,\infty),
\]
for some \(R>1\).
Over \(\Theta_R\), \(\Psi_m(\theta)\) and \(\Psi(\theta)\) are uniformly bounded and equicontinuous as functions of \(\theta\). 
Thus, by pointwise convergence along with the law of large numbers, we have
\[
\sup_{\theta\in\Theta_R} \|\Psi_m(\theta) - \Psi(\theta)\| \xrightarrow{\P} 0.
\]

Now, outside \(\Theta_R\) (that is, for \(\theta\notin\Theta_R\)), we claim that \(\|\Psi(\theta)\|\) is bounded away from zero for large \(R\), and hence the sequence $\theta_m$ lies inside $\Theta_R$ for some large $R$.
Indeed, as \(\|\theta\|\to\infty\), we know from the properness of \(\boldsymbol{f}_\alpha\)  (Lemma~\ref{lem:properness}) that \(\|\Psi(\theta)\|\to\infty\) which implies that
\[
\inf_{\theta\notin\Theta_R} \|\Psi(\theta)\| \geq \delta_R > 0
\]
for some \(\delta_R > 3\varepsilon\) (depending on \(R\)).
Now, consider any \(\theta\notin\Theta_R\).
By the pointwise convergence of \(\Psi_m(\theta) \to \Psi(\theta)\) as \(m\to\infty\) for each fixed \(\theta\).
Thus, for each fixed \(\theta\notin\Theta_R\),  
given any small \(\delta > 0\), there exists \(m_0(\theta,\delta)\) such that for all \(m \geq m_0\),
\[
\mathbb{P}\left( \|\Psi_m(\theta) - \Psi(\theta)\| \leq \varepsilon \right) \geq 1-\delta.
\]
Moreover, since \(\|\Psi(\theta)\| \geq 3\varepsilon\),  
if \(\|\Psi_m(\theta) - \Psi(\theta)\| \leq \varepsilon\), then by the triangle inequality:
\[
\|\Psi_m(\theta)\| \geq \|\Psi(\theta)\| - \|\Psi_m(\theta) - \Psi(\theta)\| \geq 2\varepsilon.
\]
Therefore $\Psi_m(\theta)\neq 0$ for $\theta\not\in \Theta_R$ and all $m$ large enough. But by the first order optimality conditions we must have $\Psi_m(\theta_m)=0$. As a result we must have the sequence is contained in a compact set  i.e., for some large enough $R$, we have that $\theta_m\in\Theta_R$ with high probability as $m\to \infty$. Since $\Psi_m$ has no minimizer outside $\Theta_R$ then by applying Theorem 5.9 of van der Vaart~\cite{van2000asymptotic}, we conclude that
\[
\hat\theta_m = (\hat\alpha_m, \hat\beta_m) \xrightarrow{\P} (\alpha_0, \beta_0).
\]

\paragraph{Asymptotic Normality.} The final step to prove asymptotic normality by  Theorem 5.23 of van der Vaart~\cite{van2000asymptotic}. We verify the conditions of this theorem for the M-estimator \((\hat\alpha_m, \hat\beta_m)\), defined as the maximizer of the empirical objective \(\ell_m(\theta)\). 

\textit{(i) Consistency:} Established previously using Theorem 5.9.

\textit{(ii) Differentiability:} The log-likelihood \(\ell_m(\theta)\) is twice continuously differentiable in \(\theta = (\alpha, \beta)\) because both \(d_\alpha(\pi, \sigma_0)\) and \(Z(\alpha, \beta)\) are smooth in \(\alpha\) and \(\beta\).

\textit{(iii) Central Limit Theorem for the Score:} 
By the classical central limit theorem, the empirical average of the first term of $\Psi_m$ converges in distribution:
\[
\sqrt{m} \left( \frac{1}{m} \sum_{j=1}^m \begin{pmatrix} \beta_0 \, \dot{d}_{\alpha_0}(\pi^{(j)}) \\ d_{\alpha_0}(\pi^{(j)}) \end{pmatrix} - \mathbb{E} \left[ \begin{pmatrix} \beta_0 \, \dot{d}_{\alpha_0}(\Pi) \\ d_{\alpha_0}(\Pi) \end{pmatrix} \right] \right)
\xrightarrow{d} \mathcal{N}(0, B),
\]
where \(B\) is the variance of the score under \(\mathbb{P}_{\alpha_0, \beta_0, \sigma_0}\).

\textit{(iv) Hessian convergence:} The Hessian \(\nabla^2_\theta \ell_m(\theta)\) is the sum of an empirical average and \(\nabla^2 \ln Z(\theta)\), both of which converge uniformly by the law of large numbers and analytic smoothness of \(Z\).

\textit{(v) Invertibility:} The matrix \(\mathcal{I}(\alpha_0, \beta_0) = -\nabla^2 \ell(\theta_0)\) is positive definite due to identifiability of $\Psi$ (Lemma~\ref{lm: identifiable}).
Thus, all conditions of Theorem 5.23 are satisfied, and the result follows. \qed

\subsection{Asymptotic Normality under Approximate Score}

\begin{proof}[Proof of Theorem~\ref{thm:approximate mle}]
Let \(\hat\theta_m\) be the exact MLE, satisfying \(\Psi_m(\hat\theta_m) = 0\). Using Theorem~\ref{thm:main_mle},
\[
\sqrt{m}(\hat\theta_m - \theta_0) \xrightarrow{d} \mathcal{N}(0, \mathcal{I}^{-1}).
\]

We now analyze the perturbation due to the approximate score:
\[
\widetilde{\Psi}_m(\tilde\theta_m) = \Psi_m(\tilde\theta_m) + \Delta(\tilde\theta_m) = 0.
\]Since \(\Psi_m(\hat\theta_m) = 0\) and \(\widetilde{\Psi}_m(\tilde\theta_m) = 0\), we write:
\[
\widetilde{\Psi}_m(\tilde\theta_m) = \Psi_m(\tilde\theta_m) + \Delta(\tilde\theta_m) = 0,
\]
which implies:
\[
\Psi_m(\tilde\theta_m) = -\Delta(\tilde\theta_m).
\]

Now apply a first-order Taylor expansion of \(\Psi_m\) around \(\hat\theta_m\):
\[
\Psi_m(\tilde\theta_m) = \Psi_m(\hat\theta_m) + \nabla \Psi_m(\hat\theta_m)(\tilde\theta_m - \hat\theta_m) + R_m,
\]
where \(R_m\) is the second-order remainder:
\[
R_m = o(\|\tilde\theta_m - \hat\theta_m\|).
\]
Since \(\Psi_m(\hat\theta_m) = 0\), this gives:
\[
\nabla \Psi_m(\hat\theta_m)(\tilde\theta_m - \hat\theta_m) = -\Delta(\tilde\theta_m) + o(\|\tilde\theta_m - \hat\theta_m\|).
\]

By assumption, \(\nabla \Psi_m(\hat\theta_m) \xrightarrow{P} \mathcal{I}(\theta_0)\), which is invertible. Thus, for large enough \(m\), we can solve:
\[
\tilde\theta_m - \hat\theta_m = -\nabla \Psi_m(\hat\theta_m)^{-1}\Delta(\tilde\theta_m) + o(\|\tilde\theta_m - \hat\theta_m\|).
\]
Using the bound \(\|\Delta(\tilde\theta_m)\| = o(m^{-1/2})\), we conclude:
\[
\|\tilde\theta_m - \hat\theta_m\| = o_P(m^{-1/2}) \Rightarrow \sqrt{m}(\tilde\theta_m - \hat\theta_m) = o_P(1).
\]

Combining with the asymptotic normality of \(\hat\theta_m\),
\[
\sqrt{m}(\tilde\theta_m - \theta_0) = \sqrt{m}(\hat\theta_m - \theta_0) + o_P(1) \xrightarrow{d} \mathcal{N}(0, \mathcal{I}^{-1}).
\]
\end{proof}





   


\section{Proofs for Efficient Sampling}

\subsection{Geometric Decay of Marginal Distributions}\label{proof: marginal}
The main goal of this section is proving Lemma~\ref{lm:Pij_decay}. We break the argument into a few steps.
The first observation is that the marginal probability is symmetric in $i$ and $j$, i.e., we claim that
\[
P_n(i,j) \;=\; P_n\bigl(n+1-j, \,n+1-i\bigr).
\] This is true because of a standard relabeling argument implies that the probability of having $\pi(i) = j$ is the same as the probability of having $\pi(n+1-j) = n+1-i$ once we account for symmetry in $d_\alpha$. Therefore, without loss of generality, it is enough to prove that  for $j-i>k,$
\[\frac{P_n(i, j+1)}{P_n(i, j)} \leq c(\alpha, \beta).\]

\medskip

To proceed with the rest of the proof, we need the following definitions. 
Define the sets of permutations:
\[
S^{(i,j)} \;:=\; \{\pi\in S_n : \pi(i)=j\} 
\quad \text{and} \quad
S^{(i,j+1)} \;:=\; \{\pi \in S_n: \pi(i)=j+1\}.
\]
and the weight function \( w: S_n \to \mathbb{R}^+ \) for a permutation \( \pi \), which assigns to each permutation \( \pi \) its corresponding probability weight:
\begin{equation}
    w(\pi) := e^{-\beta d_\alpha(\pi, \text{id})} = \prod_{i=1}^n e^{-\beta |i - \pi(i)|^\alpha}.
\end{equation}
For any subset \( S \subseteq S_n \), we extend this definition, with a slight abuse of notation, to the total weight of \( S \) as
$w(S) = \sum_{\pi \in S} w(\pi).$

In broad terms, the proof compares the weights of two sets of permutations $S^{(i,j)}$ and $S^{(i,j+1)}$.
In fact, \(\frac{P_n(i,j+1)}{P_n(i,j)}=\frac{w(S^{(i,j+1)})}{w(S^{(i,j)})}\). Thus, our objective is to prove
\(\displaystyle 
\frac{w(S^{(i,j+1)})}{w(S^{(i,j)})} < c(\alpha,\beta)
\)
for large \( j - i \). For this purpose we find mappings from permutations in $w(S^{(i,j+1)})$ to $w(S^{(i,j)})$ that shows the total weight ratio is bounded.

\begin{itemize}
\item \textbf{First Round of Mappings (Proposition~\ref{prop:first-mapping}).}  
We break down the set \(S^{(i,j+1)}\) by looking at which index was mapped to \(j\). We build a mapping (a “swap”) that sends permutations in \(S^{(i,j+1)}_{(l,j)}:= \{\pi\in S_n: \pi(i)=j+1, \pi(l)=j\}\) to the permutations in \(S^{(i,j)}_{(l,j+1)}:=\{\pi\in S_n: \pi(i)=j, \pi(l)=j+1\}\).  \\
  - If \(l > j\), we show the weight increases -- directly helping us prove \(S^{(i,j+1)}\) is smaller in total weight.\\  
  - If \(l \le j\), the mapping might decrease the weight, so we cannot immediately conclude a reduction. 

\item \textbf{Second Round of Mappings (Proposition~\ref{prop:second-mapping}).}  
For the ``problematic'' subset where \(l \le j\), we employ a second mapping by identifying the first index (larger than $j$) mapped to a value {below} $j$.  Swapping this index with $l$ brings the resulting permutation closer to \emph{identity}, leading to an increased weight. 
\end{itemize}
Finally, by uniting these two mapping arguments, we conclude \(\tfrac{P_n(i,j+1)}{P_n(i,j)} < 1\). 
We now present the two mapping steps in detail.

\subsubsection{First Round of Mappings}

The first round of mapping is described above is done by swapping inverse of $j$ and $j+1$ which gives the following bounds.
\begin{proposition}
\label{prop:first-mapping}
Suppose $j > i$, and let $S^{(i,j+1)}_{(l,j)}$ and $S^{(i,j)}_{(l,j+1)}$ be defined as above.  Then for any $l\in[n]\setminus\{i\}$, and any $k\leq j-i$,
\[\frac{w(S^{(i,j+1)}_{(l,j)})}{w(S^{(i,j)}_{(l,j+1)})}\leq \begin{cases}
    e^{-\beta((k+1)^\alpha-k^\alpha)}e^{-\beta((j+1-l)^\alpha-(j-l)^\alpha)} & l>j\\
e^{-\beta((k+1)^\alpha-k^\alpha)}e^{\beta((j+1-l)^\alpha-(j-l)^\alpha)} &  l\leq j
\end{cases}.\]
\end{proposition}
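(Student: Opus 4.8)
The plan is to exhibit an explicit weight-preserving bijection between the two permutation sets and reduce the ratio of total weights to a single, $\pi$-independent multiplicative constant. Concretely, for $\pi \in S^{(i,j+1)}_{(l,j)}$ I define $\phi(\pi)$ by swapping the values assigned to positions $i$ and $l$, i.e.\ $\phi(\pi)(i)=j$, $\phi(\pi)(l)=j+1$, and $\phi(\pi)(u)=\pi(u)$ for all $u\notin\{i,l\}$. Since $l\neq i$ (as $l\in[n]\setminus\{i\}$), this is a well-defined permutation, it lands in $S^{(i,j)}_{(l,j+1)}$, and swapping the same two values back is its inverse, so $\phi$ is a bijection between the two sets.

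First I would observe that $\phi$ alters only the two factors of $w$ coming from positions $i$ and $l$, all remaining factors being identical for $\pi$ and $\phi(\pi)$. Hence the ratio $w(\pi)/w(\phi(\pi))$ equals
\[
\frac{e^{-\beta|i-(j+1)|^\alpha}\,e^{-\beta|l-j|^\alpha}}{e^{-\beta|i-j|^\alpha}\,e^{-\beta|l-(j+1)|^\alpha}},
\]
which depends only on $i,j,l$ and not on the rest of $\pi$. Because this constant $C$ is the same for every element of the set, summing over the bijection yields $w(S^{(i,j+1)}_{(l,j)}) = C\cdot w(S^{(i,j)}_{(l,j+1)})$, so the total-weight ratio is \emph{exactly} $C$ and it only remains to bound $C$.

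Next I would split $C$ into a position-$i$ factor and a position-$l$ factor. For the $i$-factor, since $j>i$ we have $|i-(j+1)|^\alpha-|i-j|^\alpha=(j+1-i)^\alpha-(j-i)^\alpha$, and the key monotonicity input is that $x\mapsto(x+1)^\alpha-x^\alpha$ is nondecreasing for $\alpha\ge 1$ (its derivative $\alpha[(x+1)^{\alpha-1}-x^{\alpha-1}]\ge 0$). Combined with $k\le j-i$ this gives $(j+1-i)^\alpha-(j-i)^\alpha\ge(k+1)^\alpha-k^\alpha$, so the $i$-factor is at most $e^{-\beta((k+1)^\alpha-k^\alpha)}$. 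For the $l$-factor I would do the case analysis: when $l>j$, both $|l-j|$ and $|l-(j+1)|$ open as $l-j$ and $l-j-1$, giving $e^{-\beta((l-j)^\alpha-(l-j-1)^\alpha)}\le 1$; when $l\le j$ the absolute values open with the opposite sign, giving $e^{+\beta((j+1-l)^\alpha-(j-l)^\alpha)}\ge 1$. Rewriting both in the common notation of the statement produces the two displayed cases, and multiplying by the $i$-factor bound completes the argument.

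The main obstacle is conceptual rather than technical: in the regime $l\le j$ the position-$l$ factor exceeds one, so this single swap need not decrease the weight and the proposition alone cannot certify $w(S^{(i,j+1)})<w(S^{(i,j)})$. This is exactly the ``problematic'' subset flagged in the outline, and it is why the \emph{exact} value of the $l$-factor (not merely a bound) is recorded here: it is the input to the second round of mappings in Proposition~\ref{prop:second-mapping}, where a further swap pushing the permutation toward the identity recovers a net weight increase. The only place $\alpha\ge 1$ is genuinely used is the monotonicity of $(x+1)^\alpha-x^\alpha$; for $\alpha<1$ this step reverses and the $i$-factor bound fails, consistent with the standing hypothesis of Lemma~\ref{lm:Pij_decay}.
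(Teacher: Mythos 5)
Your proposal is correct and follows essentially the same route as the paper: the same swap map $\varphi_\ell$ exchanging the images of positions $i$ and $l$, the same two-factor decomposition of the weight ratio, the monotonicity of $x\mapsto(x+1)^\alpha-x^\alpha$ for $\alpha\ge 1$ to bound the position-$i$ factor via $k\le j-i$, and the same case split on $l>j$ versus $l\le j$. Your observation that the per-permutation ratio is constant, so the set-level ratio equals it exactly, is a slightly cleaner packaging of the paper's "the upper bound is independent of $\pi$, so sum over the set" step, and your explicit justification of where $\alpha\ge 1$ enters is a welcome addition to the paper's terser argument.
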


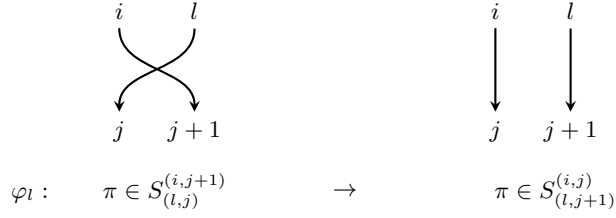
\begin{figure}
\centering\begin{tikzpicture}[%
  x=2.0cm,      
  y=1.6cm,      
  >=stealth,    
  font=\small
]

%
\node (i)    at (-.5,0) {$i$};
\node (l)    at (0,0) {$l$};
\node (i2) at (2,0) {$i$};
\node (l2) at (2.5,0) {$l$};
\node (j)    at (-.5,-1) {$j$};
\node (j+1)    at (0,-1) {$j+1$};
\node (j2) at (2,-1) {$j$};
\node (j2+1) at (2.5,-1) {$j+1$};

\node (pi1) at  (-.5,-1.5) {$\varphi_l: \qquad\pi\in S_{(l,j)}^{(i,j+1)}$};
\node (pi2) at  (2.4,-1.5) {$\pi\in S_{(l,j+1)}^{(i,j)}$};
\node (pi2) at  (1,-1.5) {$\rightarrow$};


\draw[->, thick,black]
    (i) to[out=-90, in=90] 
    node[above,sloped,pos=0.2] {}
    (j+1);
\draw[->, thick,black]
    (l) to[out=-90, in=90] 
    node[above,sloped,pos=0.2] {}
    (j);
\draw[->, thick,black]
    (i2) to[out=-90, in=90] 
    node[above,sloped,pos=0.2] {}
    (j2);
\draw[->, thick,black]
    (l2) to[out=-90, in=90] 
    node[above,sloped,pos=0.2] {}
    (j2+1);
\end{tikzpicture}
    \caption{A schematic of the first mapping. }
    \label{fig:firstmap}
\end{figure}
\begin{proof}
Define the map $\varphi_\ell: S^{(i,j+1)}_{(\ell,j)} \,\to\, S^{(i,j)}_{(\ell,j+1)}$ by swapping the \emph{images} of $i$ and~$\ell$ in any permutation $\pi$ (see Figure~\ref{fig:firstmap}).  More precisely,
\[
  (\varphi_\ell \circ \pi)(i) \;=\;\pi(\ell), 
  \quad
  (\varphi_\ell  \circ\pi)(\ell)\;=\;\pi(i),
  \quad
  (\varphi_\ell  \circ\pi)(m)\;=\;\pi(m)\ \text{for }m\neq i,\ell.\]
By direct inspection,
    \begin{align}
       \log \frac{w(\pi)}{w(\varphi_l(\pi))} = &-\beta\left(|j+1-i|^\alpha +|j-l|^\alpha\right)
    +\beta\left(|j-i|^\alpha +|j+1-l|^\alpha \right)
       \\
       \leq& \left(-\beta (k+1)^\alpha +\beta(k)^\alpha \right) + 
       \left(-\beta|j-l|^\alpha + \beta|j+1-l|^\alpha\right).
    \end{align}
    A simple bounding argument shows that the second term $  \left(-\beta|j-l|^\alpha + \beta|j+1-l|^\alpha\right)$ is $\le 0$ if $\ell>j$. Since the upper bound is independent of $\pi$, by summing over all such $\pi\in S_{(\ell,j)}^{(i,j+1)}$ we finish the proof.
\end{proof}

\subsubsection{Handling the Problematic Cases with a Second Mapping}

\begin{figure}[ht]
\begin{minipage}[t]{0.22\linewidth}
    \centering
    \begin{tikzpicture}[%
      x=2.0cm,
      y=1.6cm,
      >=stealth,
      font=\small
    ]
    \node (neg)    at (-.5,0) {$\cdots$};
    \node (L)      at (0,0)   {$l$};
    \node (L0)     at (.5,0)  {$\cdots$};
    \node (Lj1)    at (1,0)   {$j+1$};
    \node (dots)   at (1.5,0) {$\dots$};
    \node (etan)   at (2,0)   {$\eta-1$};
    \node (Leta)   at (2.5,0) {$\eta$};
    \node[draw, rectangle,
          minimum width=2.3cm, 
          minimum height=1.0cm,
          label={[yshift=-.75cm]:{\(\,1\quad\dots\quad j-1\)}}]
          (LessJ) at (-0.5,-1) {};

    \node (J)   at (.5,-1) {$j$};
    \node (Jp1) at (1,-1)  {$j+1$};
    \node () at  (1.6,-1.8) {$\rho_l: \qquad\pi\in S_{(l,j+1)}^{(i,j)} \qquad \qquad\qquad\rightarrow$};

    \node[draw, rectangle,
          minimum width=2.3cm,
          minimum height=1.0cm,
          label={[yshift=-0.75cm]above:{\(\,j+2\quad\dots\quad n\)}}]
          (MoreJ) at (2,-1) {};

    \draw[->, thick, lightgray] (Lj1) to[out=-90, in=90] (1.5,-.7);
    \draw[->, thick, lightgray] (dots) to[out=-90, in=90] (MoreJ.north);
    \draw[->, thick, lightgray] (etan) to[out=-90, in=90] (2.5,-.7);

    \draw[->, thick, black] (Leta) to [out=-150, in=20](LessJ.north);
    \draw[->, thick, black] (L) to [out=-90, in=90] (.9,-.7);

    \end{tikzpicture}
\end{minipage}
\hspace{13em}
\begin{minipage}[t]{0.2\linewidth}
    \centering
    \begin{tikzpicture}[%
      x=2.0cm,
      y=1.6cm,
      >=stealth,
      font=\small
    ]
    \node (neg)    at (-.5,0) {$\cdots$};
    \node (L)      at (0,0)   {$l$};
    \node (L0)     at (.5,0)  {$\cdots$};
    \node (Lj1)    at (1,0)   {$j+1$};
    \node (dots)   at (1.5,0) {$\dots$};
    \node (etan)   at (2,0)   {$\eta-1$};
    \node (Leta)   at (2.5,0) {$\eta$};

    \node[draw, rectangle,
          minimum width=2.3cm, 
          minimum height=1.0cm,
          label={[yshift=-.75cm]:{\(\,1\quad\dots\quad j-1\)}}]
          (LessJ) at (-0.5,-1) {};

    \node (J)   at (.5,-1) {$j$};
    \node (Jp1) at (1,-1)  {$j+1$};
    
    \node () at  (.9,-1.8) {$\pi\in S_{(j^+,j+1)}^{(i,j)}$};

    \node[draw, rectangle,
          minimum width=2.3cm,
          minimum height=1.0cm,
          label={[yshift=-0.75cm]above:{\(\,j+2\quad\dots\quad n\)}}]
          (MoreJ) at (2,-1) {};

    \draw[->, thick, lightgray] (Lj1) to[out=-90, in=90] (1.5,-.7);
    \draw[->, thick, lightgray] (dots) to[out=-90, in=90] (MoreJ.north);
    \draw[->, thick, lightgray] (etan) to[out=-90, in=90] (2.5,-.7);

    \draw[->, thick, black] (Leta) to [out=-120, in=90](1,-.7);
    \draw[->, thick, black] (L) to [out=-90, in=90] (LessJ.north);

    \end{tikzpicture}
\end{minipage}
\caption{A schematic of the second mapping.}
\label{fig:secondmap}
\end{figure}
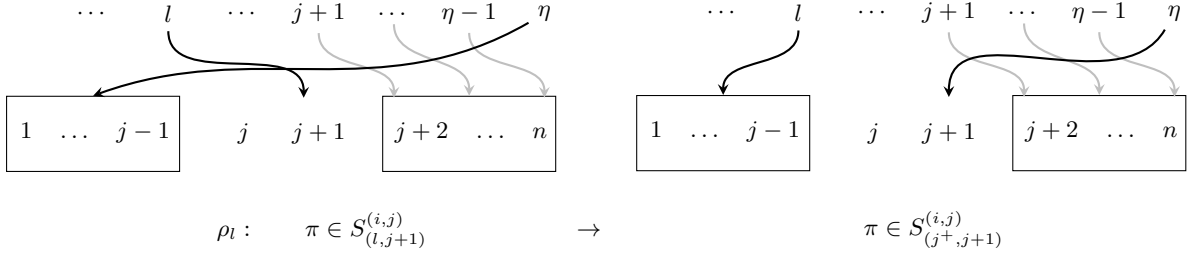

For $l\le j$, we handle those cases by constructing a second mapping that locates a suitable index that swapping it with $l$ will get a closer mapping to the central ranking (here identity).  See Figure~\ref{fig:secondmap}.
\begin{proposition}
\label{prop:second-mapping} Let $l\le j $. Also define, $ S_{(j^+,j+1)}^{(i,j)}=\bigcup_{k\ge j+1} S^{(i,j)}_{(k,j+1)}$. Then
\[  \frac{w(S^{(i,j)}_{(l,j+1)})}{ w(S^{(i,j)}_{(j^+,j+1)})}\leq e^{-\beta(j-l)^\alpha+\beta|j-1-l|^\alpha}.\]
\end{proposition}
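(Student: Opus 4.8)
The plan is to prove the bound by exhibiting a single injective ``swap'' map $\rho_l : S^{(i,j)}_{(l,j+1)} \to S^{(i,j)}_{(j^+,j+1)}$ and controlling the weight ratio $w(\pi)/w(\rho_l(\pi))$ uniformly. Granting such a map, injectivity gives
$w\big(S^{(i,j)}_{(l,j+1)}\big) = \sum_{\pi} \frac{w(\pi)}{w(\rho_l(\pi))}\, w(\rho_l(\pi)) \le \big(\sup_\pi \tfrac{w(\pi)}{w(\rho_l(\pi))}\big)\, w\big(S^{(i,j)}_{(j^+,j+1)}\big)$,
so the entire statement reduces to a pointwise estimate $w(\pi)/w(\rho_l(\pi)) \le e^{-\beta(j-l)^\alpha+\beta|j-1-l|^\alpha}$.

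Constructing $\rho_l$ and checking its combinatorial properties is the routine part. Throughout recall we are in the regime $j>i$. Given $\pi \in S^{(i,j)}_{(l,j+1)}$, I would let $\eta=\eta(\pi)$ be the smallest index with $\eta > j$ and $\pi(\eta) < j$, and set $v := \pi(\eta)$. Such an $\eta$ always exists by a pigeonhole count: positions $i$ and $l$ both lie in $\{1,\dots,j\}$ and carry the values $j$ and $j+1$, so the remaining $j-2$ positions of $\{1,\dots,j\}$ can absorb at most $j-2$ of the $j-1$ values $\{1,\dots,j-1\}$; hence at least one such value is placed at a position exceeding $j$. I then define $\rho_l(\pi)$ by swapping the images of $l$ and $\eta$, i.e.\ $\rho_l(\pi)(l)=v$, $\rho_l(\pi)(\eta)=j+1$, and $\rho_l(\pi)(m)=\pi(m)$ otherwise; since $\eta>j$ this lands in $S^{(i,j)}_{(\eta,j+1)}\subseteq S^{(i,j)}_{(j^+,j+1)}$. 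Injectivity is immediate because $\pi$ is reconstructible from $\rho_l(\pi)$: with $l$ fixed, recover $\eta=(\rho_l(\pi))^{-1}(j+1)$ and $v=\rho_l(\pi)(l)$, then undo the transposition.

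The crux --- and the step I expect to require the most care --- is the pointwise weight bound. Since only positions $l$ and $\eta$ move, a direct computation gives $\log\frac{w(\pi)}{w(\rho_l(\pi))} = -\beta G$, where $G := (j+1-l)^\alpha + (\eta-v)^\alpha - |l-v|^\alpha - (\eta-(j+1))^\alpha$, so it suffices to show $G \ge (j-l)^\alpha - |j-1-l|^\alpha$ for every admissible $v\in\{1,\dots,j-1\}$ and $\eta\in\{j+1,\dots,n\}$. Here is where $\alpha\ge1$ enters: treating $G$ as a function of the real variables $v$ and $\eta$, the monotonicity of $t\mapsto t^{\alpha-1}$ shows $\partial_\eta G \ge 0$ and $\partial_v G \le 0$, so $G$ is minimized at $\eta=j+1$, $v=j-1$, where it equals $(j+1-l)^\alpha + 2^\alpha - |j-1-l|^\alpha$. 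This is at least $(j-l)^\alpha - |j-1-l|^\alpha$ because $(j+1-l)^\alpha \ge (j-l)^\alpha$ and $2^\alpha>0$ (and for the boundary case $l=j$ the target is negative while $G\ge0$ already by the rearrangement inequality for the convex function $t\mapsto t^\alpha$). Feeding this pointwise bound back into the injective sum from the first paragraph yields the claimed inequality. The one subtlety to watch is that the worst-case point $(v,\eta)=(j-1,j+1)$ need not be attained by the actual $\pi$; the monotonicity argument is precisely what guarantees $G(v,\eta)$ dominates its value there regardless.
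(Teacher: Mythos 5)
Your proof is correct and follows essentially the same route as the paper's: the identical map $\rho_l$ (swap $l$ with the first index beyond $j$ whose image lies below $j$), the same injectivity-plus-uniform-ratio summation, and the same extremal argument placing the worst case at $(v,\eta)=(j-1,\,j+1)$, which is exactly where the paper evaluates its bound $a(j,l)$. Your explicit derivative-based monotonicity check and the clean recovery of $\eta$ as the preimage of $j+1$ are slightly more detailed than the paper's exposition, but the underlying ideas coincide.
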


\begin{proof}
Fix a permutation  $\pi \in S^{(i,j)}_{(\ell,j+1)}$. The first step is to find a mapping that maps $\pi$ to another permutation in $S^{(i,j)}_{(j^+,j+1)}$
For each permutation $\pi \in S^{(i,j)}_{(\ell,j+1)}$, there exists at least one index $\eta \in \{j+1,\dots,n\}$ such that $\pi(\eta) \le j-1$.  This is true 
because there are $n-j$ indices in $\{j+1,\dots,n\}$, but $\pi(\ell)=j+1$ forces at most $n-j-1$ of those indices to map above $j+1$.  
Define $\eta_\pi$ to be the smallest such index i.e., $  \eta_\pi \;:=\;
  \min\bigl\{\,
    m \in \{j+1,\dots,n\}:\;\pi(m)\le j-1
  \bigr\}$.

    Using this, we define the projection 
    $\rho_l: S^{(i,j)}_{(l,j+1)}\rightarrow S^{(i,j)}_{(j^+,j+1)},$ by swapping $(l,\eta_\pi)$ in the permutation $\pi$, i.e.,
    \[
  (\rho_\ell \circ \pi)(l) \;=\;\pi(\eta_\pi), 
  \quad
  (\rho_\ell  \circ\pi)(\eta_\pi)\;=\;\pi(l),
  \quad
  (\rho_\ell  \circ\pi)(m)\;=\;\pi(m)\ \text{for }m\neq \eta_\pi,\ell.\]

\paragraph{Injectivity.}
The map $\rho_l$ is one-to-one, but not onto. 
 In fact, for any $k\ge j+1$, any permutation $\pi'\in S^{(i,j)}_{(k,j+1)}$, can be uniquely inverted by swapping $l$ and the corresponding $k$ if it satisfies being in the image of $\rho_l$, i.e.,
    \begin{equation}\rho_l^{-1}(\pi') =
    \begin{cases}
             (l,k)\times \pi' & \text{ if } \pi'(l) \le j-1\text{ and }\pi'(s)> j+1 \text{ for }s\in\{j+1,j+2,\ldots,k\}\\
             null               & \text{ otherwise.}
    \end{cases}
    \end{equation}
    See Figure~\ref{fig:secondmap}.
Hence, summing over $\pi\in S^{(i,j)}_{(l,j+1)}$ and comparing to $\rho_l(\pi)$ in the codomain involves no collisions.

 \paragraph{Weight Change.}  
 Under the swap $\rho_k$, the weight $w(\pi)$ can change by a factor 
\[
   \frac{w(\pi)}{w(\rho_l(\pi))}=exp\big(-\beta\Bigl((j+1-l)^\alpha+(\eta_\pi-\pi(\eta_\pi))^\alpha-(\eta_\pi-j-1)^\alpha-
   |l-\pi(\eta_\pi)|^\alpha\Bigr)\big).
\]
Now, if we sum up over all possible values of $\eta_\pi\geq j+1$, we get
    \begin{align*}
        w(S^{(i,j)}_{(l,j+1)}) =& \sum_{k=j+1}^n 
        \sum_{\substack{\pi'\in S^{(i,j)}_{(k,j+1)}:\\ \rho_l^{-1}(\pi')\neq null}}
        w(\rho_l^{-1}(\pi'))\\
        =& \sum_{k=j+1}^n 
        \sum_{\substack{\pi'\in S^{(i,j)}_{(k,j+1)}:\\ \rho_l^{-1}(\pi')\neq null}}w(\pi')\exp\big(-\beta\Bigl((j+1-l)^\alpha+(k-\pi(k))^\alpha-(k-j-1)^\alpha\\
        &\qquad\qquad-
   |l-\pi(k)|^\alpha\Bigr)\big).
    \end{align*}
    Define $a(j,k,l,\pi(k)):=\exp\big(-\beta\Bigl((j+1-l)^\alpha+(k-\pi(k))^\alpha-(k-j-1)^\alpha-
   |l-\pi(k)|^\alpha\Bigr)\big)$.
Observe that $a(j,k,l,\pi(k))$  achieves its maximum when $\pi(k)$ achieves its highest value, i.e., $\pi(k)=j-1$. Now, substituting $\pi(k)=j-1$, the function $a(j,k,l,j-1)$ is decreasing in $k$, and since $k\ge j+1$,
\begin{equation}
    a(j,k,l,\pi(k)) \le a(j,j+1,l,j-1) = a(j,l),
\end{equation}
where $a(j,l):= \exp\left(
        -\beta(j+1-l)^\alpha
        +\beta|j-1-l|^\alpha\right)$.
  As a result, we prove the claim: \[  w(S^{(i,j)}_{(l,j+1)})\leq a(j,l)  \sum_{k=j+1}^n 
        \sum_{\substack{\pi'\in S^{(i,j)}_{(k,j+1)}:\\ \rho_l^{-1}(\pi')\neq null}}w(\pi')\leq a(j,l) w(S^{(i,j)}_{j^+,j+1}).\] 
 \end{proof}
\subsubsection{Geometric Decay Bound (Proof of Lemma~\ref{lm:Pij_decay})}\label{app:proof pij}
\begin{lemma}
\label{lm:Pij_decay_restate}
From Propositions~\ref{prop:first-mapping} and \ref{prop:second-mapping}, it follows that for sufficiently large $k$, for all $n$  and $j,i$ satisfying $j - i\geq k$, 
\[
\frac{P_n(i,\,j+1)}{P_n(i,\,j)} 
  \;\leq\;
  C(\alpha,\beta) < 1,
\] where $C(\alpha,\beta)=\frac{e^{-\beta\alpha k^{\alpha-1}}}{\alpha - 1} \frac{\Gamma\left(\frac{1}{\alpha - 1}\right)}{(\beta \alpha)^{\frac{1}{\alpha - 1}}}$  for $\alpha>1$, and $C(1,\beta)  = \frac{2e^{-2\beta}}{1+e^{-2\beta}}$
\end{lemma}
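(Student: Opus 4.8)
The plan is to turn the per-index estimates of Propositions~\ref{prop:first-mapping} and~\ref{prop:second-mapping} into a single bound on $P_n(i,j+1)/P_n(i,j) = w(S^{(i,j+1)})/w(S^{(i,j)})$. First I would partition the numerator by the position $l=\pi^{-1}(j)$, writing $w(S^{(i,j+1)})=\sum_{l\neq i}w(S^{(i,j+1)}_{(l,j)})$, and note that the blocks $\{S^{(i,j)}_{(l,j+1)}\}_l$ partition $S^{(i,j)}$. Proposition~\ref{prop:first-mapping} (the bijection $\varphi_l$) compares each numerator block to the corresponding denominator block. Writing $A:=e^{-\beta((k+1)^\alpha-k^\alpha)}$ for the common leading factor (available since $j-i\ge k$), the ``far'' blocks $l>j$ contribute at most $A\sum_{l>j}w(S^{(i,j)}_{(l,j+1)})=A\,w(S^{(i,j)}_{(j^+,j+1)})$, since there the extra displacement factor is $\le 1$.

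For the ``near'' blocks $l\le j$ the displacement factor $B_l:=e^{\beta((j+1-l)^\alpha-(j-l)^\alpha)}\ge 1$ can be large, so Proposition~\ref{prop:first-mapping} alone is insufficient; this is where Proposition~\ref{prop:second-mapping} enters. I would chain the two maps, $S^{(i,j+1)}_{(l,j)}\xrightarrow{\varphi_l}S^{(i,j)}_{(l,j+1)}\xrightarrow{\rho_l}S^{(i,j)}_{(j^+,j+1)}$, rerouting each near block into the single ``good'' set $G:=S^{(i,j)}_{(j^+,j+1)}\subseteq S^{(i,j)}$ and picking up the factor $A\,B_l\,a(j,l)$. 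Setting $s=j-l$, the product $B_l\,a(j,l)$ telescopes to $e^{-\beta(s^\alpha-(s-1)^\alpha)}$. Lower-bounding the denominator by $w(G)$ then gives
\[
\frac{P_n(i,j+1)}{P_n(i,j)}\;\le\;A\Big(1+\sum_{s\ge 1}e^{-\beta(s^\alpha-(s-1)^\alpha)}\Big),
\]
so the whole problem reduces to controlling this one series.

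For $\alpha>1$ the summand genuinely decays: by convexity $s^\alpha-(s-1)^\alpha\ge\alpha(s-1)^{\alpha-1}$, so the series is dominated by $\int_0^\infty e^{-\beta\alpha t^{\alpha-1}}\,dt$, which the substitution $u=\beta\alpha t^{\alpha-1}$ evaluates to $\frac{\Gamma(1/(\alpha-1))}{(\alpha-1)(\beta\alpha)^{1/(\alpha-1)}}$. Since $A\le e^{-\beta\alpha k^{\alpha-1}}\to 0$ as $k\to\infty$ (again by convexity), the product is exactly the stated $C(\alpha,\beta)$ and drops below $1$ once $k$ is large; monotonicity of each factor in $\alpha$ and $\beta$ gives the monotonicity claim carried over into Lemma~\ref{lm:Pij_decay}.

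The main obstacle is the boundary case $\alpha=1$, which is precisely where this argument degenerates: the summand becomes the constant $e^{-\beta}$, the series diverges, and $A=e^{-\beta}$ no longer shrinks with $k$. The loss is that rerouting every near block into the one set $G$ overcounts, which is harmless when the per-term factor is summable but fatal when it is constant. For $\alpha=1$ I would therefore replace the crude union bound by an injective, telescoping argument that tracks the geometry block by block, organizing the near contributions into a single geometric cascade with ratio $e^{-2\beta}$ to obtain the alternating sum $2e^{-2\beta}\sum_{m\ge 0}(-e^{-2\beta})^m=\frac{2e^{-2\beta}}{1+e^{-2\beta}}$, which is $<1$ for every $\beta>0$ (hence no largeness of $k$ is needed in the linear case). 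Establishing that this refined cascade is genuinely injective, so that the overcounting is eliminated, is the delicate step, and recovering the clean closed form $C(1,\beta)=\frac{2e^{-2\beta}}{1+e^{-2\beta}}$ is the payoff.
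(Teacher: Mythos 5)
Your treatment of the case $\alpha>1$ is essentially the paper's own proof: the same block decomposition of the numerator by $l=\pi^{-1}(j)$ and of the denominator by $l=\pi^{-1}(j+1)$, the same chaining of $\varphi_l$ (Proposition~\ref{prop:first-mapping}) with $\rho_l$ (Proposition~\ref{prop:second-mapping}) to reroute the near blocks into $G=S^{(i,j)}_{(j^+,j+1)}$, the same telescoped factor $e^{-\beta(s^\alpha-(s-1)^\alpha)}$, the same comparison with $\int_0^\infty e^{-\beta\alpha t^{\alpha-1}}\,\d t=\Gamma\bigl(\tfrac{1}{\alpha-1}\bigr)/\bigl((\alpha-1)(\beta\alpha)^{1/(\alpha-1)}\bigr)$, and the same prefactor bound $A\le e^{-\beta\alpha k^{\alpha-1}}$. (Both you and the paper drop bounded additive terms: your sum should start at $s=0$, which contributes $e^{\beta}$, and lower-bounding the denominator by $w(G)$ leaves a ``$1+$'' in front of the series; these are absorbed by the decaying prefactor once $k$ is large, which the lemma permits.)

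The genuine gap is $\alpha=1$, and it is not a small one: you explicitly leave unproven the only step that would turn your sketch into an argument. The ``geometric cascade with ratio $e^{-2\beta}$'' producing $2e^{-2\beta}\sum_{m\ge 0}(-e^{-2\beta})^m$ is reverse-engineering of the target constant --- nothing in your construction produces alternating signs, and the injectivity you yourself flag as ``the delicate step'' is exactly what is missing. The paper's route is different, more elementary, and needs no cascade and no largeness of $k$. At $\alpha=1$ the Proposition~\ref{prop:first-mapping} coefficients are \emph{exactly} $e^{-\beta}$ on every far block and \emph{exactly} $e^{+\beta}$ on every near block, so instead of discarding the near mass from the denominator one keeps it:
\[
\frac{P_n(i,j+1)}{P_n(i,j)}\;\le\;e^{-\beta}\,\frac{e^{-\beta}\,w(G)+e^{\beta}\,W^-}{w(G)+W^-},
\qquad W^-:=\sum_{l\le j}w\bigl(S^{(i,j)}_{(l,j+1)}\bigr),
\]
which is an increasing fractional-linear function of the single aggregate $W^-/w(G)$ (this is where the uniformity of the coefficients at $\alpha=1$ is used --- for $\alpha>1$ the near-block coefficients differ and no such reduction to one scalar is available). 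Proposition~\ref{prop:second-mapping}, whose factor $a(j,l)$ equals $e^{-2\beta}$ at $\alpha=1$, is then invoked to bound $W^-/w(G)\le e^{-2\beta}$, and substituting this value yields $\tfrac{2e^{-2\beta}}{1+e^{-2\beta}}<1$ for every $\beta>0$. So the two ideas you were missing are: do not throw the near mass out of the denominator in the linear case, and exploit the coefficient uniformity so that only $W^-$ matters. (Your instinct that aggregating the near blocks is delicate is legitimate --- summing Proposition~\ref{prop:second-mapping} naively over $l\le j$ costs a factor of order $j$, a point the paper passes over quickly --- but the resolution runs through the monotone ratio above, not through an injective cascade.)
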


\begin{proof}
By applying Proposition~\ref{prop:first-mapping}, for $j-i\geq k$,
    \begin{align}\label{eq:wsj-}
        \frac{P_n(i,j+1)}{P_n(i,j)} \leq  
        &e^{-\beta (k+1)^\alpha +\beta(k)^\alpha }\frac{\sum_{l\in [n]} e^{-\beta|j-l|^\alpha + \beta|j+1-l|^\alpha}w(S^{(i,j)}_{(l,j+1)})}
        {\sum_{l\in[n]} w(S^{(i,j)}_{(l,j+1)})} \\\leq &
        e^{-\beta (k+1)^\alpha +\beta(k)^\alpha }\frac{w(S^{(i,j)}_{(j^+,j+1)}) + \sum_{l=1}^j e^{-\beta|j-l|^\alpha + \beta|j+1-l|^\alpha}w(S_{(l,j+1)}^{(i,j)}) }
        { w(S^{(i,j)}_{(j^+,j+1)}) + \sum_{l=1}^j w(S_{(l,j+1)}^{(i,j)})}\\
        = &
        e^{-\beta (k+1)^\alpha +\beta(k)^\alpha }\frac{1+ \sum_{l=1}^j e^{-\beta|j-l|^\alpha + \beta|j+1-l|^\alpha}\frac{w(S_{(l,j+1)}^{(i,j)}) }{w(S^{(i,j)}_{(j^+,j+1)}) }}
        {1+ \sum_{l=1}^j \frac{w(S_{(l,j+1)}^{(i,j)})}{w(S^{(i,j)}_{(j^+,j+1)}) }},
    \end{align}
    where in the second inequality we used the fact that for $l> j$, the coefficient  $e^{-\beta|j-l|^\alpha + \beta|j+1-l|^\alpha}$ is less than $1$.  
    
Now, we can use the second mapping (Proposition~\ref{prop:second-mapping}) to derive the desired upper bound. To see this, note that the coefficients in the numerator are greater than 1, in fact, for $l\leq j$, we have $exp(-\beta(j-l)^\alpha + \beta(j+1-l)^\alpha)\geq exp(\beta)$. Therefore, we can apply Proposition~\ref{prop:second-mapping},
 \begin{align*}
        \frac{P_n(i,j+1)}{P_n(i,j)} \leq &
        e^{-\beta (k+1)^\alpha +\beta(k)^\alpha }\frac{1+ \sum_{l=1}^j e^{-\beta|j-l|^\alpha + \beta|j+1-l|^\alpha}a(j,l)}
        {1+ \sum_{l=1}^j a(j,l)},
    \end{align*}
    where as before $a(j,l)= \exp\left(
        -\beta(j+1-l)^\alpha
        +\beta|j-1-l|^\alpha\right)$.
We now consider two cases based on the value of $\alpha$.

\paragraph{Case $\alpha>1$:}   In this case,
we can rewrite the numerator
\[\sum_{l=1}^j e^{-\beta|j-l|^\alpha + \beta|j+1-l|^\alpha}a(j,l)=e^{\beta}+\sum_{i=0}^{j-1}e^{-\beta((i+1)^{\alpha}-i^{\alpha})}\leq e^{\beta}+\sum_{i=0}^{j-1}e^{-\beta\alpha i^{\alpha-1}}. \]
For the denominator, we use the trivial lower bound of 1.
Therefore,
\[  \frac{P_n(i,j+1)}{P_n(i,j)} \le e^{-\beta (k+1)^\alpha +\beta(k)^\alpha }\Big(1+\big( e^{\beta}+\sum_{i=0}^{j-1}e^{-\beta\alpha i^{\alpha-1}}\big)\Big) .\]

For $\alpha>1$, the following limit  exists: $\lim_{j\to\infty}\sum_{i=0}^{j-1}e^{-\beta\alpha i^{\alpha-1}}=C_\infty(\alpha,\beta)$. Also, note that $C_\infty(\alpha,\beta)\leq \int_{0}^{\infty} e^{-\beta \alpha x^{\alpha - 1}} \, dx = \frac{1}{\alpha - 1} \frac{\Gamma\left(\frac{1}{\alpha - 1}\right)}{(\beta \alpha)^{\frac{1}{\alpha - 1}}}.
$ Also note that  $e^{-\beta((k+1)^\alpha-k^\alpha)}\leq e^{-\beta\alpha k^{\alpha-1}}$.
Therefore, 
$$\frac{P_n(i,j+1)}{P_n(i,j)}\leq e^{-\beta\alpha k^{\alpha-1}}C_\infty(\alpha,\beta).$$
 So, if we choose $k$ large enough we can see that $\frac{P_n(i,j+1)}{P_n(i,j)}$ is bounded by a constant smaller than 1. In fact,
$$\frac{P_n(i,j+1)}{P_n(i,j)}\leq 
\frac{e^{-\beta\alpha k^{\alpha-1}}}{\alpha - 1} \frac{\Gamma\left(\frac{1}{\alpha - 1}\right)}{(\beta \alpha)^{\frac{1}{\alpha - 1}}}.$$


\paragraph{Case $\alpha=1$:}    
By directly applying $\alpha=1$ in \eqref{eq:wsj-}, we get 
\begin{equation*}
        \frac{P_n(i,j+1)}{P_n(i,j)} \leq 
       e^{-\beta}\frac{e^{-\beta}w(S^{(i,j)}_{(j^+,j+1)}) + e^{\beta}w(S^{(i,j)}_{(j^-,j+1)}) }
        { w(S^{(i,j)}_{(j^+,j+1)}) + w(S^{(i,j)}_{(j^-,j+1)})}.
    \end{equation*}
    Now, note that for $\alpha=1$, the value of $a(j,l)$ is equal to $exp(-2\beta)$. Therefore, $w(S^{(i,j)}_{(j^-,j+1)})\leq e^{-2\beta}w(S^{(i,j)}_{(j^+,j+1)})$. Thus,
     \begin{align*}
        \frac{P_n(i,j+1)}{P_n(i,j)} &\leq 
       e^{-\beta}\Big(e^{-\beta}+\frac{(e^{\beta}-e^{-\beta})w(S^{(i,j)}_{(j^-,j+1)}) }
        { w(S^{(i,j)}_{(j^+,j+1)}) + w(S^{(i,j)}_{(j^-,j+1)})}\Big)\\
        & \leq 
       e^{-\beta}(e^{-\beta}+(e^{\beta}-e^{-\beta})\frac{e^{-2\beta}}
        { 1 + e^{-2\beta}})\\
        &=\frac{e^{-2\beta}(2-e^{-2\beta})}{1+e^{-2\beta}}\leq \frac{2e^{-2\beta}}{1+e^{-2\beta}}<1
    \end{align*}
Finishing the proof of the lemma in both cases.
\end{proof}

\subsection{Proof of Lemma~\ref{lm:tv_distance}}\label{sec: truncation proof}
We start by using Lemma~\ref{lm:Pij_decay} to prove the following claim.

\noindent\textbf{Claim}
Given the conditions of Theorem~\ref{thm:main sampling}, there exists constants $c<1$, \( k>0 \) and $\gamma$ such that for any \(i\in [n]\),
    \[
    \sum_{j : |j - i| > k} P_n(i, j) \leq \gamma c^{k}.
    \]
\begin{proof}
    Consider a fixed index \( i \).
    Let \( C(\alpha, \beta)<1 \) and $k$ be the constants from Lemma~\ref{lm:Pij_decay} such that for any \( j \) satisfying \( j - i > k\),
    \[
    P_n(i, j) \leq C(\alpha, \beta) P_n(i, j-1) \leq C(\alpha, \beta)^{|j - i|-k} P_n(i, i+k).
    \]
    A similar bound holds for $ P_n(i,j)$ when $j<i-k$. 
    
    To bound the sum of probabilities where the deviation exceeds \( k \), we split the summation into deviations to the right and deviations to the left of \( i \):
    \[
    \sum_{j : |j - i| > k} P_n(i, j) = \sum_{j > i + k} P_n(i, j) + \sum_{j < i - k} P_n(i, j).
    \]
    
    Applying the geometric decay bound to each term, we obtain
    \[
    \sum_{j : |j - i| > k} P_n(i, j) \leq \sum_{j > i + k} C(\alpha, \beta)^{j - i} \frac{P_n(i, i+k)}{C(\alpha, \beta)^{k}} + \sum_{j < i - k} C(\alpha, \beta)^{i - j} \frac{P_n(i, i-k)}{C(\alpha, \beta)^{k}} .
    \]
    Recognizing that both sums are identical in form, we can combine them:
    \[
    \sum_{j : |j - i| > k} P_n(i, j) \leq \Big(\frac{P_n(i, i-k)}{C(\alpha, \beta)^{k}} + \frac{P_n(i, i+k)}{C(\alpha, \beta)^{k}} \Big) \sum_{k = k + 1}^{\infty} C(\alpha, \beta)^k.
    \]    
    The infinite geometric series \( \sum_{k = k + 1}^{\infty} C(\alpha, \beta)^k \) converges to \( \frac{C(\alpha, \beta)^{k
    + 1}}{1 - C(\alpha, \beta)} \). Therefore, with $\gamma =2\frac{P_n(i, i+k)}{C(\alpha, \beta)^{k}(1 - C(\alpha, \beta))} $ and $c=C(\alpha,\beta)$, we have the proof of claim. 
\end{proof}
Now, we are ready to prove Lemma~\ref{lm:tv_distance}. 
\begin{proof}[Proof of Lemma~\ref{lm:tv_distance}]
We start with the first part.
Let $S_n^{(k)}$ be the set of permutations that maps at least 
one element further than $k$, i.e., $S^{(k)}_n = \{\pi\in S_n: \exists i\in[n]  \text{  s.t.  } |i-\pi(i)|> k\}$.  
    Note that for any $\pi \in S_{n}^{(k)}$, $ \P^{(k)}_{\alpha,\beta}(\pi) = 0\leq \P_{\alpha,\beta}(\pi)$, and for any $\pi\in S_n \setminus S_n^{(k)}$, $\P^{(k)}_{\alpha,\beta}(\pi)\geq \P_{\alpha,\beta}(\pi)$. Hence, the set of permutation that have a higher value over $\P$ is exactly $S_n^{(k)}$, i.e.,
    $$\{\pi\in S_n: \P_{\alpha,\beta}(\pi)>\P^{(k)}_{\alpha,\beta}(\pi)\}=S_n^{(k)}$$
    Therefore,
    by the definition of the total-variation distance,
    $$ \|\P_{\alpha,\beta} - \P^{(k)}_{\alpha,\beta}\|_{tv} = 2(\P_{\alpha,\beta}(S^{(k)}_n) - \P^{(k)}_{\alpha,\beta}(S^{(k)}_n)) = 2\P_{\alpha,\beta}(S^{(k)}_n) $$

    Define $S^{(k)}_{n,i} = \{\pi\in S_n: |i-\pi(i)|>k\}$. Therefore, $S^{(k)}_n = \bigcup_{i}S^{(k)}_{n,i}$. Let $c$ be the constants given by the claim above. Then if we choose $k=\frac{\log(2n/\epsilon)}{-\log(c)}$, by applying the result of the claim,
    $$\P_{\alpha,\beta}(S^{(k)}_{n,i}) \leq  \sum_{|j-i|>k} P_n(i,j)\leq \frac{\epsilon}{2n}$$
    As a result,
    $$\P_{\alpha,\beta}(S_n^{(k)})\leq \sum_{i=1}^n \P_{\alpha,\beta}(S^{(k)}_{n,i}) \leq n\frac{\epsilon}{2n}=\frac{\epsilon}{2}$$
    So, we conclude that $ \|\P_{\alpha,\beta} - \P^{(k)}_{\alpha,\beta}\|_{tv}\leq \epsilon$.

Now, we prove the bound on the permanent. 
Recall the function $w:S_n\rightarrow \mathbb{R^+}$ is the weight of each permutation:
    $$w(\pi) = \prod_{i=1}^n e^{-\beta|i-\pi(i)|^\alpha}.$$   Note that
    $$per(A_n) - per(A^{(k)}_n) = \sum_{\pi\in S^{(k)}_{n} } w(\pi) =per(A_n) \P_{\alpha,\beta}(S_n^{(k)}).$$
By applying total variation bound,
    $$\P_{\alpha,\beta}(S_n^{(k)})\leq \sum_{i=1}^n \P_{\alpha,\beta}(S^{(k)}_{n,i}) \leq n \frac{\epsilon}{n}\leq \epsilon,$$
    we get the second part of lemma:
    $$\frac{per(A_n) - per(A^{(k)}_n)}{per(A_n)} \leq \P_{\alpha,\beta}(S_n^{(k)})\leq \epsilon $$
\end{proof}

\subsection{Details of the Sampling Algorithm}\label{sec:dp construct}
In this section, we present the omitted details of how the sampling algorithm operates, explicitly using dynamic programming (DP).

\paragraph{DP states.}
We structure our DP algorithm into layers, where each layer corresponds to assigning a row  to a column. 
By assigning row $i$ to column $j$, we mean the permutation $\pi$ is constructed such that $\pi(i) = j$.
Specifically, we define DP states and transitions as follows:
\begin{itemize}
\item We have $n+1$ layers indexed by $i =  0, \dots, n$, representing stages in which rows $1, \dots, n$ are sequentially assigned to columns.
\item Each DP state at layer $i$ (for $i = 0, 1, \dots, n$) captures which columns are still available to assign at that stage. Due to the truncation, element $i$ can only be assigned to columns within a bandwidth of $\pm k$ around position $i$. Thus, each DP state is represented succinctly by a binary vector of length $2k$, where a '0' indicates an available column and a '1' indicates an assigned column.
\item The initial DP state at layer $-1$ is set to $\text{"1\dots10\dots0"}$, a binary string with $k$ ones followed by $k$ zeros. We initialize $DP[-1][\text{"1\dots10\dots0"}] = 1$.
\end{itemize}

\paragraph{DP Transitions and Updates.}
Transitions between DP states from layer $i-1$ to layer $i$ occur by:

\begin{enumerate}
\item \textbf{Left-shifting} the binary vector from the state at layer $i-1$, appending a new '0' to the right to indicate a new available column for row $i$.
\item \textbf{Flipping exactly one} of the '0's to '1', representing the assignment of the current element $i$ to a specific column. Precisely, assigning element $i$ to column $j$ corresponds to flipping the bit indexed by $i - j + k$. The edge's weight for this assignment is $\exp(-\beta|i-j|^\alpha)$.
\end{enumerate}

Each DP state at layer $i$ thus has exactly $k$ bits set to '1'. The cumulative weight of reaching state $s_{\text{new}}$ at layer $i$ is updated as:
\[
  DP[i][s_{\text{new}}] 
  \;=\; \sum_{s_{\text{old}}}\, 
  DP[i-1][s_{\text{old}}]
  \;\times\; \bigl(A_n^{(k)}[i,\, \text{col\_selected}(s_{\text{old}}\!\to\!s_{\text{new}})]\bigr),
\]
where the sum is over all valid predecessor states $s_{\text{old}}$.
Here, $\text{col\_selected}(s_{\text{old}} \to s_{\text{new}})$ denotes the bit flipped during the transition.

\paragraph{Sequential Sampling from the DP Table.}
The sampling begins from the terminal state at layer $n$, denoted by the binary vector $\text{"1\dots10\dots0"}$. At each step $i = n, n-1, \dots, 0$, we move backward through the DP table, selecting a predecessor state at layer $i-1$ with probability proportional to the corresponding DP state weight. The bit flipped during this transition identifies the column assigned to element $i$. This sequential backward pass continues until reaching the initial state at layer $-1$, thus completing the permutation.

The final permutation sampled this way precisely follows the truncated distribution $\hat{\P}_n^{(k)}$. Algorithms explicitly detailing these DP state constructions and sampling procedures are provided in Algorithms~\ref{alg:build_tag} and \ref{alg:sample_dp}.
Note the DP state space size at each layer is $\binom{2k}{k}$, with at most $k$ predecessors per state. Therefore, the forward DP computation and backward sampling both run in time $O\left(nk\binom{2k}{k}\right)$.

\subsection{Proof of  Lemma~\ref{thm:DP}}\label{sec:dp proof}
\begin{proof}
We proceed by induction on the layers, and we show  that $DP[i][s]$ is equal to the cumulative weights of all partial assignments up to  \(i\) given the availability of assignments corresponding to \(s\).

Base case $i=0$  holds by definition, since $DP[0][s_0] = 1.$ 
Assume that for some \(i \geq 0\), the DP table correctly stores the cumulative weights of all partial assignments up to layer \(i\) for every state \(s\) in layer \(i\). That is, for each state \(s\) in layer \(i\),
\[
DP[i][s] = \sum_{\pi \in S_i(s)} \prod_{j=1}^i A_n^{(k)}[j, \pi(j)],
\]
where \(S_i(s)\) denotes the set of all partial permutations up to layer \(i\) that result in state \(s\).

For the induction step, we need to show that the DP table correctly computes \(DP[i+1][s']\) for each state \(s'\) in layer \(i+1\).
Consider a state \(s'\) in layer \(i+1\). Let \(s\) be the predecessor state in layer \(i\), and let \(j\) be the column to which row \(i+1\) is assigned, corresponding to the flipped bit in \(s'\).
The cumulative weight for state \(s'\) is updated as:
\[
DP[i+1][s'] = \sum_{s \in \text{Pred}(s')} DP[i][s] \times A_n^{(k)}[i+1, j],
\]
where \(\text{Pred}(s')\) denotes all predecessor states \(s\) that can transition to \(s'\).

By induction hypothesis:
\[
DP[i][s] = \sum_{\pi \in S_i(s)} \prod_{j=1}^i A_n^{(k)}[j, \pi(j)].
\]
Substituting into the previous equation:
\[
DP[i+1][s'] = \sum_{s \in \text{Pred}(s')} \left( \sum_{\pi \in S_i(s)} \prod_{j=1}^i A_n^{(k)}[j, \pi(j)] \right) \times A_n^{(k)}[i+1, j].
\]
This can be rewritten as:
\[
DP[i+1][s'] = \sum_{s \in \text{Pred}(s')} \sum_{\pi \in S_i(s)} \prod_{j=1}^{i+1} A_n^{(k)}[j, \pi(j)],
\]
which is precisely:
\[
DP[i+1][s'] = \sum_{\pi \in S_{i+1}(s')} \prod_{j=1}^{i+1} A_n^{(k)}[j, \pi(j)],
\]
where \(S_{i+1}(s')\) denotes the set of all partial permutations up to layer \(i+1\) that result in state \(s'\).
Thus, the DP table correctly accumulates the weights for all states in layer \(i+1\).

After processing all \(n\) layers, the final layer \(n\) contains the terminal state \(s_n = ``1\cdots1\,0\cdots0"\). The value \(DP[n][s_n]\) equals \(\text{per}(A_n^{(k)})\), as it sums over all valid permutations \(\pi \in S_n\) weighted by \(\prod_{j=1}^n A_n^{(k)}[j, \pi(j)]\).

\textbf{Sampling Correctness:}
The sampling procedure performs a backward traversal from the terminal state \(s_n\) to the initial state \(s_0\). At each step \(i = n, n-1, \dots, 1\), the algorithm selects a predecessor state \(s\) with probability proportional to the weight of the edge connecting \(s\) to \(s'\). Formally, the probability of transitioning from \(s\) to \(s'\) is:
\[
P(s \rightarrow s') = \frac{ DP[i-1][s'] A_n^{(k)}[i, j]}{DP[i][s]},
\]
where \(j\) is the column assigned at this transition.
By the inductive construction of the DP table, these transition probabilities ensure that the probability of sampling a path
corresponding to\(\pi\) is:
\[\prod_{i=0}^n\frac{ DP[i-1][s'] A_n^{(k)}[i, \pi(i)]}{DP[i][s]}=\frac{\prod_{j=1}^n A_n^{(k)}[j, \pi(j)]}{DP[n][s_0]}=\frac{\prod_{j=1}^n A_n^{(k)}[j, \pi(j)]}{\text{per}(A_n^{(k)})} = \hat{\P}_{\alpha,\beta}^{(k)}(\pi).
\]
Therefore, the algorithm correctly samples permutations according to \(\hat{\P}_{\alpha,\beta}^{(k)}\).

\textbf{Time Complexity:}
The algorithm involves two main phases: building the DP table and performing the backward sampling. There are $n$ layers and  \(\binom{2k}{k}\) nodes per layer. Each state has at most \(k\) predecessors (since exactly one '0' is flipped to '1' within a window of \(2k\) positions), and each transition involved constant time multiplication and addition.     
    Therefore, the total time to build the DP table is:
    \[
    O\left(n \cdot \binom{2k}{k} \cdot k\right).
    \]
    
    Using the approximation \(\binom{2k}{k} \leq \frac{4^{(k)}}{\sqrt{\pi k}}\) (from Stirling's formula), the time complexity becomes:
    \[
    O\left(n \cdot \frac{4^k}{\sqrt{k}} \cdot k\right) = O\left(n \cdot 4^k \cdot \sqrt{k}\right).
    \]
    
After the DP is constructed, Algorithm~\ref{alg:sample_dp} generates samples from \(\hat{\P}_{\alpha,\beta}^{(k)}\) in time \(O(nk)\).
Using Lemma~\ref{lm:tv_distance}, to achieve $\epsilon$ total variation distance with true Mallows we need to set $k=\frac{\log(n/\epsilon)}{\log(1/C(\alpha,\beta))}$ where $C(\alpha,\beta)<\frac{2exp(-2\beta)}{1+\exp(-2\beta)}$ is given by Lemma~\ref{lm:Pij_decay}. So the runtime of computing permanent and preprocessing for sampling is  with $\epsilon$ error is $O\Big(n^{1-\frac{2}{\log(C(\alpha,\beta))}}\epsilon^{\frac{2}{\log(C(\alpha,\beta))}}\log(n/\epsilon)\Big)$
\end{proof}

\subsection{Pseudo-code: Sampling From Mallows Model}

\begin{algorithm}[ht!]
\caption{Build DP Table for Sampling}
\label{alg:build_tag}
\SetKwInOut{Input}{Input}
\SetKwInOut{Output}{Output}

\Input{$n,k \in \mathbb{N}$; Truncated matrix $A_n^{(k)}$ with bandwidth $2k$.}
\Output{DP States}

\BlankLine
$\mathit{DP} \gets$ array of dictionaries indexed by $\ell = -1, 0, \dots, n$\;
Define $\texttt{init\_state} \gets \underbrace{11\cdots1}_{k}\underbrace{00\cdots0}_{k}$ \;
$\mathit{DP}[-1][\texttt{init\_state}] \gets \varnothing$
\tcp*{Initialization layer with one node.}
\BlankLine
\For{$\ell \gets 0$ \KwTo $n$}{
    \ForEach{\textbf{state} $s_{\text{old}}$ in $\mathit{DP}[\ell-1]$}{
    $s_{\text{shift}} \gets s_{\text{old}}[1:] + \texttt{0}$
\tcp*{Remove leftmost bit and append a `0'.}\;
        \For{$i \gets 0$ \KwTo $2k-1$}{
            \If{$s_{\text{shift}}[i] = \texttt{0}$}{
                $s_{\text{new}} \gets \text{FlipBit}(s_{\text{shift}}, i)$\;
                $j \gets \ell - (i - k)$ 
                \tcp*{Column $j$ assigned to row $\ell$.}
                $w_{\text{edge}} \gets A_n^{(k)}[\ell, j]$ 
                \tcp*{Edge weight.}
                \If{$s_{\text{new}} \notin \mathit{DP}[\ell]$}{
                    $\mathit{DP}[\ell][s_{\text{new}}] \gets \varnothing$\;
                }
                Append $(s_{\text{new}}, w_{\text{edge}})$ to $\mathit{DP}[\ell-1][s_{\text{old}}]$\;
            }
        }
    }
}

\Return $\mathit{DP}$
\end{algorithm}

\begin{algorithm}[ht!]
\caption{Sample a Permutation from the DP Table}
\label{alg:sample_dp}
\SetKwInOut{Input}{Input}
\SetKwInOut{Output}{Output}

\Input{
       DP table $\mathrm{DP}[\ell][s]$ containing cumulative weights.}
\Output{Sampled permutation $\pi \in S_n$.}

\BlankLine
Initialize $\pi[1..n]$ as an empty array\;
$s_{\text{current}} \gets \underbrace{11\cdots1}_{k}\underbrace{00\cdots0}_{k}$ 
\tcp*{Start from final state at layer $n$.}

\For{$\ell \gets n$ \KwTo $1$}{
    $\mathit{predList} \gets \varnothing$ 
    \tcp*{Collect predecessors of $s_{\text{current}}$.}
    \ForEach{$(s_{\text{old}}, w_{\text{edge}})$ in $\mathit{DP}[\ell-1][\cdot]$}{
        \If{$(s_{\text{old}} \to s_{\text{current}})$ is valid}{
            Let $j \gets$ column assigned in this transition\;
            $\mathit{score} \gets \mathrm{DP}[\ell-1][s_{\text{old}}] \times w_{\text{edge}}$\;
            Append $(s_{\text{old}}, j, \mathit{score})$ to $\mathit{predList}$\;
        }
    }
    Normalize scores in $\mathit{predList}$ to form a probability distribution\;
    Randomly select $(s_{\text{old}}, j, \mathit{score})$ w.r.t.\ these probabilities\;
    $\pi[\ell] \gets j$ 
    \tcp*{Assign column $j$ to row $\ell$.}
    $s_{\text{current}} \gets s_{\text{old}}$\;
}

\Return $\pi$
\end{algorithm}

\pagebreak
\section{Additional Implementation Details}\label{exp-app}

\paragraph{Evaluation metrics.}
We evaluate predictive accuracy using the following metrics, comparing the test ranking $\pi$ and the predicted ranking $\sigma$:

\begin{enumerate}
    \item \textbf{Top-1/Top-5 Hit Rate:} Probability the top-ranked item in $\pi$ appears in the top-1 or top-5 positions of $\sigma$:
    \begin{equation}
        \text{Top-$k$ Hit rate} = \Pr\big(\pi(1)\in\{\sigma(1),\cdots, \sigma(k)\}\big).
    \end{equation}
    
    \item \textbf{Spearman's $\rho$:} Correlation based on squared rank differences:
    \[
    \rho = 1 - \frac{6 \sum_{i=1}^{n}( \pi(i)-\sigma(i) )^2}{n(n^2 - 1)}.
    \]
    
    \item \textbf{Kendall's $\tau$:} Correlation based on pairwise agreements as defined by $d_\tau$ in Section~\ref{sec:empiric}.
    
    \item \textbf{Hamming Distance:} Fraction of positions with differing ranks:
    \[
    H = \frac{1}{n}\sum_{i=1}^{n}\mathbf{1}\{\pi(i)\neq \sigma(i)\}.
    \]
    
    \item \textbf{Pairwise Accuracy:} Fraction of item pairs with matching relative orders:
    \[
    \text{Pairwise Accuracy} = \frac{\sum_{i<j}\mathbf{1}\{(\pi(i)-\pi(j))(\sigma(i)-\sigma(j))>0\}}{\binom{n}{2}}.
    \]
\end{enumerate}
In practice, all evaluation metrics are computed using Monte Carlo methods. We generate 100 predicted ranking samples from each trained model and estimate the metrics empirically. These empirical estimates approximate the true population-level metrics.


\paragraph{Baseline Implementation Details}
As mentioned in~\ref{sec:experimetins_MLE}, we compare against the Plackett-Luce (PL) and Mallow's $\tau$ model.

We fit the PL model via maximum likelihood estimation:
\begin{equation}
\label{eq:pl_loglik}
\mathcal{L}(\theta; \pi^{(1)},\ldots,\pi^{(m)}) = \sum_{\ell=1}^{m} \sum_{i=1}^{n} \left[ \theta_{\pi^{(\ell)}(i)} - \log \left( \sum_{j=i}^{n} \exp(\theta_{\pi^{(\ell)}(j)}) \right) \right].
\end{equation}
We optimize the negative log-likelihood using the L-BFGS algorithm \cite{liu1989limited}.  Additionally, we provide a fast sampler for the PL model using the Gumbel--Max \citep{yellott1977relationship, gumbelmax} to generate synthetic permutations:
\begin{equation}
\label{eq:gumbel}
\pi = \arg sort\left(-(\theta + \epsilon)\right), \quad \epsilon \sim \mathrm{Gumbel}(0,1)^n,
\end{equation}
where sorting is done in descending order to reflect best-to-worst preferences.

To fit the Mallows $\tau$ model, note that its  normalizer has a closed form \cite{fligner1986distance}:
\begin{equation*}
\label{eq:z_theta}
Z(\beta) = \prod_{j=1}^{n} \frac{1 - e^{-j \beta}}{1 - e^{-\beta}}.
\end{equation*}
Again, we do maximum likelihood estimation where the log-likelihood of the Mallows $\tau$ model is:
\begin{equation*}
\label{eq:mallows_loglik}
\mathcal{L}(\sigma, \beta; \pi^{(1)},\ldots,\pi^{(m)}) = -\beta \sum_{\ell=1}^{m} d_\tau(\pi^{(\ell)}, \sigma) - m \log Z(\beta).
\end{equation*}

We perform maximum likelihood estimation by:
\begin{itemize}
    \item Setting the central ranking \(\sigma_0\) to the Borda count aggregation over training rankings.
    \item Optimizing \(\beta\) using numerical minimization of the negative log-likelihood.
\end{itemize}
To generate samples from the Mallows--\(\tau\) model, we use the exponential insertion algorithm by \citep{lu2014effective}. At each step, an item from \(\sigma_0\) is inserted into a position within the partial permutation with probability proportional to \(\exp(-\beta \cdot \text{reverse\_index})\), favoring positions near the end. This yields efficient forward sampling in \(O(n^2)\) time.



\end{document}